\documentclass{article}

\usepackage{amsmath}
\usepackage{amssymb}
\usepackage{mathtools}
\usepackage{amsthm}
\usepackage{makecell}

\usepackage{microtype}
\usepackage{graphicx}
\usepackage{subcaption}
\usepackage{booktabs} % for professional tables
\usepackage{amsfonts}

\usepackage{newtxmath}
\usepackage{booktabs}
\usepackage{float}
\usepackage{hyperref}
\usepackage{graphicx}
\newcommand{\sfsame}[1]{\scalebox{0.92}{\textsf{#1}}}
\usepackage{enumerate}

\usepackage[accepted]{icml2026}

\usepackage[capitalize,noabbrev]{cleveref}

\theoremstyle{plain}
\newtheorem{theorem}{Theorem}[section]

\theoremstyle{definition}

\newtheorem{assumption}[theorem]{Assumption}
\theoremstyle{remark}
\newtheorem{remark}[theorem]{Remark}

\newcommand{\reals}{{\mbox{\bf R}}}

\renewcommand{\epsilon}{\varepsilon}

\newcommand{\mnorm}[1]{{\vert\kern-0.3ex\vert\kern-0.3ex\vert #1
			\vert\kern-0.3ex\vert\kern-0.3ex\vert}}

\newcommand{\E}{{\mbox{\bf E}}}

\newcommand{\prob}{{\mathbf{P}}}

\newcommand{\clip}{\mathbf{clip}}

\newcommand{\braket}[1]{\langle #1 \rangle}

\newcommand{\cf}{{\it cf.}}
\newcommand{\eg}{{\it e.g.}}
\newcommand{\ie}{{\it i.e.}}
\newcommand{\etc}{{\it etc}}

{\catcode`p =12 \catcode`t =12 \gdef\eeaa#1pt{#1}}      % Get slantfactor
\def\accentadjtext#1{\setbox0\hbox{$#1$}\kern   % Convert it with height
	\expandafter\eeaa\the\fontdimen1\textfont1 \ht0 }
\def\accentadjscript#1{\setbox0\hbox{$#1$}\kern % Convert it with height
	\expandafter\eeaa\the\fontdimen1\scriptfont1 \ht0 }
\def\accentadjscriptscript#1{\setbox0\hbox{$#1$}\kern   % Convert it with height
	\expandafter\eeaa\the\fontdimen1\scriptscriptfont1 \ht0 }
\def\accentadjtextback#1{\setbox0\hbox{$#1$}\kern       % Convert it with height
	-\expandafter\eeaa\the\fontdimen1\textfont1 \ht0 }
\def\accentadjscriptback#1{\setbox0\hbox{$#1$}\kern     % Convert it with height
	-\expandafter\eeaa\the\fontdimen1\scriptfont1 \ht0 }
\def\accentadjscriptscriptback#1{\setbox0\hbox{$#1$}\kern % Convert it with height
	-\expandafter\eeaa\the\fontdimen1\scriptscriptfont1 \ht0 }

\makeatletter
\def\mdots@{\mathinner.\nonscript\!.%
	\ifx\next,.\else\ifx\next;.\else\ifx\next..\else
				\nonscript\!\mathinner.\fi\fi\fi}
\let\ldots\mdots@
\let\cdots\mdots@
\let\dotso\mdots@
\let\dotsb\mdots@
\let\dotsm\mdots@
\let\dotsc\mdots@
\def\vdots{\vbox{\baselineskip2.8\p@ \lineskiplimit\z@
		\kern6\p@\hbox{.}\hbox{.}\hbox{.}\kern3\p@}}
\def\ddots{\mathinner{\mkern1mu\raise8.6\p@\vbox{\kern7\p@\hbox{.}}%
		\raise5.8\p@\hbox{.}\raise3\p@\hbox{.}\mkern1mu}}
\makeatother

\RequirePackage{amsthm}
\newtheoremstyle{descriptive}%
{\topsep}   %{\medskipamount}          % Space above
{\topsep}   %  {\medskipamount}          % Space below
{\rmfamily} % Body font
{}          % Indent
{\bfseries} % Head font
{.}         % Punctuation after thm head
{ }         % Space after thm head
{}          % Thm head spec(?)
\newtheoremstyle{propositional}%
{\topsep}   %  {\medskipamount}          % Space above
{\topsep}   %  {\medskipamount}          % Space below
{\itshape}  % Body font
{}          % Indent
{\bfseries} % Head font
{.}         % Punctuation after thm head
{ }         % Space after thm head
{}          % Thm head spec(?)
\theoremstyle{propositional}
\newtheorem{thm}{Theorem}[section]

\newtheorem{lem}[thm]{Lemma}

\theoremstyle{descriptive}
\newtheorem{deff}[thm]{Definition}
\usepackage[textsize=tiny]{todonotes}
\icmltitlerunning{Clipping Makes Asynchronous SGD Robust to Stragglers}

\begin{document}

\twocolumn[
	\icmltitle{Clipping Makes Distributed and Federated Asynchronous\\ SGD Robust to Stragglers}

	% It is OKAY to include author information, even for blind submissions: the
	% style file will automatically remove it for you unless you've provided
	% the [accepted] option to the icml2026 package.

	% List of affiliations: The first argument should be a (short) identifier you
	% will use later to specify author affiliations Academic affiliations
	% should list Department, University, City, Region, Country Industry
	% affiliations should list Company, City, Region, Country

	% You can specify symbols, otherwise they are numbered in order. Ideally, you
	% should not use this facility. Affiliations will be numbered in order of
	% appearance and this is the preferred way.
	\icmlsetsymbol{equal}{*}

	\begin{icmlauthorlist}
		\icmlauthor{Samuel Erickson}{yyy}
		\icmlauthor{Mikael Johansson}{yyy}
	\end{icmlauthorlist}

	\icmlaffiliation{yyy}{School of EECS, KTH Royal Institute of Technology, Stockholm, Sweden}

	\icmlcorrespondingauthor{Samuel Erickson}{samuelea@kth.se}

	% You may provide any keywords that you find helpful for describing your
	% paper; these are used to populate the "keywords" metadata in the PDF but
	% will not be shown in the document
	\icmlkeywords{Machine Learning, ICML}

	\vskip 0.3in
]

% this must go after the closing bracket ] following \twocolumn[ ...

% This command actually creates the footnote in the first column listing the
% affiliations and the copyright notice. The command takes one argument, which
% is text to display at the start of the footnote. The \icmlEqualContribution
% command is standard text for equal contribution. Remove it (just {}) if you
% do not need this facility.

% Use ONE of the following lines. DO NOT remove the command.
% If you have no special notice, KEEP empty braces:
\printAffiliationsAndNotice{}  % no special notice (required even if empty)
% Or, if applicable, use the standard equal contribution text:
% \printAffiliationsAndNotice{\icmlEqualContribution}

\begin{abstract}
    In modern machine learning, parallelization of training is an important
    strategy for increasing scale. Asynchronous stochastic gradient descent
    (ASGD), which maximizes the utilization of available hardware by avoiding
    waiting for slow workers. However, with constant step sizes, the
    convergence of ASGD is nonetheless affected negatively by slow workers due
    to large delays in updates. At the same time, it has been empirically
    observed in asynchronous training of deep learning models that gradient
    clipping ``stabilizes'' training. In this work, we provide a theoretical
    justification for this behavior, as we show that clipping removes the
    dependence of the maximum delay in the oracle complexity. We employ a
    sub-Weibull model of gradient noise which generalizes sub-Gaussian and
    sub-exponential distributions to more heavy-tailed distributions, motivated
    by empirical observations in deep learning. We show convergence in
    expectation, and the first time in asynchronous optimization, convergence
    with high probability.
\end{abstract}

\section{Introduction}

In recent years, parallelism has become the primary strategy to accommodate the
increasing scale of machine learning models and the datasets used to train
them. The classical parallel distributed optimization algorithm for general
machine learning is Minibatch SGD. However, synchronization in Minibatch SGD
means model updates cannot proceed until all workers have finished, leading to
faster workers idling. This means that iterations are only as fast as the
\emph{slowest} worker. If computation times are uniform across workers, this is
not a problem, but in many real-world settings this is far from being the case.
For instance, these straggler workers are present when there is heterogeneous
hardware or network latency. In some settings, workers may even drop out
entirely due to network issues.

To increase worker utilization and decrease idling, it is natural to consider
removing locking, resulting in asynchronous SGD (ASGD) \citep{Nedic01,
Tsitsiklis03, Feyzmahdavian16, Nguyen18}. Introducing asynchrony can
substantially increase gradient throughput compared to Minibatch SGD, but it
does introduce a new challenge: stale gradients. That is, instead of slowing
down per-iteration run-time, stragglers slow down convergence due to the noise
in their updates that arise from large delays in gradient computations. Both
empirically and theoretically, the convergence of ``vanilla'' ASGD with
constant step size is slowed down by a large maximum delay \citep{Koloskova22,
Wu22}. For this reason, several works have proposed variants of ASGD where the
contributions from workers with large delays are de-emphasized, to obtain
convergence rates that are independent of the maximum delay. However, it was
proved by \citet{Mishchenko22} and \citet{Koloskova22} that vanilla ASGD is in
fact independent of the maximum delay when the gradients are globally bounded.
This naturally raises the following question: \begin{center} \it Does gradient
clipping remove the effect of the maximum delay, making Clipped ASGD robust to
stragglers? \end{center} If clipping can be shown to provide robustness against
the effect of stragglers, it may explain some previously unexplained empirical
behavior. \citet{Chen17} found that in training large deep learning models,
gradient clipping was necessary to ``stabilize'' asynchronous training, but was
not necessary for synchronous training. For heterogeneous optimization (\eg,
federated learning) this may also be particularly significant. Since
delay-adaptive strategies are biased toward faster workers, they do not
converge to a stationary point in the heterogeneous data case. Can we use
clipping to make training robust to stragglers without introducing a bias?

For these reasons, the focus of this work is to provide convergence guarantees
for Clipped ASGD that are robust to stragglers. Moreover, in the asynchronous
setting, high probability guarantees are especially interesting, as guarantees
in expectation do not capture the behavior of a single or few runs. Doing
several runs is of course antithetical to the point of introducing asynchrony,
which is to more effectively utilize available hardware. Despite this, there
are to the best of our knowledge no prior results showing high probability
convergence under asynchrony. Therefore, we focus on providing guarantees with
high probability in addition to guarantees in expectation.

\paragraph{Contributions.} The main contributions of this work are two-fold
and are as follows:
\begin{itemize}
    \item We show that Clipped ASGD is robust to stragglers under a gradient
        noise model which allows us to model the heavy tails seen in deep
        learning. Specifically, we obtain rates that do not depend on the
        maximum delay in both the \emph{homogeneous} and \emph{heterogeneous}
        cases. For the heterogeneous case, this is to the best of our knowledge
        the first asynchronous optimization algorithm that is independent of
        the maximum delay, suggesting that clipping is beneficial in
        asynchronous federated learning where severe stragglers are often
        present.

    \item We show that Clipped ASGD converges with high probability with a
        polylogarithmic dependence on the failure probability, where the degree
        depends on the tail parameter of the gradient noise. This is to the
        best of our knowledge the first result showing high probability
        convergence of an asynchronous optimization algorithm.

\end{itemize}

% In addition to these main theoretical contributions, we conduct experiments on
% asynchronous training of machine learning models. Comparing with Vanilla and
% Delay-adaptive ASGD, these experiments show favorable performance
% of Clipped ASGD.

\section{Related work}

\paragraph{Asynchronous optimization.} Asynchronous optimization for machine
learning has gained much interest since the works of \citet{Agarwal11},
\citet{Recht11} and \citet{Dean12} due to the increasing size of models and the
datasets they are trained on. However, asynchrony in optimization is not a new
idea \citep[\cf][]{Bertsekas89}. Asynchronous gradient descent and SGD date
back to at least the works of \citet{Nedic01} and \citet{Tsitsiklis03},
respectively.

\citet{Feyzmahdavian16} proved an oracle complexity of $O(\sigma^2 / \epsilon^2
+ \tau_\text{max}^2 / \epsilon)$ for smooth convex objectives, where
$\tau_\text{max}$ is the maximum delay. \citet{Mania17} later introduced perturbed
iterate analysis for studying the convergence of asynchronous stochastic
optimization, which \citet{Stich20} adopted in order to improve the complexity.
They showed a $O(\sigma^2 / \epsilon^2 + \tau_\text{max} / \epsilon)$ complexity
for smooth convex objectives, and a $O(\sigma^2 / \epsilon^4 + \tau_\text{max} /
\epsilon^2)$ complexity for smooth non-convex objectives. Similarly,
\citet{Koloskova22} used perturbed iterate analysis but on another virtual
sequence to further improve the complexity. They show that ASGD with constant
step size achieves $O(\sigma^2 / \epsilon^4 + \sqrt{\tau_\text{max} \tau_C} /
\epsilon^2)$ complexity in the smooth non-convex case, where $\tau_C$ is the
number of active workers, \ie, the concurrency.

To mitigate the adverse effects of stragglers, several works have explored
delay-adaptive optimization algorithms \citep{McMahan14, Sra16, Zhang16,
Zheng17, Hannah18, Cohen21, Mishchenko22, Koloskova22, Wu22}. Picky SGD,
proposed by \citet{Cohen21}, was the first ASGD variant to achieve the
complexity $O(\sigma^2 / \epsilon^4 + \tau_C / \epsilon^2)$, completely
removing the dependence on the maximum delay. They achieved this by discarding
gradients that are excessively stale. \citet{Mishchenko22, Koloskova22, Wu22}
later showed that it is also possible to achieve this by adapting the step size
online using the realized delays. Extending the theory to incorporate time
complexity, \cite{Maranjyan25} showed that by dropping gradients that exceed a
certain delay, it is possible to achieve the optimal time complexity.

Another recent development is asynchronous federated learning (FL). Due to the
heterogeneity in hardware capabilities, network latencies, and size of local
datasets among workers, severe stragglers are common in cross-device FL, making
asynchrony very attractive. It has been shown in large production FL
deployments that asynchronous training can lead to substantial speed-ups and
reductions in communication overhead \citep{Huba22}. An early asynchronous FL
method is FedAsync \citep{Xie20}, which was shown to outperform synchronous FL
when the maximum staleness was small. \citet{Koloskova22} also studied a
federated variant of ASGD, showing a similar oracle complexity as vanilla ASGD
in the homogeneous case. In order to make asynchronous training compatible with
privacy mechanisms, \citet{Nguyen22} propose FedBuff which uses buffered
aggregation. \citet{Wang23} take a control variate approach with their method
CA$^2$FL to reduce the effect of data heterogeneity. Recently,
\cite{Maranjyan26} proposed Ringleader ASGD, which achieves the optimal time
complexity under data heterogeneity.

In Table~\ref{tab:complexity}, we summarize the oracle complexities of related
works and ours. For current surveys on asynchronous and parallel optimization,
see \citet{Ben19} and \citet{Feyzmahdavian23}.

\begin{table*}[ht]
	\centering
	\caption{
		Oracle complexities (up to polylogarithmic factors) for homogeneous and
        heterogeneous smooth non-convex optimization. Here $\tau_C$ is the
        concurrency and $\tau_\text{max}$ is the maximum delay, which is always
        larger than $\tau_C$.
	}
    \label{tab:complexity}
	\begin{tabular}{ccc}
		\toprule
		Algorithm & Homogeneous & Heterogeneous \\
		\midrule
		% Minibatch SGD &  $\frac{\tau_\text{max}}{n} \left( \frac{\sigma^2}{\epsilon^4} + \frac{n}{\epsilon^2} \right)$ \\
		\makecell{Vanilla ASGD                  \\ \citep{Koloskova22}}  & $\frac{\sigma^2}{\epsilon^4} + \frac{\sqrt{\tau_\text{max} \tau_C}}{\epsilon^2}$ & $\frac{\sigma^2 + \zeta^2}{\epsilon^4} + \frac{\zeta \tau_C}{\epsilon^3} + \frac{\sqrt{\tau_\text{max} \tau_C}}{\epsilon^2}$ \\
		\midrule
		\makecell{Delay-adaptive ASGD           \\ (\citealt{Cohen21};\\ \citealt{Koloskova22};\\ \citealt{Mishchenko22})} & $\frac{\sigma^2}{\epsilon^4} + \frac{\tau_C}{\epsilon^2}$ & N/A \\
		% Rennala SGD & $\min\limits_{m = 1, \dots, n} \left( \sum\limits_{i=1}^{m} \frac{1}{\tau_i} \right)^{-1} \left( \frac{\sigma^2}{\epsilon^4} + \frac{m}{\epsilon^2} \right)$ \\
		% Lower bound & $\min\limits_{m = 1, \dots, n} \left( \sum\limits_{i=1}^{m} \frac{1}{\tau_i} \right)^{-1} \left( \frac{\sigma^2}{\epsilon^4} + \frac{m}{\epsilon^2} \right)$ & N/A \\
		\midrule
		\makecell{FedBuff                       \\ (\citealt{Nguyen22};\\ \citealt{Wang23})}       & N/A & $\frac{\sigma^2 + \zeta^2}{\epsilon^4} + \frac{\tau_\text{max} \tau_C}{\epsilon^2}$ \\
		\midrule
		% \makecell{CA$^2$FL                      \\ \citep{Wang23}} & N/A & $\frac{\sigma^2}{\epsilon^4} + \frac{\tau_\text{max}^2 + \sigma^2 (\tau_\text{max} + s_\text{max})}{\epsilon^2}$ \\
		% \midrule
		\makecell{Clipped ASGD                  \\ (This work)}  & $\frac{\sigma^2}{\epsilon^4} + \frac{\sigma \tau_C}{\epsilon^3} + \frac{\tau_C}{\epsilon^2}$ & $\frac{\sigma^2 + \zeta^2}{\epsilon^4} + \frac{(\sigma + \zeta) \tau_C}{\epsilon^3} + \frac{\tau_C}{\epsilon^2} $ \\
		\bottomrule
	\end{tabular}
\end{table*}

\paragraph{Gradient clipping.} Gradient clipping is a widely used heuristic for
dealing with instability in optimization, with \citet{Alber98} being an early
work making use of this technique. \citet{Mai21} showed that gradient clipping
can significantly improve the stability of SGD for non-smooth convex functions
with rapidly growing sub-gradients. For non-convex learning, \citet{Mikolov12}
and \citet{Pascanu13} proposed gradient clipping for dealing with the so-called
exploding gradient problem. Later, \citet{Zhang20a} provided a theoretical
justification for using clipping in this context. By generalizing the standard
smoothness assumption to $(L_0, L_1)$-smoothness, they showed that gradient
clipping may enable the use of significantly larger step sizes.
\citet{Zhang20b} then improved upon the dependence on problem-specific
parameters and allowed for momentum. However, \citet{Zhang20b, Zhang20a} both
used the strong uniformly bounded gradient noise model. Relaxing to the
standard bounded variance model, \citet{Koloskova23} showed that Clipped SGD
may still enjoy large step sizes, but has a relatively large \emph{irreducible}
error term.

Another important role that gradient clipping plays is in dealing with
heavy-tailed gradient noise. Several works have showed that the gradient noise
in the training of deep learning models have much heavier tails than
sub-Gaussian distributions model \citep{Simsekli19, Panigrahi19,
Gurbuzbalaban21}. As a response, subsequent works have concerned SGD and its
variants under different models of heavy-tailed noise. 

\citet{Zhang20d} showed that Clipped SGD is convergent in expectation for noise
with bounded $q$-moment for $q \in (1,2]$, whereas vanilla SGD can diverge for
$q < 2$. Later, \citet{Cutkosky21} and \citet{Nguyen23} showed high probability
convergence of Clipped SGD under the same noise model. In addition,
\citet{Cutkosky21} showed that SGD fails to attain a logarithmic dependence on
the failure probability even in the bounded variance setting $q = 2$. More
recently, \citet{Hubler25} connected gradient clipping to normalization and
established improved, parameter-free convergence guarantees for non-convex
optimization under heavy-tailed noise with only bounded $p$-th moments. They
further proved tight sample complexity bounds and high-probability convergence.
Taking another approach to modeling heavy tails, \citet{Li22} and
\citet{Madden24} adopted a sub-Weibull \citep{Vladimirova20} noise assumption,
and showed high probability convergence of Clipped SGD. 

In privacy-preserving machine learning, clipping also plays a structural role
rather than a purely optimization-oriented one. Early work on differentially
private empirical risk minimization and stochastic optimization explicitly
relies on globally bounded gradients to calibrate additive noise mechanisms
\citep{Bassily14, Dwork14}, and this requirement was operationalized in deep
learning through gradient clipping in differentially private SGD
\citep{Abadi16}. \citet{Chen20} later studied the bias that clipping introduces
in differentially private SGD through a geometric lens. Extensions to federated
learning further have emphasized clipping as a prerequisite for aggregating
privatized updates across clients \citep{McMahan18}.

\section{Problem and computational setup}

We consider the unconstrained optimization problem
\begin{equation}
    \label{eq:opt-prob}
	\begin{array}{ll}
		\mbox{minimize} & f(x) = \sum_{i=1}^{n} \E_{\xi \sim \mathcal{D}_i} [F_i(x, \xi)]
	\end{array}
\end{equation}
where $F_i \colon \reals^d \times \mathcal{S} \to \reals$ for a sample space
$\mathcal{S}$. We define $f_i(x) = \E_{\xi \sim \mathcal{D}_i} [F_i(x, \xi)]$.
We denote the standard inner product $\braket{\cdot,\cdot}$ and the Euclidean
norm $\|\cdot\|$. We make the standard smoothness assumption on the objective
$f$:
\begin{assumption}\label{asmp:smooth}
	The objective function $f$ is $L$-smooth, meaning $\|\nabla f(x) - \nabla
		f(y)\| \leq L \|x - y\|$ for every $x,y \in \reals^d$. Furthermore, $f$ is
	bounded from below by $f^\star > -\infty$.
\end{assumption}
The \emph{clipping operator} $\clip_c\colon \reals^d \to \reals^d$ with
\emph{clipping radius} $c>0$ is defined by 
\[
    \clip_c(x) = \min \left\{ 1, \frac{c}{\|x\|} \right\} x
\]
for $x \in \reals^d \setminus \{0\}$ and $\clip_c(0) = 0$. Note that clipping
$x$ is equivalent to projecting $x$ onto the ball $\{y\colon \|y\| \leq
c\}$. 

In this setup, we assume $n$ workers are available for parallel computation,
with each worker $i$ serving as a stochastic oracle that returns $g_t^i(x_t) =
\clip_c(\nabla F_i(x_t,\xi_t^i))$ when queried with $x_t$, where $\xi_t^i \sim
\mathcal{D}_i$. In a real-world system, these workers may be GPUs in a cluster,
cores in a CPU, mobile devices, \etc.

% Furthermore, we assume a \emph{fixed computation model}, where worker $i$ takes
% at most $\tau_i$ seconds to perform an oracle call. Without loss of generality,
% assume that the workers are ordered by computation speed, \ie, $\tau_1 \leq
% 	\tau_2 \leq \dots \leq \tau_n$. For instance, under this model Minibatch SGD
% reaches $\epsilon$-stationarity $\min_t \E \|\nabla f(x_t)\| \leq \epsilon$
% within
% \[
% 	O \left( \frac{\tau_\text{max}}{n} \left( \frac{\sigma^2}{\epsilon^4} + \frac{n}{\epsilon^2} \right) \right)
% \]
% seconds \citep{Khaled23}, where $\sigma^2$ is the variance of the stochastic
% gradients.
% \renewcommand{\arraystretch}{1.5}

% \subsection{Gradient noise model}

Crucial to most analyses of gradient clipping is controlling the error of the
clipped stochastic gradient when the full gradient is small (relative to the
clipping threshold). This generally involves controlling the tail of the
gradient noise, \ie, bounding the probability $\prob (\|\nabla F(x,\xi) -
\nabla f(x)\| > \alpha)$ with a function of $\alpha \geq 0$. The works of
\citet{Zhang20b, Zhang20a} consider the uniformly bounded noise model, when the
tail probability is zero for $\alpha \geq \sigma$ for some parameter $\sigma >
0$. While this technically holds true in typical empirical risk minimization,
it requires a very large value of $\sigma$ in comparison to the ordinary
bounded variance model. \citet{Zhang20a} also note that the noise model can be
relaxed to sub-Gaussian noise, however, this assumption is still strong. Recent
works in deep learning have showed that the stochastic gradients typically
exhibit more heavy-tailed noise than sub-Gaussian random variables models
\citep{Simsekli19, Panigrahi19, Gurbuzbalaban21}. \citet{Koloskova23} consider
the ordinary bounded variance model and use Markov's inequality to bound the
tail. However, Markov's inequality gives a pessimistic bound, leading to a
relatively large error term even for large choices of clipping radius.
\citeauthor{Koloskova23} show that this error is irreducible in worst-case
analysis. Additionally, the convergence results in these works are in
expectation, which do not characterize the behavior of Clipped SGD in a single
run. This motivates the need for a model of gradient noise that gives us better
tools to bound the tails, but still allows us to model the heavy-tails seen in
deep learning.

\begin{deff}\label{def:sub-weibull}
	A random variable $X\colon \mathcal{S} \to \reals$ is called
	\emph{sub-Weibull} if there exists positive quantities $\sigma$ and
	$\theta$ such that
	\[
		\E [\exp((|X| / \sigma)^\frac{1}{\theta})] \leq 2.
	\]
	We denote $X \sim \sfsame{subW}(\theta, \sigma)$.
\end{deff}

The class of sub-Weibull random variables is proposed by \citet{Kuchibhotla22}
and \citet{Vladimirova20}. \citeauthor{Kuchibhotla22} analyze linear regression
and covariance estimation under this model, while \citeauthor{Vladimirova19}
analyze the induced prior distributions in Bayesian neural networks. Note that
sub-Weibull random variables generalize sub-Gaussian and sub-exponential random
variables, which are recovered with $\theta=1/2$ and $\theta=1$, respectively.

\begin{assumption}\label{asmp:sub-weibull}
	The stochastic gradients are unbiased and have sub-Weibull noise, meaning
	$\E_{\xi \sim \mathcal{D}_i} [\nabla F_i(x,\xi)] = \nabla f_i(x)$ and
	\[
		\|\nabla F_i(x,\xi) - \nabla f_i(x)\| \sim \sfsame{subW}(\theta, \sigma)
	\]
	for every $x \in \reals^d$.
\end{assumption}

\begin{figure}[ht!]
    \centering
    \begin{subfigure}[b]{0.48\linewidth}
        \centering
        \includegraphics[width=\textwidth]{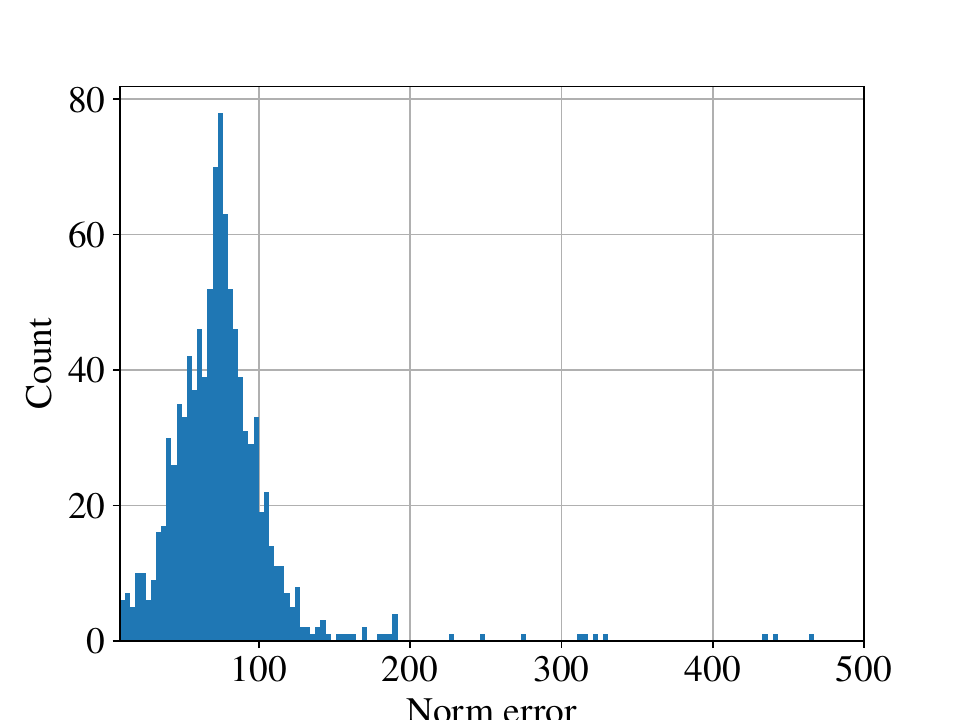}
        \caption{Real distribution}
        \label{fig:real-dist}
    \end{subfigure}
    \begin{subfigure}[b]{0.48\linewidth}
        \centering
        \includegraphics[width=\textwidth]{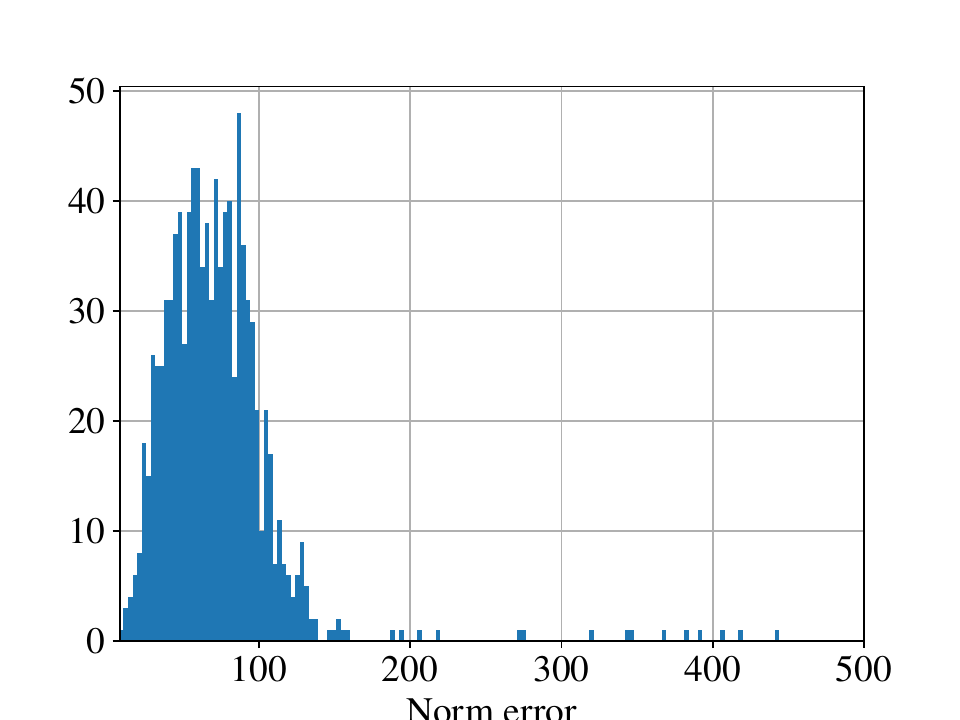}
        \caption{Sub-Weibull $\theta=2.71$.}
        \label{fig:sim-dist}
    \end{subfigure}
    \caption{
        Histograms of (a) gradient errors in training of a ResNet-18 model on
        CIFAR-10 and (b) simulated sub-Weibull distribution. The empirical
        estimate of the tail parameter is $\theta = 2.71$.
    }
    \label{fig:tail-analysis}
\end{figure}

This model of the gradient noise is used by \citet{Li22} and \cite{Madden24},
who provide high probability guarantees for serial SGD. Note that if a random
variable is sub-Weibull then it also has bounded second-moment. In particular,
if $X \sim \text{subW}(\theta, \sigma)$ then $\E|X|^2 \leq C\sigma^2$, where
$C$ only depends on $\theta$. Therefore, the quantity $\sigma^2$ is directly
comparable to the one in the ordinary bounded variance assumption. The quantity
$\theta$ on the other hand determines how heavy the tails are. This makes the
class of sub-Weibull distributions ideal for modeling gradient noise in the
training of modern machine learning models. In Figure~\ref{fig:tail-analysis},
we have plotted the empirical norm gradient errors in training of a ResNet-18
model on the CIFAR-10 dataset, as well as a simulated sub-Weibull distribution
using the estimate of the tail parameter $\theta$ obtained from the ResNet-18
training. We used the tail parameter estimate suggested by
\citet{Vladimirova20}.

\section{Homogeneous setting}

\begin{algorithm}
	\caption{Clipped ASGD (homogeneous setting)}\label{algo:asgd_hom}
	\begin{algorithmic}
        \STATE {\bfseries Input:} Initialization $x_0$, concurrency $\tau_C$,
        step size $\eta > 0$, clipping radius $c > 0$.

        \STATE A subset $\mathcal{C}_0$ of $\tau_C$ workers receive $x_0$ and
        start computing gradients

		\FOR{$t = 0, \dots, T-1$}

        \STATE Worker $i_t$ finishes computing $g_{t - \tau_t}^{i_t}(x_{t - \tau_t})$

        \STATE Server updates $x_{t+1} = x_t - \eta g_{t - \tau_t}^{i_t}(x_{t - \tau_t})$

        \STATE Server selects an inactive worker $j_t \in [n] \setminus
        \mathcal{C}_t$ and updates the active set $\mathcal{C}_{t+1} =
        (\mathcal{C}_t \setminus \{i_t\}) \cup \{j_t\}$

        \STATE Worker $j_t$ receives $x_{t+1}$ and starts computing $g_{t+1}^{j_t}(x_{t+1})$

		\ENDFOR
	\end{algorithmic}
\end{algorithm}

We begin with the homogeneous special case of problem~\eqref{eq:opt-prob},
where the objective functions $f_i$ are identical, \ie, we have $F_1 = \dots =
F_n$ and $\mathcal{D}_1 = \dots = \mathcal{D}_n$. This corresponds to the data
centralized setup, where all workers have access to the full dataset. This
setting gives us much freedom in how we utilize the workers, because we do not
have to worry about introducing optimization bias from over-relying on the fast
workers. Therefore, we consider Algorithm~\ref{algo:asgd_hom}, where workers
are at all times computing (or communicating) clipped gradients. This is the
standard ASGD algorithm considered in \eg\ \citep{Agarwal11, Feyzmahdavian16,
Stich20, Mishchenko22, Koloskova22}, differing only in the use of gradient
clipping. Following \citeauthor{Koloskova22}, we generalize to concurrency
$\tau_C \leq n$, although in the homogeneous case it is common to run with full
concurrency. For instance, the server may prioritize workers that have returned
gradients quickly in the past. Here, the server may select the next worker
$j_t$ out of the inactive workers in any way, due to the homogeneity. Under
maximum concurrency $\tau_C = n$ and the fixed computation model used by \eg\
\citep{Mishchenko22, Tyurin23}, where each worker $i$ takes $h_i$ seconds to
return a clipped gradient, Algorithm~\ref{algo:asgd_hom} takes at most
$(\sum_{i=1}^{n} \frac{1}{h_i})^{-1}$ seconds per oracle call on average.
Compare this to Minibatch SGD which takes $\frac{1}{n} \max_i h_i$ seconds per
oracle call.

Central to recent analyses of ASGD is perturbed iterate analysis, where a
virtual sequence is used to study the actual sequence $\{x_t\}_{t}$ of iterates
\citep{Mania17, Stich20, Mishchenko22, Koloskova22}. In the state-of-the-art
analyses of ASGD \citep{Mishchenko22, Koloskova22}, it is shown that with a
virtual sequence $\tilde x_t$ which evolves almost like serial SGD, the
differences $\tilde x_t - x_t$ can be expressed as a sum of at most $n$
gradient steps. It is precisely here that, in our theoretical analysis,
clipping turns out to be very beneficial for ASGD. It is in fact for the same
reason the strong assumption of globally bounded gradients is beneficial in
\citep{Mishchenko22, Koloskova22}. In particular, with initialization $x_0 =
\tilde x_0$ and step size $\eta$, we define the virtual sequence by
\begin{equation}
    \label{eq:virtual-seq}
    \begin{aligned}
        \tilde x_1 = x_0 - \eta \textstyle \sum_{i \in \mathcal{C}_0} g_{0}^i(x_0), \quad \tilde x_{t+1} = \tilde x_t - \eta g_{t}^{i_t}(x_t)
    \end{aligned}
\end{equation}
and find that the differences $\tilde x_t - x_t$ can be controlled via the
clipping radius $c$ and the step size $\eta$.

\begin{lem}\label{lem:virtual-seq}
    The sequence $\{x_t\}_t$ generated by Algorithm~\ref{algo:asgd_hom} and the
    sequence $\{\tilde x_t\}_t$ defined by \eqref{eq:virtual-seq} satisfy 
    \[
		\|\tilde x_t - x_t\| \leq \eta c \tau_C
	\]
	for all $t = 0,\dots,T-1$.
\end{lem}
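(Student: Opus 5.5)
The plan is to identify $\tilde x_t - x_t$ exactly, as $-\eta$ times a sum of \emph{pending} clipped gradients, and then bound that sum using clipping. A pending gradient is one that has been dispatched to a worker and counted in the virtual sequence, but not yet applied by the server. I read the index in \eqref{eq:virtual-seq} as labelling each dispatched gradient by the iterate at which its computation starts: $g_0^i(x_0)$ for $i\in\mathcal{C}_0$, and the gradient started at $x_{s}$ by worker $j_{s-1}$ for $s\ge 1$. Under this reading, $\tilde x_t$ is $x_0$ minus $\eta$ times the sum of every gradient dispatched at iterates $x_0,\dots,x_{t-1}$. By contrast, $x_t$ is $x_0$ minus $\eta$ times the sum of every gradient actually \emph{returned} in iterations $0,\dots,t-1$.

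I would then prove by induction on $t$ the invariant
\[
\tilde x_t - x_t = -\eta \textstyle\sum_{g\in\mathcal{P}_t} g,
\]
where $\mathcal{P}_t$ is the set of gradients dispatched at iterates $x_0,\dots,x_{t-1}$ that have not been returned before iteration $t$. The base case $t=0$ is trivial ($\tilde x_0 = x_0$, $\mathcal{P}_0=\emptyset$). For $t=1$, $\tilde x_1 - x_1$ equals $-\eta$ times the sum of the $\tau_C$ initial gradients minus the one returned by $i_0$.

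For the inductive step, going from $t$ to $t+1$ changes the two sequences as follows. The virtual sequence adds exactly the gradient dispatched at $x_t$. The real sequence subtracts exactly the gradient $g_{t-\tau_t}^{i_t}(x_{t-\tau_t})$ returned by $i_t$. That returned gradient was dispatched at some iterate $x_{t-\tau_t}$ with $t-\tau_t\le t$, so it belongs to $\mathcal{P}_t$ together with the gradient dispatched at $x_t$. This is precisely the bookkeeping that updates $\mathcal{P}_t$ to $\mathcal{P}_{t+1}$.

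Next I would bound $|\mathcal{P}_t|$. Every element of $\mathcal{P}_t$ is currently held by an active worker, and each active worker holds at most one gradient. Since the algorithm maintains $|\mathcal{C}_t| = \tau_C$ (one worker is removed and one added per iteration), we get $|\mathcal{P}_t|\le \tau_C$. In fact $|\mathcal{P}_t|\le\tau_C-1$ for $t\ge 1$, since one active worker holds the gradient started at $x_t$, which is not yet counted, but $\tau_C$ suffices. Finally, every element of $\mathcal{P}_t$ has the form $\clip_c(\cdot)$ and therefore has norm at most $c$. The triangle inequality then gives $\|\tilde x_t - x_t\|\le \eta c\,|\mathcal{P}_t|\le \eta c\tau_C$.

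The main obstacle is the indexing in \eqref{eq:virtual-seq}, where $g_t^{i_t}(x_t)$ must be read as ``the gradient whose computation starts at $x_t$'' rather than literally the one returned by $i_t$. Once this is fixed, the essential fact is that each gradient is added to the virtual sequence exactly once, is subtracted from the real sequence exactly once, and is added no later than it is subtracted. I would therefore state this correspondence carefully, as a bijection between dispatch times and return times, before running the induction; the norm bound is then immediate.
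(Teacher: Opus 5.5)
Your proposal is correct and uses the same argument as the paper: $\tilde x_t - x_t$ is $-\eta$ times the sum of the in-flight clipped gradients (dispatched and counted in $\tilde x_t$ but not yet applied to $x_t$). There are at most $\tau_C$ of these, each of norm at most $c$. Your explicit induction and your reading of $g_t^{i_t}(x_t)$ as the gradient whose computation starts at $x_t$ make rigorous the bookkeeping the paper compresses into one line, and they give the slightly sharper count $\tau_C - 1$ for $t \ge 1$.
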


Using this result, we may show that the convergence in expectation of Clipped
ASGD does not depend on the maximum delay.

\begin{thm}\label{thm:sub-weibull-hom}
    Suppose Assumptions~\ref{asmp:smooth} and \ref{asmp:sub-weibull} hold. Then
    there exists a constant step size $\eta$ and clipping radius $c$ such that
    for $\epsilon \in (0,1)$, we have $\frac{1}{T} \sum_{t=0}^{T-1} \E\|\nabla
    f(x_t)\| \leq \epsilon$ within
	\[
		\widetilde{O} \left( \frac{\sigma^2}{\epsilon^4} + \frac{\sigma \tau_C}{\epsilon^3} + \frac{\tau_C}{\epsilon^2} \right)
	\]
	iterations of Algorithm~\ref{algo:asgd_hom}.
\end{thm}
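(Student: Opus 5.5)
We follow the perturbed iterate analysis on the virtual sequence \eqref{eq:virtual-seq}, combined with a clipped-SGD descent argument. Write $g_t = g_t^{i_t}(x_t)$ for the clipped stochastic gradient evaluated at the \emph{current} iterate $x_t$. By construction, $\tilde x_{t+1} = \tilde x_t - \eta g_t$ for $t \geq 1$; the extra gradients in the first step only shift $\tilde x_1$ by at most $\eta c \tau_C$, which is absorbed into the initial gap.

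By $L$-smoothness,
\[
f(\tilde x_{t+1}) \leq f(\tilde x_t) - \eta \braket{\nabla f(\tilde x_t), g_t} + \tfrac{L\eta^2}{2} c^2 .
\]
We replace $\nabla f(\tilde x_t)$ by $\nabla f(x_t)$. The error term is
\[
\eta \,\|\nabla f(\tilde x_t) - \nabla f(x_t)\|\,\|g_t\| \leq \eta \cdot L\eta c\tau_C \cdot c = L\eta^2 c^2 \tau_C,
\]
using Lemma~\ref{lem:virtual-seq} and $\|g_t\| \leq c$. This is exactly where the maximum delay disappears: only $\tau_C$ enters, and no delayed-gradient bookkeeping is needed.

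Next we take conditional expectation given $x_t$. The random variable $\xi$ is fresh, and $x_t$ is measurable with respect to the past because the worker $i_t$ that computes $g_t$ is launched later; in the homogeneous case the identity of $i_t$ is irrelevant. It remains to lower bound $\E_t\braket{\nabla f(x_t), g_t}$. We use the standard clipped-SGD case analysis (as in \citet{Zhang20a, Koloskova23}, with sub-Weibull tails as in \citet{Li22}). Split according to whether $\|\nabla f(x_t)\| \geq c/2$ or not.

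\textbf{Large gradient.} If $\|\nabla f(x_t)\| \geq c/2$, then with probability bounded away from zero the noise is at most $c/4$. In that event $\braket{\nabla f, \clip_c(\nabla F)} \gtrsim c\|\nabla f(x_t)\|$. Handling the complementary event via tail bounds gives
\[
\E_t\braket{\nabla f(x_t), g_t} \gtrsim c\,\|\nabla f(x_t)\| - \text{(tail term)}.
\]

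\textbf{Small gradient.} If $\|\nabla f(x_t)\| < c/2$, the bias satisfies
\[
\|\E_t g_t - \nabla f(x_t)\| \leq \E\bigl[\|Z\|\mathbf{1}\{\|Z\|>c/2\}\bigr] \lesssim \sigma \exp\bigl(-(c/(2\sigma))^{1/\theta}\bigr)\,\mathrm{poly}(c/\sigma),
\]
where $Z$ is the noise. This follows from Definition~\ref{def:sub-weibull} via Markov's inequality on $\exp((|Z|/\sigma)^{1/\theta})$. Hence $\E_t\braket{\nabla f, g_t} \geq \|\nabla f\|^2 - \|\nabla f\|\delta(c)$.

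In both cases
\[
\E_t\braket{\nabla f(x_t), g_t} \geq \min\{\|\nabla f(x_t)\|^2, \tfrac{c}{4}\|\nabla f(x_t)\|\} - c\,\delta(c).
\]
Choosing $c \asymp \sigma \log^{\theta}(1/\epsilon) + \Theta(\epsilon)$ makes $\delta(c) \leq \epsilon^2/c$, so the bias contributes at most $\epsilon^2$. This gives $\widetilde O$.

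Telescoping over $t$ and dividing by $\eta T$ yields
\[
\frac1T\sum_t \E\min\{\|\nabla f(x_t)\|^2, c\|\nabla f(x_t)\|\} \lesssim \frac{\Delta}{\eta T} + L\eta c^2(1+\tau_C) + \epsilon^2,
\]
where $\Delta = f(x_0)-f^\star + L\eta^2c^2\tau_C^2$ accounts for the first virtual step. To convert the $\min$ into $\frac1T\sum\E\|\nabla f(x_t)\|$, use
\[
\|\nabla f\| \leq \sqrt{\min\{\|\nabla f\|^2, c\|\nabla f\|\}} + \tfrac{1}{c}\min\{\|\nabla f\|^2, c\|\nabla f\|\}
\]
together with Jensen's inequality.

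For the parameters, take $\eta \asymp \epsilon^2/(L c^2 \tau_C)$, with $\eta \le 1/L$-type constraints as needed. The requirement $\Delta/(\eta T)\le\epsilon^2$ then gives
\[
T \gtrsim \frac{L\Delta c^2\tau_C}{\epsilon^4}.
\]
This looks worse than claimed, so the step size must be balanced more carefully. The smoothness term is really $L\eta c\,\E\|g_t\|$-like. Without asynchrony it is $L\eta^2\E\|g_t\|^2 \lesssim L\eta^2(\sigma^2+\|\nabla f\|^2)$, which gives $\sigma^2/\epsilon^4$. The delay term $L\eta^2 c^2\tau_C$ with $c \asymp \sigma+\epsilon$ (up to logs) gives
\[
T \gtrsim \frac{\sqrt{\tau_C}\,c}{\epsilon^3}.
\]
Alternatively, keep the delay term as $L\eta^2 c\tau_C\|\E_t g_t\|$ and absorb part of it into the descent, under $\eta \lesssim 1/(L\tau_C)$. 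This produces $\tau_C/\epsilon^2$ and $\sigma\tau_C/\epsilon^3$, matching the three terms of the claimed bound.

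The main obstacle is this last accounting: obtaining the sharp $\sigma\tau_C/\epsilon^3 + \tau_C/\epsilon^2$ rather than a crude $c^2\tau_C/\epsilon^4$. It requires bounding the cross term by the descent quantity, not merely by $c^2$, and ensuring that the clipping bias $\delta(c)$ is only logarithmically costly in $c$.
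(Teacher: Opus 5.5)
Your skeleton matches the paper's: the virtual sequence with $\|\tilde x_t-x_t\|\le\eta c\tau_C$, a small/large-gradient split at $c/2$, and sub-Weibull control of the clipping bias with $c\asymp\sigma\log^\theta$. However, the one step that gives the claimed rate is not established. In your main line you bound the delay cross term pathwise, $\eta\langle\nabla f(\tilde x_t)-\nabla f(x_t),g_t\rangle\le L\eta^2c^2\tau_C$. After dividing by $\eta$ this residual is \emph{linear} in $\eta$. It therefore forces $\eta\lesssim\epsilon^2/(Lc^2\tau_C)$ and hence $T\gtrsim L\Delta c^2\tau_C/\epsilon^4\approx\sigma^2\tau_C/\epsilon^4$, no matter how the remaining terms are balanced. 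Your claim that the same term ``gives $T\gtrsim\sqrt{\tau_C}\,c/\epsilon^3$'' is therefore incorrect. Your last paragraph points to the right remedy but only asserts its outcome, and you yourself leave the sharp accounting open. As written, the proposal proves a rate with $\sigma^2\tau_C/\epsilon^4$ in place of $\sigma^2/\epsilon^4+\sigma\tau_C/\epsilon^3$.

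The missing step has two parts. First, take $\E_t$ \emph{before} bounding the cross term, so it becomes $\eta\braket{\nabla f(x_t)-\nabla f(\tilde x_t),\E_t g_t}$. This matters because the pathwise $\|g_t\|$ can be of size $c\approx\sigma$ even when $\nabla f(x_t)\approx 0$, whereas $\E_t g_t$ is close to $\nabla f(x_t)$. Second, make the residual quadratic in $\eta$. The paper does this with Young's inequality, bounding the term by $\frac{\eta L^2}{2}\|x_t-\tilde x_t\|^2+\frac{\eta}{2}\|\E_t g_t\|^2$. It then cancels the second piece against the term $-\frac{\eta}{2\alpha_t}\|\E_t g_t\|^2$ in the polarization identity
\[
-\eta\braket{\nabla f(x_t),\E_t g_t}=-\tfrac{\eta\alpha_t}{2}\|\nabla f(x_t)\|^2-\tfrac{\eta}{2\alpha_t}\|\E_t g_t\|^2+\tfrac{\eta}{2\alpha_t}\|\E_t g_t-\clip_c(\nabla f(x_t))\|^2,
\]
where $\alpha_t=\min\{1,c/\|\nabla f(x_t)\|\}\le 1$. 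This leaves $\frac{\eta^3L^2c^2\tau_C^2}{2}$ per step, i.e.\ $\eta^2L^2c^2\tau_C^2$ after normalizing.

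Your own variant also closes the gap if carried out:
\begin{itemize}
\item For $\|\nabla f(x_t)\|<c/2$, use $\|\E_t g_t\|\le\|\nabla f(x_t)\|+\delta(c)$ together with $L\eta^2c\tau_C\|\nabla f(x_t)\|\le\frac{\eta}{4}\|\nabla f(x_t)\|^2+L^2\eta^3c^2\tau_C^2$.
\item For $\|\nabla f(x_t)\|\ge c/2$, use $\|\E_t g_t\|\le c\le 2\|\nabla f(x_t)\|$ to absorb the term into the $\Omega(\eta c\|\nabla f(x_t)\|)$ descent once $\eta\lesssim 1/(L\tau_C)$.
\end{itemize}
Either way, the step size must then be $\eta\asymp\min\{1/(L\tau_C),\,\epsilon^2/(Lc^2),\,\epsilon/(Lc\tau_C)\}$ rather than $\epsilon^2/(Lc^2\tau_C)$. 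This gives $T\asymp L\Delta\,(\tau_C/\epsilon^2+c^2/\epsilon^4+c\tau_C/\epsilon^3)$, which is the three claimed terms with $c=\widetilde O(\sigma+\epsilon)$.
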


Note that eventhough Clipped ASGD differs only in the middle term compared to
the iteration complexity of Delay-adaptive ASGD, despite the algorithm not
compensating directly for delays. In fact, if either $\tau_C = O(\sigma /
\epsilon)$ or $\sigma / \epsilon = O(1)$, then
Theorem~\ref{thm:sub-weibull-hom} shows that Clipped ASGD achieves the same
rate as up to polylogarithmic factors. Clipped ASGD is also favorable to
Vanilla ASGD under globally bounded gradients $\|\nabla f(x)\| \leq G$ or
$G$-Lipschitz loss, which achieves the rate $O(\sigma^2/\epsilon^4 + \tau_C G /
\epsilon^3 + \tau_C / \epsilon^2)$, as $G$ will often be very large. Moreover,
clipping also simplifies tuning. With Vanilla ASGD, the step size to achieve
the rate $O(\sigma^2/\epsilon^4 + \sqrt{\tau_\text{max} \tau_C} / \epsilon^2)$
critically depends on $\tau_\text{max}$, which is generally unknowable a
priori. 

We can furthermore show convergence in high probability, which to the the best
of our knowledge is the first time in asynchronous optimization. 

\begin{thm}\label{thm:high-prob-hom}
    Suppose Assumptions~\ref{asmp:smooth} and \ref{asmp:sub-weibull} hold. Then
    there exists a constant step size $\eta$ and clipping radius $c$ such that
    for $\epsilon \in (0,1)$ and failure probability $\delta \in (0,1)$, we
    have $\prob \left( \frac{1}{T} \sum_{t=0}^{T-1} \|\nabla f(x_t)\| \leq
    \epsilon \right) \geq 1 - \delta$ within
	\[
		\widetilde{O} \left( \frac{\sigma^2 \log^{2 \theta} (1/\delta)}{\epsilon^4} + \frac{\sigma \tau_C \log^\theta (1/\delta)}{\epsilon^3} + \frac{\tau_C}{\epsilon^2} \right)% O \left( \frac{\sigma^2 \log^{2 \theta}(1/\delta \epsilon)}{\epsilon^4} + \frac{\sigma n \log^{\theta}(1/\delta \epsilon)}{\epsilon^{3}} + \frac{n}{\epsilon^2} \right)
	\]
	iterations of Algorithm~\ref{algo:asgd_hom}.
\end{thm}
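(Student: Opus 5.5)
We follow the perturbed iterate route used for Theorem~\ref{thm:sub-weibull-hom}, but replace expectations by a martingale concentration argument. First apply the descent lemma from Assumption~\ref{asmp:smooth} to the virtual sequence \eqref{eq:virtual-seq}. Write $g_t = g_t^{i_t}(x_t)$, and decompose it as $g_t = \nabla f(x_t) + b_t + \zeta_t$. Here $b_t = \E_t[g_t] - \nabla f(x_t)$ is the clipping bias and $\zeta_t = g_t - \E_t[g_t]$ is the conditionally mean-zero noise, with $\|\zeta_t\| \le 2c$. Summing over $t$ gives
\[
f(\tilde x_T) - f(\tilde x_0) \le -\eta \sum_t \braket{\nabla f(\tilde x_t), g_t} + \tfrac{L\eta^2}{2}\sum_t \|g_t\|^2 .
\]
Two terms are then routine. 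By Lemma~\ref{lem:virtual-seq} and smoothness, $\|\nabla f(\tilde x_t) - \nabla f(x_t)\| \le L\eta c\tau_C$. The quadratic term is at most $L\eta^2 c^2 T/2$ deterministically, since every clipped gradient has norm at most $c$. The initial offset $\tilde x_1 - x_1$ involves $\tau_C$ clipped steps and costs only $O(\eta c \tau_C)$ in function value.

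Next we handle the inner product $\braket{\nabla f(x_t), g_t}$ pointwise, following the standard Clipped SGD analysis under sub-Weibull noise \citep{Li22, Madden24}. There are two cases.

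\emph{Case $\|\nabla f(x_t)\| \le c/2$.} By Assumption~\ref{asmp:sub-weibull}, clipping is inactive unless the noise exceeds $c/2$. The sub-Weibull tail then gives $\|b_t\| \lesssim \sigma \exp(-(c/2\sigma)^{1/\theta})$ up to polynomial factors. This is negligible once $c \asymp \sigma \log^\theta(T/\delta)$, the choice we make.

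\emph{Case $\|\nabla f(x_t)\| > c/2$.} Here we show $\braket{\nabla f(x_t), \E_t g_t} \gtrsim c\|\nabla f(x_t)\|$ minus a small error.

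Combining the two cases gives the pointwise bound
\[
\braket{\nabla f(x_t),\E_t g_t} \ge \min\{\|\nabla f(x_t)\|^2, c\|\nabla f(x_t)\|/4\} - \text{negligible}.
\]
Finally, the function $\min\{a^2, ca/4\}$ controls $a$ up to an additive $O(c\cdot\text{small})$ via $a \le \min\{a^2,ca\}/\epsilon + \epsilon$, so it suffices to bound the average of $\|\nabla f(x_t)\|$.

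The main obstacle is the martingale term $\sum_t \braket{\nabla f(\tilde x_t), \zeta_t}$. Its increments are bounded by $2c\|\nabla f(\tilde x_t)\|$, but this bound is random and possibly large. We use a Freedman-type inequality with the self-normalizing trick.
\begin{itemize}
\item The conditional variance is at most $\E_t\|\zeta_t\|^2 \|\nabla f(\tilde x_t)\|^2 \lesssim \sigma^2 \|\nabla f(\tilde x_t)\|^2$, using that the sub-Weibull variance is $O(\sigma^2)$.
\item With probability $1-\delta$, the martingale is at most $\lambda \sum_t \|\nabla f(\tilde x_t)\|^2 \sigma^2 + \log(1/\delta)/\lambda$, for $\lambda$ scaled like $1/(c\max_t\|\nabla f\|)$.
\item We control $\max_t \|\nabla f\|$ by the deterministic bound $\|\nabla f(x_t)\| \le \|\nabla f(x_0)\| + L\eta c t$. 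Alternatively, we work with the truncated quantity $\min\{\|\nabla f\|^2, c\|\nabla f\|\}$, whose martingale increments are bounded by $O(c^2)$.
\end{itemize}
The variance-weighted portion is then absorbed into the left-hand descent term. Also, because $\zeta_t$ is built from clipped noise, an additional $\log^\theta(1/\delta)$ enters only through the choice of $c$.

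Collecting the bounds, with probability $1-\delta$ we get
\[
\tfrac1T\sum_t \|\nabla f(x_t)\| \lesssim \frac{\Delta}{\eta c T} + L\eta c\tau_C + L\eta c + \frac{c\log(1/\delta)}{T}\cdot(\ldots) + \text{bias}.
\]
Choosing $c \asymp \sigma \log^\theta(T/\delta) + \epsilon$ and $\eta c \asymp \min\{\epsilon/(L\tau_C), \epsilon^2/(L c)\}$ yields the three terms of the claimed bound:
\begin{itemize}
\item $c^2/\epsilon^4 \to \sigma^2\log^{2\theta}(1/\delta)/\epsilon^4$,
\item $c\tau_C/\epsilon^3 \to \sigma\tau_C\log^\theta(1/\delta)/\epsilon^3$,
\item $\tau_C/\epsilon^2$.
\end{itemize}
The logarithmic factors in $T$ are absorbed into $\widetilde O$.
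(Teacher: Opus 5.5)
Your skeleton is sound: perturbed iterates, deterministic control of $\|\tilde x_t-x_t\|$ and of $\frac{\eta^2L}{2}\|g_t\|^2$, and the two-case lower bound on $\braket{\nabla f(x_t),\E_t g_t}$. The gap is in the martingale step, and your final bookkeeping hides it.

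\textbf{The deterministic envelope costs a full $\log(1/\delta)$ on the leading term.} With $G=\|\nabla f(x_0)\|+L\eta c(T+\tau_C)$ you must take $\ell\asymp\eta cG$ in Freedman's inequality. After dividing by $\eta T$, the range term $\frac{\ell}{\rho}\log\frac1\delta$ is at least of order $L\eta c^2\log\frac1\delta$. This is not lower order, and it is not the term $\frac{c\log(1/\delta)}{T}(\ldots)$ in your display. It is the noise floor $L\eta c^2$ from $\frac{\eta^2L}{2}\|g_t\|^2$, multiplied by $\log\frac1\delta$. You therefore need $\eta\lesssim\epsilon^2/(Lc^2\log\frac1\delta)$, and the leading term becomes $L\Delta c^2\log(1/\delta)/\epsilon^4$. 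With your choice $c\asymp\sigma\log^\theta(T/\delta)$ this is $\sigma^2\log^{2\theta+1}(1/\delta)/\epsilon^4$, not the claimed $\sigma^2\log^{2\theta}(1/\delta)/\epsilon^4$.

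\textbf{Nothing in your scheme puts $\delta$ into $c$.} You handle both the bias and the large-gradient drift in conditional expectation, so all $\delta$-dependence comes from Freedman, not from the choice of $c$. With $c\asymp\sigma\log^\theta T$ you would get $\sigma^2\log(1/\delta)/\epsilon^4$, up to $\log T$ factors and lower-order terms. That matches or improves the stated leading term only when $2\theta\ge1$.

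\textbf{The ``truncated quantity'' alternative does not repair this.} The increments $-\eta\braket{\nabla f(\tilde x_t),g_t-\E_tg_t}$ are dictated by telescoping $f(\tilde x_t)$. They have size $\eta c\|\nabla f(\tilde x_t)\|$ whatever quantity you track on the left, and reweighting steps by a predictable factor breaks the telescoping.

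\textbf{The missing idea is the paper's split of the martingale.}
\begin{enumerate}
\item Freedman's inequality (Lemma~\ref{lem:freedman-ineq}) is applied only to $\psi_tZ_t$, with the predictable indicator $\psi_t=\mathbf{1}\{\|\nabla f(x_t)\|\le c/2\}$.
\item On these steps, Lemma~\ref{lem:virtual-seq} and $\eta\le 1/(4\tau_CL)$ give $\|\nabla f(\tilde x_t)\|\le 3c/4$. Hence $|\psi_tZ_t|=O(\eta c^2)$, an envelope independent of $T$.
\item The conditional variance is $O\bigl(\eta^2c^2(\|\nabla f(x_t)\|^2+\eta^2c^2\tau_C^2L^2)\bigr)$, which is absorbed into the quadratic descent.
\item The total concentration cost is then $O(\eta c^2\log\frac1\delta)$, i.e.\ $O(c^2\log(1/\delta)/T)$ after normalization. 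This is lower order in $\epsilon$.
\end{enumerate}
On the steps with $\|\nabla f(x_t)\|>c/2$ the paper uses no martingale concentration at all:
\begin{enumerate}
\item It writes the descent inequality pathwise with the realized $g_t^{i_t}$, polarizing against $\clip_c(\nabla f(x_t))$.
\item It bounds $\|g_t^{i_t}-\clip_c(\nabla f(x_t))\|\le\|\nabla F(x_t,\xi_t^{i_t})-\nabla f(x_t)\|$ by nonexpansiveness of clipping.
\item It takes a union bound over the $T$ sub-Weibull tails (Lemma~\ref{lem:sub-weibull}(i)). With probability $1-\delta/2$, every noise norm is then at most $\sigma\log^\theta(4T/\delta)$.
\end{enumerate}
Choosing $c\gtrsim\sigma\log^\theta(T/\delta)$ then makes each such step decrease $f(\tilde x_t)$ by $\gtrsim\eta c\|\nabla f(x_t)\|$ deterministically. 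This is where $\log^\theta(1/\delta)$ genuinely enters $c$, and hence the source of the factors $\log^{2\theta}(1/\delta)$ and $\log^\theta(1/\delta)$ in the first two terms.

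Another possible fix, which you would have to carry out yourself, is an induction that keeps $f(\tilde x_t)-f^\star=O(\Delta)$ on the good event. That would also give an envelope independent of $T$.
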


In particular, in the high probability analysis, we get an additional
martingale difference term
\[
    Z_t = - \eta \braket{\nabla f(\tilde x_t), g_t^{i_t}(x_t) - \E[ g_t^{i_t}(x_t)]}.
\]
Gradient clipping lets us bound the norm of the arguments in the inner product,
where Lemma~\ref{lem:virtual-seq} is necessary for the first argument.
Subsequently applying Freedman's inequality (Lemma~\ref{lem:freedman-ineq}) we
find that with probability atleast $1 - \delta$,
\[
    \begin{aligned}
        \sum_{t=0}^{T-1} Z_t \lesssim &\sum_{t=0}^{T-1} \left( \eta \|\nabla f(x_t)\|^2 + \eta^3 c^2 \tau_C^2 L^2 \right) \\
                                      &+ \eta c^2 \log \frac{2}{\delta} + \sigma^2 \log^{2 \theta} \frac{2T}{\delta}.
    \end{aligned}
\]

Theorem~\ref{thm:high-prob-hom} shows that the rate differs from
Theorem~\ref{thm:sub-weibull-hom} only in a polylogarithmic dependence on the
failure probability, with the degree being determined by the tail parameter
$\theta$ from Assumption~\ref{asmp:sub-weibull}. 

\begin{remark}
    Since clipping does not affect the computation dynamics of ASGD, ignoring
    the small overhead of clipping the gradient, the same time complexity
    analysis applies. In the fixed computation model considered in
    \cite{Tyurin23, Maranjyan25} we assume workers take at most $s_1 \leq \dots
    \leq s_n$ seconds to compute gradients. Under this model, the time
    complexity of Clipped ASGD with full concurrency $\tau_C = n$ is obtained
    by multiplying the oracle complexity by the harmonic sum of computation
    times $(\sum_{i=1}^{n} \frac{1}{s_i})^{-1}$. If the computation times are
    known beforehand, then the concurrency and set of workers can be chosen to
    yield a time complexity of 
    \[
        \widetilde{O} \left( \min_{\tau_C} \left( \sum_{i=1}^{\tau_C} \frac{1}{s_i} \right)^{-1} \left( \frac{\sigma^2}{\epsilon^4} + \frac{\sigma \tau_C}{\epsilon^3} + \frac{\tau_C}{\epsilon^2} \right)  \right).
    \]
    The same reasoning applies to the high probability analysis, since the
    worst-case computation dynamics in this model are deterministic.
\end{remark}

\section{Heterogeneous setting}
\label{sec:heterogeneous}

We now turn to the general heterogeneous case of problem \eqref{eq:opt-prob},
in which every worker has access to its own data distribution and function
$F_i$. In this setup, we have much less freedom in how we utilize workers,
since over-relying on a few fast workers will bias towards their objectives.
For this reason, standard ASGD does not in general converge in this setting
\citep{Mishchenko22}. Thus, we consider the same scheme for worker selection as
for instance \citet{Koloskova22} and \citet{Wang23} in
Algorithm~\ref{algo:asgd_het}. 

\begin{algorithm}
	\caption{Clipped ASGD (heterogeneous setting)}\label{algo:asgd_het}
	\begin{algorithmic}
		\STATE \textbf{Input:} Initialization $x_0$, concurrency $\tau_C$ step size $\eta > 0$, clipping radius $c > 0$.

        \STATE A subset $\mathcal{C}_0$ of $\tau_C$ workers receive $x_0$ and start computing clipped gradients
		\FOR{$t = 0, \dots, T-1$}

        \STATE Worker $i_t$ finishes computing $g_{t - \tau_t}^{i_t}(x_{t - \tau_t})$

		\STATE Server updates $x_{t+1} = x_t - \eta g_{t - \tau_t}^{i_t}(x_{t - \tau_t})$

        \STATE Worker $j_t \sim \sfsame{Uniform}\{1,\dots,n\}$ receives $x_{t+1}$ and schedules $g_{t+1}^{j_t}(x_{t+1})$

		\ENDFOR
	\end{algorithmic}
\end{algorithm}

After a worker has finished computing a clipped gradient, a new worker is
chosen uniformly at random among all workers, and is sent the updated model.
This way, no bias is created toward faster workers. This does however
potentially slow down the wall-clock time of the average oracle call as
compared to standard ASGD. Here, a worker can be sampled several times before
finishing the first gradient computation, in which case a queue of gradients
builds up on that worker.

We make the bounded first-order heterogeneity assumption on the functions $f_i$
which is commonly found in the federated learning literature.

\begin{assumption}\label{asmp:het}
	The functions $f_i$ have bounded heterogeneity, meaning $\|\nabla f_i(x) -
		\nabla f(x)\|^2 \leq \zeta^2$ for every $x \in \reals^d$.
\end{assumption}

\begin{thm}\label{thm:sub-Weibull-het}
    Suppose Assumptions~\ref{asmp:smooth}, \ref{asmp:sub-weibull} and
    \ref{asmp:het} hold. Then there exists a constant step size $\eta$ and
    clipping radius $c$ such that $\frac{1}{T} \sum_{t=0}^{T-1} \E \|\nabla
    f(x_t)\| \leq \epsilon$ within
	\[
		\widetilde{O} \left( \frac{\sigma^2 + \zeta^2}{\epsilon^4} + \frac{(\sigma + \zeta) \tau_C}{\epsilon^3} + \frac{\tau_C}{\epsilon^2} \right) 
	\]
	iterations of Algorithm~\ref{algo:asgd_het}.
\end{thm}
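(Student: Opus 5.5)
We follow the homogeneous proof, but build the virtual sequence from the scheduled (not yet returned) gradients. For Algorithm~\ref{algo:asgd_het}, let $\tilde x_0 = x_0$, $\tilde x_1 = x_0 - \eta\sum_{i\in\mathcal{C}_0} g_0^i(x_0)$ and $\tilde x_{t+1} = \tilde x_t - \eta\, g_{t}^{j_{t-1}}(x_t)$, i.e.\ the gradient scheduled at time $t$ enters immediately. Then $\tilde x_t - x_t = -\eta\sum g$, summed over the clipped gradients that are scheduled but not yet applied. Exactly $\tau_C$ jobs are outstanding at every time (one finishes, one is scheduled), so the argument of Lemma~\ref{lem:virtual-seq} gives $\|\tilde x_t - x_t\|\le \eta c\tau_C$. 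Queueing on a single worker changes nothing, since only the number of outstanding jobs matters.

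The key point is that $j_{t}$ is uniform and independent of the past. Hence, conditionally on $\mathcal{F}_t$ (which includes $x_t$), the direction $g_t^{j_{t-1}}(x_t)$ is an average over $i$ of $\clip_c(\nabla F_i(x_t,\xi))$. By $L$-smoothness,
\[
f(\tilde x_{t+1}) \le f(\tilde x_t) - \eta\braket{\nabla f(\tilde x_t), \E_t g_t} + \tfrac{L\eta^2 c^2}{2}.
\]
Next replace $\nabla f(\tilde x_t)$ by $\nabla f(x_t)$ at cost $L\eta c\tau_C\cdot c$. This gives the term $\eta^2 L c^2\tau_C$ per step, which in the rate becomes $\tau_C/\epsilon^2$ and $(\sigma+\zeta)\tau_C/\epsilon^3$ once $c \asymp \max\{\epsilon, \sigma\log^\theta(\cdot), \zeta\}$.

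The main step, and the expected obstacle, is a lower bound on $\frac1n\sum_i \braket{\nabla f(x), \E\,\clip_c(\nabla F_i(x,\xi))}$ with a clipping bias that is small in $\sigma$ and $\zeta$. We split by the size of $\|\nabla f(x)\|$.

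\emph{Small-gradient case, $\|\nabla f(x)\|\le c/4$.} Here $\|\nabla f_i(x)\|\le c/4+\zeta\le c/2$ once $c\ge 4\zeta$. Clipping is then inactive unless the noise exceeds $c/2$, and the sub-Weibull tail bound $\prob(\|\text{noise}\|>c/2)\le 2\exp(-(c/2\sigma)^{1/\theta})$ makes the bias $\|\E\,\clip_c(\nabla F_i)-\nabla f_i\|$ at most $\epsilon^2$ (up to logs) by choosing $c \gtrsim \sigma\log^\theta(\sigma/\epsilon)$. Averaging over $i$ gives $\frac1n\sum_i\nabla f_i = \nabla f$, so the inner product is at least $\|\nabla f\|^2 - \|\nabla f\|\epsilon^2$.

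\emph{Large-gradient case, $\|\nabla f(x)\| > c/4$.} Here we aim to show that the inner product is $\gtrsim c\|\nabla f(x)\|$. The difficulty is that heterogeneity can make individual $\nabla f_i$ point away from $\nabla f$. With $c\ge 4\zeta$, however, every $\nabla f_i$ lies within $\zeta\le \|\nabla f\|$ of $\nabla f$, so it has positive alignment with $\nabla f$. On the high-probability event that the noise is small, the clipped vector has a component of order $c$ along $\nabla f$. The rare event is controlled by the sub-Weibull tails and contributes at most $c\|\nabla f\|$ times an exponentially small probability. We would adapt the standard clipped-SGD lemma, as in \citet{Koloskova23}, applied per $i$ after centering at $\nabla f$.

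Combining the two cases, per step we get $\E_t\braket{\nabla f(x_t), g_t} \ge \min\{\|\nabla f(x_t)\|^2, c\|\nabla f(x_t)\|/8\}-O(\epsilon^2\|\nabla f(x_t)\|)$. Telescope $f(\tilde x_t)$ (with $f(\tilde x_0)=f(x_0)$), divide by $\eta T$, and use $\min\{a^2,ca\}\ge \epsilon a - \epsilon^2$ together with Jensen to get $\frac1T\sum_t\E\|\nabla f(x_t)\|\lesssim \frac{\Delta}{\eta T\epsilon}+\frac{L\eta c^2\tau_C}{\epsilon}+\epsilon$. Finally, take $\eta=\min\{1/(L\tau_C),\ \epsilon^2/(L c^2\tau_C)\}$ (adjusted so that $\eta c^2$ against $\sigma^2+\zeta^2$ yields the $\epsilon^{-4}$ term) with $c\asymp \epsilon+\sigma\log^\theta(\cdot)+\zeta$. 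Solving for $T$ gives the stated $\widetilde O$ complexity.
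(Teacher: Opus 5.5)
\paragraph{The delay step does not give the stated rate.} Your overall structure matches the paper's: a virtual sequence with $\|\tilde x_t-x_t\|\le\eta c\tau_C$, a split at a constant fraction of $c$, a sub-Weibull tail bound for the clipping bias, and $c\asymp\sigma\log^\theta(\cdot)+\zeta$. The problem is the replacement of $\nabla f(\tilde x_t)$ by $\nabla f(x_t)$. You bound $\eta|\braket{\nabla f(\tilde x_t)-\nabla f(x_t),\E_t g_t}|\le\eta\cdot L\eta c\tau_C\cdot c$ using $\|\E_t g_t\|\le c$. The resulting per-step cost $\eta^2Lc^2\tau_C$ has the same order in $\eta$ as the variance term $\eta^2Lc^2/2$, only $\tau_C$ times larger. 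It therefore cannot turn into $\tau_C/\epsilon^2+(\sigma+\zeta)\tau_C/\epsilon^3$ as you assert. Your own final inequality shows this. Near stationarity the relevant branch is $\|\nabla f(x_t)\|\le c$, since $\epsilon\ll c$. There you need $\eta\lesssim\epsilon^2/(Lc^2\tau_C)$ and $T\gtrsim\Delta/(\eta\epsilon^2)$, hence $T\gtrsim L\Delta c^2\tau_C/\epsilon^4\gtrsim L\Delta(\sigma^2+\zeta^2)\tau_C/\epsilon^4$. No choice of $\eta$ repairs this: the leading term picks up exactly the factor $\tau_C$ the theorem claims to avoid.

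\paragraph{The missing idea.} You need to make the delay cost quadratic in $\eta c\tau_C$ by pairing it with a negative $\|\E_t g_t\|^2$ term. With $\alpha_t=\min\{1,c/\|\nabla f(x_t)\|\}$, the paper writes $-\eta\braket{\nabla f(x_t),\E_t g_t}=-\frac{\eta}{\alpha_t}\braket{\clip_c(\nabla f(x_t)),\E_t g_t}$ and polarizes, which gives
\[
-\tfrac{\eta\alpha_t}{2}\|\nabla f(x_t)\|^2-\tfrac{\eta}{2\alpha_t}\|\E_t g_t\|^2+\tfrac{\eta}{2\alpha_t}\|\E_t g_t-\clip_c(\nabla f(x_t))\|^2 .
\]
Young's inequality then gives $\eta\braket{\nabla f(x_t)-\nabla f(\tilde x_t),\E_t g_t}\le\frac{\eta L^2}{2}\|x_t-\tilde x_t\|^2+\frac{\eta}{2}\|\E_t g_t\|^2$. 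The last term is absorbed because $\alpha_t\le 1$. The delay cost becomes $\eta^3L^2c^2\tau_C^2/2$, so the only constraint is $\eta\lesssim\epsilon/(Lc\tau_C)$, and this is what produces the $(\sigma+\zeta)\tau_C/\epsilon^3$ term. Within your framework, the equivalent fix is to use $\|\E_t g_t\|\le\|\nabla f(x_t)\|+\epsilon^2$ in the small-gradient case and then apply AM--GM.

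\paragraph{Large-gradient case.} The same decomposition also removes the need for your per-$i$ alignment argument, which as sketched fails near the threshold. With $c=4\zeta$ and $\|\nabla f\|$ slightly above $c/4$, the component of $\nabla f_i$ along $\nabla f$ is only guaranteed to be at least $\|\nabla f\|-\zeta\approx 0$. There you would have to average over $i$ (clipping is inactive), which needs an extra case split. The paper instead uses Jensen and nonexpansiveness of $\clip_c$ to get $\|\E_t g_t-\clip_c(\nabla f(x_t))\|^2\le 4\sigma^2\Gamma(2\theta+1)+2\zeta^2$. In the polarized bound this gives $-\frac{\eta c}{2}\bigl(1-O((\sigma^2+\zeta^2)/c^2)\bigr)\|\nabla f(x_t)\|$ for $\|\nabla f(x_t)\|>c$, and an analogous bound on $(c/2,c]$, without any geometry of the individual $\nabla f_i$.
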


Surprisingly, Theorem~\ref{thm:sub-Weibull-het} shows that gradient clipping
can remove the dependence on the maximum delay even in the heterogeneous case,
where delay-adaptive schemes which do not in general converge. Compared this
rate with the rate $O((\sigma^2 + \zeta^2) / \epsilon^4 + \zeta \tau_C /
\epsilon^3 + \sqrt{\tau_C\tau_\text{max}} / \epsilon^2)$ heterogeneous version
of Vanilla ASGD, we have a significant improvement when delays are large.
Moreover, for this result \citet{Koloskova22} require an additional assumption
that the average delay of a worker is independent from the number of times that
worker is sampled, which may not be entirely realistic.

\begin{thm}
    Suppose Assumptions~\ref{asmp:smooth}, \ref{asmp:sub-weibull} and
    \ref{asmp:het} hold. Then there exists a constant step size $\eta$ and
    clipping radius $c$ such that for $\epsilon \in (0,1)$ and failure
    probability $\delta \in (0,1)$, we have $\prob \left( \frac{1}{T}
    \sum_{t=0}^{T-1} \|\nabla f(x_t)\| \leq \epsilon\right) \geq 1 - \delta$
    within
	\[
		\widetilde{O} \left( \frac{(\sigma^2 + \zeta^2) \log^{2 \theta} (1/\delta)}{\epsilon^4} + \frac{(\sigma + \zeta)\tau_C\log^{\theta} (1/\delta)}{\epsilon^3} + \frac{\tau_C}{\epsilon^2} \right)
	\]
    iterations of Algorithm~\ref{algo:asgd_het}.
\end{thm}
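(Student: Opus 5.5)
We follow the homogeneous high-probability argument (Theorem~\ref{thm:high-prob-hom}), adapted to the random worker selection of Algorithm~\ref{algo:asgd_het}. Because a newly scheduled worker is drawn uniformly at random, we attach each gradient to the iterate at which it was \emph{scheduled}. Concretely, let $g_t := g_t^{j_{t-1}}(x_t)$ be the clipped gradient of the worker drawn at step $t$, evaluated at $x_t$. Define the virtual sequence by $\tilde x_{t+1} = \tilde x_t - \eta g_t$, initialised as in \eqref{eq:virtual-seq}.

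\textbf{Step 1: virtual sequence.} Each $x_t$ equals $\tilde x_t$ plus the scheduled but unapplied gradients, and there are at most $\tau_C$ of these, including queued ones. Since every gradient has norm at most $c$, the argument of Lemma~\ref{lem:virtual-seq} goes through verbatim and gives $\|\tilde x_t - x_t\| \le \eta c\tau_C$.

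\textbf{Step 2: descent.} Apply $L$-smoothness to $\tilde x_t$:
\[
f(\tilde x_{t+1}) \le f(\tilde x_t) - \eta\langle \nabla f(\tilde x_t), g_t\rangle + \tfrac{L\eta^2c^2}{2}.
\]
Split $g_t = \E_t[g_t] + (g_t - \E_t[g_t])$, where $\E_t$ conditions on $x_t$. The index $j$ is uniform and independent of the past, so
\[
\E_t[g_t] = \tfrac1n\sum_i \E_\xi[\clip_c(\nabla F_i(x_t,\xi))].
\]
The fluctuation gives the martingale difference $Z_t = -\eta\langle \nabla f(\tilde x_t), g_t - \E_t g_t\rangle$. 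Its increments are bounded by $\eta c(\|\nabla f(x_t)\| + L\eta c\tau_C)$, using Step 1 for the first argument.

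\textbf{Step 3: the drift term.} Lower bound $\langle \nabla f(x_t), \E_t g_t\rangle$ with the standard clipping dichotomy, using $\nabla f = \frac1n\sum_i\nabla f_i$ (rescaling the sum in \eqref{eq:opt-prob} as needed). Choose $c \gtrsim \zeta + \sigma\log^\theta(T/\delta)$.
\begin{itemize}
\item If $\|\nabla f(x_t)\| \le c/2$, then $\|\nabla f_i(x_t)\| \le c/2 + \zeta$. The clipping bias $\|\E\clip_c(\nabla F_i) - \nabla f_i\|$ is controlled by the sub-Weibull tail probability $\prob(\text{noise} > c/4)$ times moments, which is at most $\epsilon$-small by the choice of $c$. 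This yields a contribution $\ge \frac12\|\nabla f(x_t)\|^2 - O(\epsilon^2)$. Heterogeneity enters only through the enlarged radius and a $\zeta$-bias in the $\sigma^2+\zeta^2$ terms.
\item If $\|\nabla f(x_t)\|$ is large, each $\langle\nabla f, \clip_c(\nabla F_i)\rangle$ is $\gtrsim c\|\nabla f\|$ on the event where the noise is small. Averaging over $i$, the misaligned workers cost at most $\zeta c$, which is dominated because $\|\nabla f\| \ge c/2 \gg \zeta$.
\end{itemize}
Together this gives the bound $\min(\|\nabla f\|^2, c\|\nabla f\|)$ up to error terms. The difference between $\nabla f(\tilde x_t)$ and $\nabla f(x_t)$ costs $L\eta c\tau_C\cdot c$ per step.

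\textbf{Step 4: concentration and tuning.} Apply Freedman's inequality (Lemma~\ref{lem:freedman-ineq}) to $\sum_t Z_t$, using a union bound over the $T$ sub-Weibull events $\|\text{noise}_t\| \le \sigma\log^\theta(2T/\delta)$. On these events the conditional variance is bounded by $\eta^2 c^2(\|\nabla f(x_t)\|^2 + L^2\eta^2c^2\tau_C^2)$. Absorb half of the $\eta\|\nabla f(x_t)\|^2$ part into the descent, which gives the displayed bound from the homogeneous case. Telescoping then yields, with probability at least $1-\delta$,
\[
\frac1T\sum_t \min\bigl(\|\nabla f(x_t)\|^2, c\|\nabla f(x_t)\|\bigr) \lesssim \frac{\Delta}{\eta T} + L\eta c^2\tau_C + \frac{c^2\log(1/\delta)}{T} + \epsilon^2.
\]
Convert to $\frac1T\sum\|\nabla f(x_t)\|$ via Jensen. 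Then take $c \asymp \epsilon + (\sigma+\zeta)\log^\theta(T/\delta)$ and $\eta = \min\{1/(L\tau_C),\ \epsilon^2/(Lc^2\tau_C)\}$, with $T$ large enough. The resulting terms are $c^2/\epsilon^4$, $c\tau_C/\epsilon^3$ and $\tau_C/\epsilon^2$, which is the claimed rate.

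\textbf{Main obstacle.} Step 3 is the crux: the clipping bias under heterogeneity must be controlled so that only an additive $\zeta$ appears in $c$ and no irreducible error remains. A secondary subtlety is making $\E_t$ well defined under queueing, since the random selection must stay independent of the completion order; indexing gradients by their scheduling time handles this.
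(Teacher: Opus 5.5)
Your proposal has two genuine gaps: one in the rate and one in the concentration step.

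\textbf{The delay error is bounded too crudely for the claimed rate.} In Step 3 you bound $\eta\braket{\nabla f(x_t)-\nabla f(\tilde x_t), \E_t g_t}$ by $\eta\cdot L\eta c\tau_C\cdot c$. This puts $L\eta c^2\tau_C$ in your final display. Making that term $\lesssim\epsilon^2$ forces $\eta\lesssim\epsilon^2/(Lc^2\tau_C)$, and then $\Delta/(\eta T)\lesssim\epsilon^2$ forces $T\gtrsim L\Delta\tau_C c^2/\epsilon^4$, roughly $(\sigma^2+\zeta^2)\tau_C\log^{2\theta}(T/\delta)/\epsilon^4$. The concurrency multiplies the leading term, and no $c\tau_C/\epsilon^3$ term appears. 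So your tuning claim does not follow from your own bound, whatever $\eta$ you pick.

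The paper (Lemma~\ref{lem:descent}) instead writes $-\eta\braket{\nabla f(x_t),\E_t g_t} = -\frac{\eta}{2}\|\nabla f(x_t)\|^2-\frac{\eta}{2}\|\E_t g_t\|^2+\frac{\eta}{2}\|\E_t g_t-\nabla f(x_t)\|^2$ and applies Young's inequality to the delay term, so the $\frac{\eta}{2}\|\E_t g_t\|^2$ terms cancel. The per-step error after dividing by $\eta$ is then $\frac{L\eta c^2}{2}+\frac{L^2\eta^2c^2\tau_C^2}{2}$: a $\tau_C$-free smoothness term plus a delay term quadratic in $\eta\tau_C$. Choosing $\eta\asymp\min\{1/(L\tau_C),\epsilon^2/(Lc^2),\epsilon/(Lc\tau_C)\}$ then produces $c^2/\epsilon^4+c\tau_C/\epsilon^3+\tau_C/\epsilon^2$. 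In the clipped regime the same cancellation works pathwise, using $\clip_c(\nabla f(x_t))=\alpha_t\nabla f(x_t)$ with $\alpha_t=\min\{1,c/\|\nabla f(x_t)\|\}\le 1$.

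\textbf{Freedman cannot be applied to the full sum $\sum_t Z_t$.} Lemma~\ref{lem:freedman-ineq} needs a uniform bound $|Z_t|\le\ell$. Your increment bound grows with $\|\nabla f(x_t)\|$, which has no a priori control. With $\ell\asymp\eta c^2$ the variance term is about $\rho\eta\|\nabla f(x_t)\|^2$. This can be absorbed only where the descent is $\frac{\eta}{2}\|\nabla f(x_t)\|^2$. Once $\|\nabla f(x_t)\|\gg c$, the descent is only about $\eta c\|\nabla f(x_t)\|$, so the left side $\min(\|\nabla f\|^2,c\|\nabla f\|)$ of your final display does not follow. The main-text display you cite is only a loose summary; the appendix sums over small-gradient iterations only.

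The paper applies Freedman only to $\psi_tZ_t$, with $\psi_t=\mathbf{1}\{\|\nabla f(x_t)\|\le c/2\}$ being $\mathcal{F}_t$-measurable. This is still a martingale difference, and $|\psi_tZ_t|\lesssim\eta c^2$. For $\|\nabla f(x_t)\|>c/2$ it uses no martingale at all. It starts from the pathwise bound $f(\tilde x_{t+1})-f(\tilde x_t)\le-\frac{\eta\alpha_t}{2}\|\nabla f(x_t)\|^2+\frac{\eta}{2\alpha_t}\|g_t-\clip_c(\nabla f(x_t))\|^2$ plus the same error terms. Nonexpansiveness of clipping gives $\|g_t-\clip_c(\nabla f(x_t))\|\le\|\nabla F_{i_t}(x_t,\xi_t^{i_t})-\nabla f_{i_t}(x_t)\|+\zeta$. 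The union bound over the $T$ sub-Weibull events then yields descent $\gtrsim\eta c\|\nabla f(x_t)\|$ once $c\gtrsim\sigma\log^\theta(T/\delta)+\zeta$.

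That is where your union bound belongs. Inside Freedman it is unnecessary, since $\|g_t\|\le c$ already bounds the conditional variance. It is also harmful, since restricting to events that depend on the step-$t$ noise breaks the martingale property. Your in-expectation treatment of the clipped regime in Step 3 is fine in itself. But it leaves exactly the fluctuation you cannot concentrate.
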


This high-probability convergence guarantee may be particularly important in
federated learning, as training runs in real-world deployments are typically
done at most a few times because each run incurs substantial logistical,
computational, and organizational cost. Coordinating participation across a
large, dynamic population of client devices requires careful scheduling,
incentives, and system overhead, and client availability cannot be reliably
reproduced across runs. Moreover, federated training pipelines are tightly
coupled to product release cycles, privacy reviews, and regulatory approvals,
making repeated experimentation slow and expensive. As a result, retraining is
not a cheap or repeatable process, making high-probability guarantees
particularly relevant.

\begin{remark}
    Quantifying the time complexity is more delicate in the heterogeneous case
    due to the sampling scheme required to preserve the unbiasedness of the
    stochastic gradients. For this reason, we have included experiments on the
    homogeneous and heterogeneous setting to empirically measure this slowdown.
\end{remark}

\section{Numerical experiments}

We examine the effect of using gradient clipping in asynchronous optimization
of neural network parameters. We conduct several experiments on the CIFAR-10
\citep{Krizhevsky09} and Shakespeare \citep{Karpathy15} datasets, in the
homogeneous (shared memory) setting. To allow for different delay setups, we
simulate asynchronous training with $n = 16$ workers, where half of them take 1
time unit to compute a gradient, and the other half takes $D \in \{4, 8\}$ time
units. We use full concurrency $\tau_C = n$ in all runs. All experiments are
averaged over three random seeds, and $2\sigma$ error bars are shown in the
figures. The code used to run these experiments is available at
\url{https://github.com/samericks/clipped-asgd}.

\subsection{Homogeneous setting}

We compare Clipped ASGD with three baseline methods in the homogeneous setting:
Vanilla ASGD with constant step size, Delay-adaptive ASGD with step size rule
proposed by \citet{Koloskova22}, and Ringleader ASGD \citep{Maranjyan25}. For
Clipped ASGD, we choose the best clipping radius $c\in \{2^k \colon k =
-1,\dots,2\}$, and for Ringleader ASGD we choose the best delay threshold $R
\in \{2^k \colon k = 1,\dots,4\}$. In all cases, the optimal hyperparameter
value lies strictly within the search range.

\paragraph{CIFAR-10.} We train a ResNet-18 model \citep{He16} on CIFAR-10. We
sweep over $\eta \in \{2^{-9}, \dots, 2^{-1}\}$ and measure the simulated
wall-clock time required to reach 80\% test accuracy. We terminate runs that do
not reach the target within 4{,}000 time units.

Figure~\ref{fig:cifar10} shows the results for delay factors $D \in \{4,8\}$.
Clipped ASGD consistently improves over Vanilla ASGD and Delay-adaptive ASGD
across all delay settings. Here, Clipped ASGD reduces the minimum wall-clock
time by $1.8\times$ relative to Vanilla ASGD, and by $1.5\times$ relative to
Delay-adaptive and Ringmaster ASGD. 

As predicted by the theory, increasing the delay primarily affects Vanilla
ASGD, which requires substantially smaller step sizes to remain stable. While
the best wall-clock time is not significantly worsened for Vanilla ASGD, it
requires more fine-grained tuning to converge within the time budget. In
contrast, the clipped and delay-adaptive methods remain robust to larger
delays, with no significant shifts due to larger delays.

\begin{figure}[ht!]
    \centering
    \begin{subfigure}[b]{\linewidth}
        \centering
        \includegraphics[width=\textwidth]{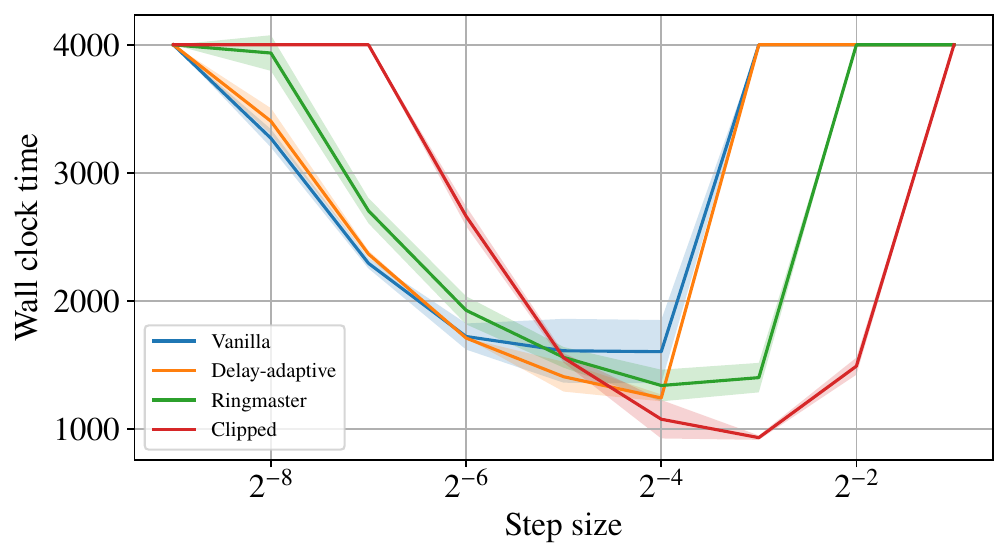}
        \caption{$D=4$}
    \end{subfigure}
    \begin{subfigure}[b]{\linewidth}
        \centering
        \includegraphics[width=\textwidth]{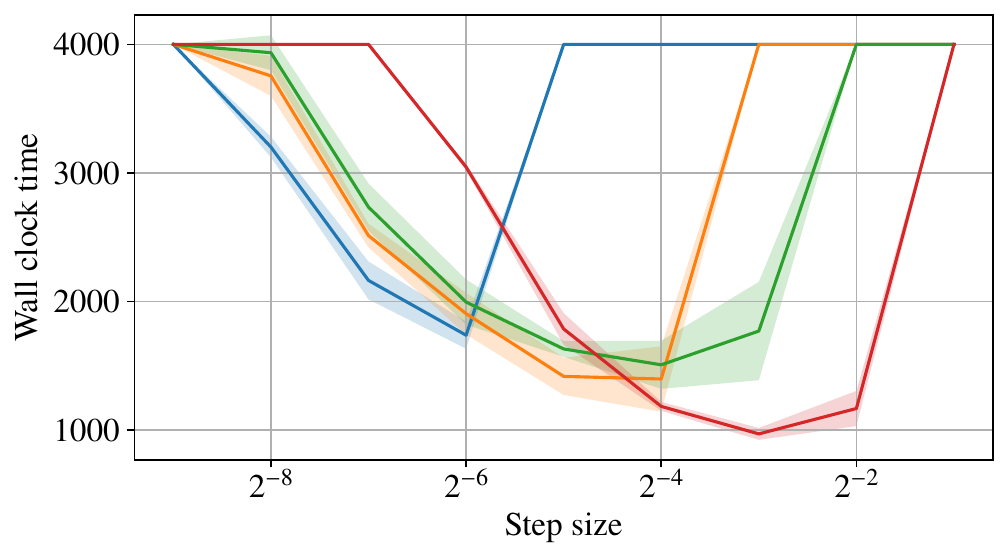}
        \caption{$D=8$}
    \end{subfigure}
    \caption{
        Simulated wall-clock time to reach 80\% test accuracy on CIFAR-10
        dataset with ResNet-18 architecture, when half of the 16 workers are
        $D$ times slower than the other half. The average time per oracle call
        here is $0.108$ and $0.123$ time units for $D=4$ and $D=8$,
        respectively.
    }
    \label{fig:cifar10}
\end{figure}

\paragraph{Shakespeare.} We next consider next-word prediction on the
Shakespeare dataset using an LSTM architecture \citep{Hochreiter97} with
dropout rate 0.2. We sweep over step sizes $\eta \in \{2^{-3}, \dots, 2^3\}$
and measure the simulated wall-clock time required to reach test perplexity
5.0. Runs are terminated after 4{,}000 time units. 

The results are shown in Figure~\ref{fig:shakespeare}. Clipped ASGD again
outperforms both Vanilla and Delay-adaptive ASGD. For $D=4$, Clipped ASGD
achieves the target perplexity $1.8\times$ faster than Vanilla ASGD,
$2.1\times$ faster than Delay-adaptive ASGD, and $1.8\times$ faster than
Ringmaster ASGD. For $D=8$, Clipped ASGD is $2\times$ faster than Vanilla ASGD,
$2.2\times$ faster than Delay-adaptive ASGD, and $1.4\times$ faster than
Ringmaster ASGD.

\begin{figure}[ht!]
    \centering
    \begin{subfigure}[b]{\linewidth}
        \centering
        \includegraphics[width=\textwidth]{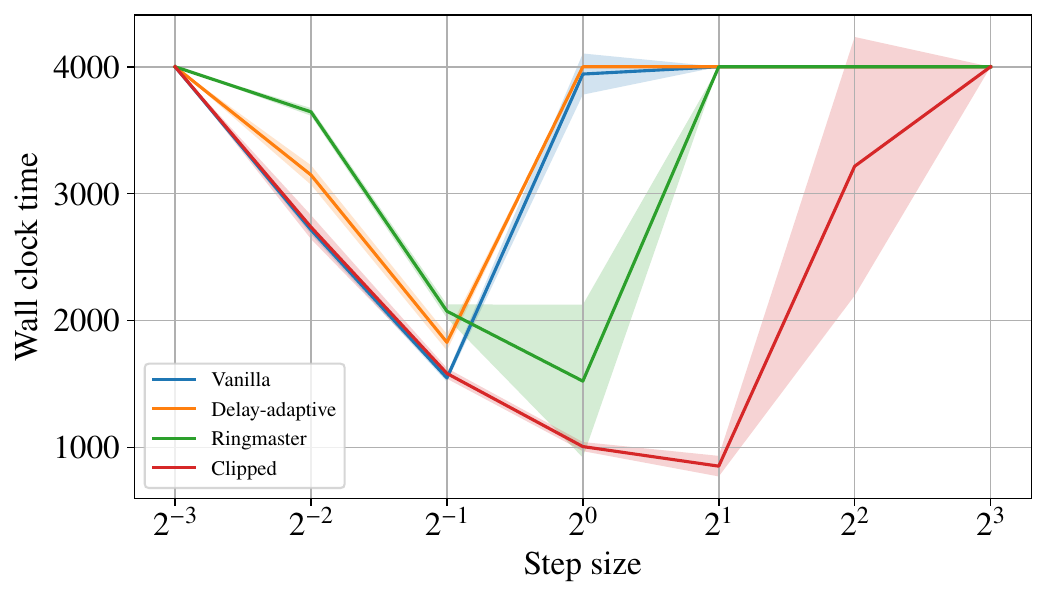}
        \caption{$D=4$}
    \end{subfigure}
    \begin{subfigure}[b]{\linewidth}
        \centering
        \includegraphics[width=\textwidth]{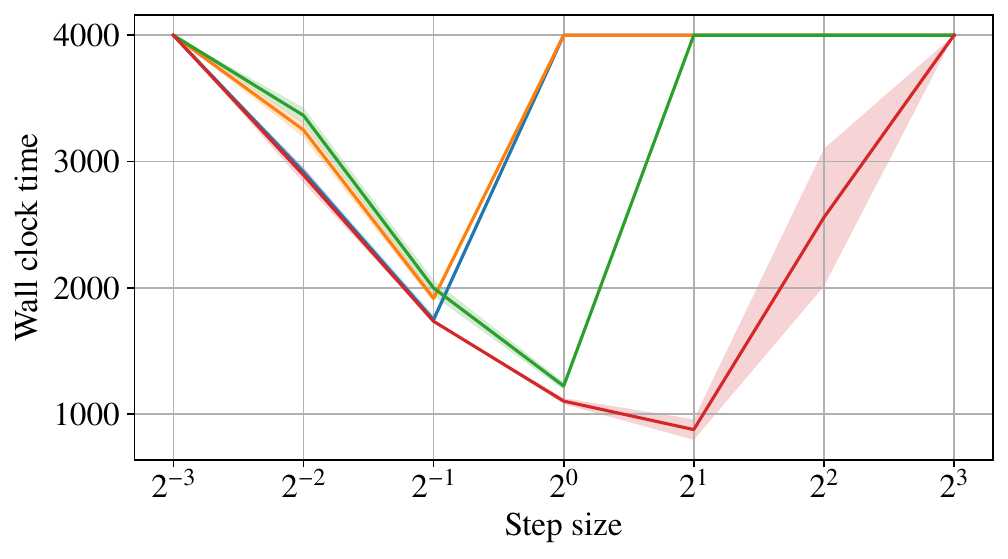}
        \caption{$D=8$}
    \end{subfigure}
   \caption{
        Simulated wall-clock time to reach test perplexity 5.0 on Shakespeare
        dataset with LSTM architecture, when half of the 16 workers are $D$
        times slower than the other half. The average time per oracle call here
        is $0.108$ and $0.123$ time units for $D=4$ and $D=8$, respectively.
    }
    \label{fig:shakespeare}
\end{figure}

\subsection{Heterogeneous setting} 

We now consider asynchronous optimization under data heterogeneity. We compare
Clipped ASGD against Vanilla ASGD and Ringleader ASGD \citep{Maranjyan26}.
Clipped and Vanilla ASGD use the uniform sampling scheme described in
Section~\ref{sec:heterogeneous}. As before, we tune the clipping radius over $c
\in \{2^k : k=-1,\dots,2\}$. 

Note however that in practical FL deployments, exact unbiasedness is often
sacrificed in favor of faster process times. For example, many synchronous FL
deployments aggregate updates once only a fraction (\eg, 80\%) of workers have
responded, rather than waiting for the slowest participants. Thus, asynchrony
can in fact \emph{decrease bias}, since the slowest workers may participate at
all \citep{Huba22}.

\paragraph{Label-skew CIFAR-10.} We revisit CIFAR-10, but now distribute the
data heterogeneously across workers using a Dirichlet partitioning scheme.
Following \citet{Nguyen22, He20, Diao20}, we sample client label distributions
from a Dirichlet distribution with parameter $\alpha = 0.5$. We use a two-layer
CNN architecture and sweep over step sizes $\eta \in \{2^{-9}, \dots,
2^{-1}\}$. 

We measure the simulated wall-clock time required to reach 70\% test accuracy,
with a maximum runtime of 8{,}000 for $D=4$ and 12{,}000 for $D=8$. The results
are shown in Figure~\ref{fig:label_skew_cifar10}. Clipped ASGD consistently
improves over both baselines across heterogeneity levels. With delay factor
$D=4$, clipping reduces the minimum wall-clock time by $1.2\times$ relative to
Vanilla ASGD and Ringleader ASGD. With $D=8$, Clipped ASGD is $1.3\times$
faster than Vanilla ASGD, and $1.2\times$ faster than Ringleader ASGD. Note
that due to the sampling schemes used to avoid oversampling the fast workers,
the maximum delay is also effectively controlled at the expense of process
time, which may explain why clipping yields a smaller improvement here than in
the homogeneous experiments.

\begin{figure}[ht!]
    \centering
    \begin{subfigure}[b]{\linewidth}
        \centering
        \includegraphics[width=\textwidth]{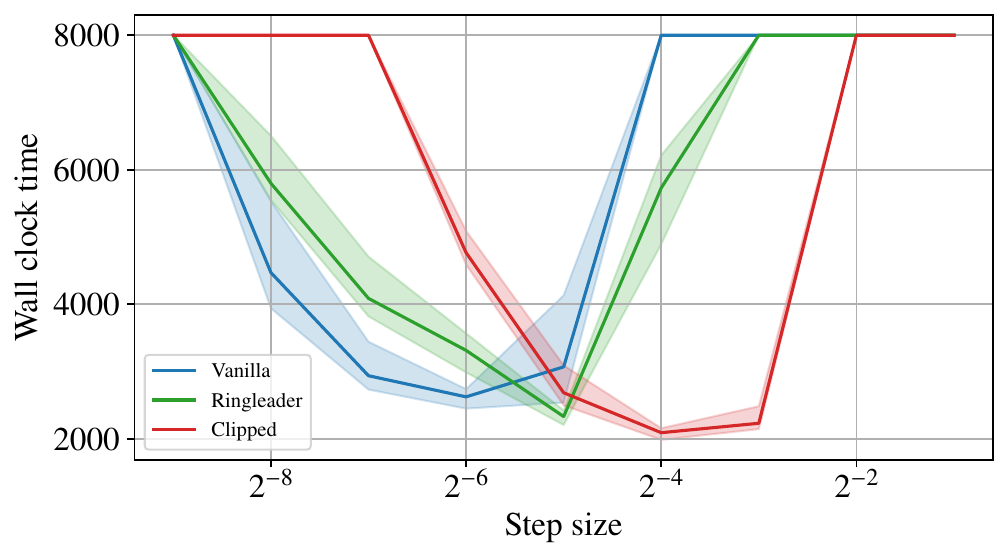}
        \caption{$D=4$}
    \end{subfigure}
    \begin{subfigure}[b]{\linewidth}
        \centering
        \includegraphics[width=\textwidth]{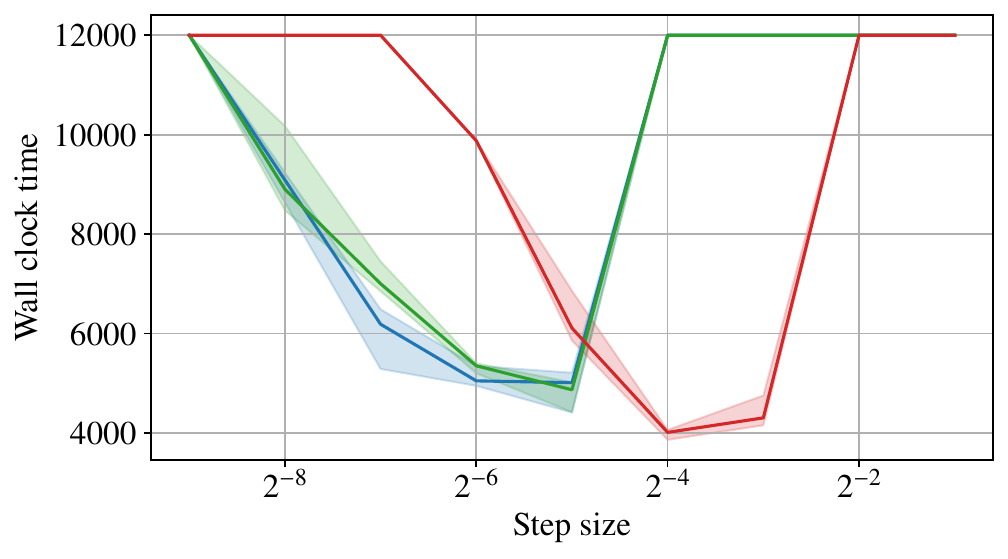}
        \caption{$D=8$}
    \end{subfigure}
   \caption{
        Simulated wall-clock time to reach test metric target on label skew
        CIFAR-10 dataset with a CNN architecture, when half of the 16 workers
        are $D$ times slower than the other half. The average time per oracle
        call is $0.337$ and $0.668$ time units for $D=4$ and $D=8$
        respectively.
    }
    \label{fig:label_skew_cifar10}
\end{figure}

\section{Conclusions and future work}

This paper shows that gradient clipping fundamentally alters the effect of
asynchrony in stochastic optimization. In particular, clipping removes
dependence on the maximum delay in the convergence behavior of ASGD, yielding
methods that are provably robust to stragglers. We provide convergence
guarantees in both expectation and high probability, and show that these
guarantees hold under homogeneous and heterogeneous settings. Empirically, we
demonstrate that clipping consistently improves asynchronous training and can
substantially outperform both Vanilla and Delay-adaptive ASGD across a range of
architectures and delay regimes.

\paragraph{Future work.} Our results suggest a broader connection between norm
control and robustness to asynchrony, raising several directions for future
investigation.

First, it is natural to ask how these effects extend beyond the standard smooth
setting. In particular, under weaker assumptions such as $(L_0,
L_1)$-smoothness, the interaction between gradient norms and staleness may
become more pronounced, potentially making clipping even more beneficial.

Second, recent optimizers such as Muon \citep{Jordan24} and Scion
\citep{Pethick25} achieve strong empirical performance by explicitly
controlling update norms, through orthogonalization or constrained update
steps. Since clipping provides a simple mechanism for norm control in
asynchronous settings, it is natural to ask whether asynchronous variants of
these methods inherit similar robustness to delays. We believe understanding
this interaction between optimizer geometry and asynchrony is a promising
direction for future work.

\section*{Impact statement}

This paper presents work whose goal is to advance the field of Machine
Learning. There are many potential societal consequences of our work, none
which we feel must be specifically highlighted here.

\section*{Acknowledgement}

This work was supported in part by the Knut and Alice Wal- lenberg Foundation
through project KAW 2022.0050 and by the Wallenberg AI, Autonomous Systems and
Software Program (WASP).

\newpage
\appendix
\onecolumn

\section{Preliminaries}

\subsection{Properties of Sub-Weibull random variables}

\begin{thm}[\citealt{Vladimirova20}]\label{thm:sub-weibull}
	Let $X\colon \mathcal{S} \to \reals$ be a random variable. Then the
	following are equivalent:

	\begin{enumerate}[(i)]
		\item There exists a $K_1 > 0$ such that $\prob(|X| \geq x) \leq 2
			      \exp(-(x/K_1)^\frac{1}{\theta})$ for all $x \geq 0$.

		\item There exists a $K_2 > 0$ such that $\E [|X|^q]^{1/q} \leq K_2^{1/q}
			      q^\theta$ for all $q \in (1, \infty)$.

		\item There exists a $K_3 > 0$ such that $\E
			      [\exp((\lambda|X|)^\frac{1}{\theta})] \leq \exp((\lambda
			      K_3))^\frac{1}{\theta})$.

		\item There exists a $K_4 > 0$ such that $\E [\exp((|X| /
				      K_4)^\frac{1}{\theta})] \leq 2$.

	\end{enumerate}

	Additionally, the constants $K_1,\dots,K_4$ differ at most by a factor that
	depends on $\theta$ (in particular, $K_1 = K_4$).

\end{thm}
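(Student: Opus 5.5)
The plan is to prove the standard cycle of implications (iv) $\Rightarrow$ (i) $\Rightarrow$ (ii) $\Rightarrow$ (iii) $\Rightarrow$ (iv), keeping track of constants at each step. Each constant should change by at most a factor depending only on $\theta$. Throughout I read (ii) as the usual moment growth condition $\E[|X|^q]^{1/q} \lesssim K_2 q^\theta$. I read (iii) as holding for $\lambda$ in a range $0 \le \lambda \le 1/K_3$, as in \citet{Vladimirova20}. An unrestricted-$\lambda$ version cannot hold with a linear exponent in general, so I would flag this interpretation explicitly.

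\emph{(iv) $\Rightarrow$ (i).} The map $u \mapsto \exp((u/K_4)^{1/\theta})$ is increasing. Markov's inequality therefore gives
\[
\prob(|X|\ge x) = \prob\bigl(e^{(|X|/K_4)^{1/\theta}} \ge e^{(x/K_4)^{1/\theta}}\bigr) \le 2 e^{-(x/K_4)^{1/\theta}}.
\]
This already yields $K_1 = K_4$, the identity claimed in the statement.

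\emph{(i) $\Rightarrow$ (ii).} I would use the layer-cake formula
\[
\E|X|^q = \int_0^\infty q x^{q-1}\prob(|X|\ge x)\,dx \le 2q\int_0^\infty x^{q-1} e^{-(x/K_1)^{1/\theta}}dx .
\]
The substitution $u = (x/K_1)^{1/\theta}$ turns the right-hand side into $2q\theta K_1^q \Gamma(\theta q)$. The bound $\Gamma(\theta q) \le (\theta q)^{\theta q}$ holds for $\theta q \ge 1$; smaller $\theta q$ is bounded by a $\theta$-dependent constant. Taking $q$-th roots then gives
\[
(\E|X|^q)^{1/q} \le (2q\theta)^{1/q}\theta^\theta K_1 q^\theta \le C_\theta K_1 q^\theta,
\]
since $(2q\theta)^{1/q}$ is bounded uniformly over $q>1$.

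\emph{(ii) $\Rightarrow$ (iii).} I would expand
\[
\E\exp((\lambda|X|)^{1/\theta}) = \sum_{k\ge0} \frac{\lambda^{k/\theta}\E|X|^{k/\theta}}{k!}.
\]
For indices with $k/\theta > 1$, plug in (ii) to get $\E|X|^{k/\theta} \le (K_2 k/\theta)^{k}$ up to $\theta$-constants, and use $k! \ge (k/e)^k$. Each such term is then at most $(e(\lambda K_2)^{1/\theta}/\theta)^k$. Finitely many small indices with $k/\theta \le 1$ are handled by Lyapunov's inequality from the $q\downarrow 1$ moment. For $\lambda \le c_\theta/K_2$ the series is geometric and bounded by $1 + C(\lambda K_2)^{1/\theta} \le \exp(C(\lambda K_2)^{1/\theta})$, which is (iii) with $K_3 = C_\theta K_2$.

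\emph{(iii) $\Rightarrow$ (iv).} Choose $\lambda = (\ln 2)^\theta/K_3$, which lies in the admissible range (shrink by a $\theta$-factor if needed). Then (iii) gives $\E\exp((\lambda|X|)^{1/\theta}) \le 2$, so (iv) holds with $K_4 = K_3/(\ln 2)^\theta$.

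I expect the main obstacle to be the step (ii) $\Rightarrow$ (iii). The series bookkeeping with non-integer exponents $k/\theta$, the small-moment terms, and pinning down the admissible range of $\lambda$ are where the $\theta$-dependent constants enter and where the statement as written needs care.
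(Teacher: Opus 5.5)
Your proof is correct. The paper gives no proof of its own; it only cites \citet{Vladimirova20}. Your cycle (iv)$\Rightarrow$(i)$\Rightarrow$(ii)$\Rightarrow$(iii)$\Rightarrow$(iv) is the standard argument behind that characterization: Markov's inequality, then the layer-cake formula with a Gamma integral, then a Taylor expansion of $\exp((\lambda|X|)^{1/\theta})$ with $k!\ge(k/e)^k$, and finally the choice $\lambda=(\ln 2)^\theta/K_3$.

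Your reinterpretations are in fact forced, not optional:
\begin{itemize}
\item As literally written, (ii) says $\E|X|^q\le K_2 q^{\theta q}$. This condition is not invariant under $X\mapsto aX$. It fails for $|X|=KY^\theta$ with $Y\sim\mathrm{Exp}(1)$ once $K(\theta/e)^\theta>1$, because then $\E|X|^q/q^{\theta q}=K^q\Gamma(\theta q+1)/q^{\theta q}\to\infty$, even though (i) and (iv) hold for this $X$. It is worth stating this explicitly to justify your reading of (ii).
\item Likewise, (iii) needs the restriction $0<\lambda\le 1/K_3$, as you noted.
\end{itemize}

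The claimed identity $K_1=K_4$ also holds only in the direction (iv)$\Rightarrow$(i), which is what you prove and what the paper later uses. The converse fails: if $\prob(|X|\ge x)=\min\{1,2e^{-(x/K)^{1/\theta}}\}$, then $\E\exp((|X|/K)^{1/\theta})=\infty$.

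One small slip: under your reading, (ii) gives $\E|X|^{k/\theta}\le K_2^{k/\theta}(k/\theta)^k$, not $(K_2k/\theta)^k$. Your per-term bound $(e(\lambda K_2)^{1/\theta}/\theta)^k$ is nonetheless the correct consequence, so nothing downstream changes.
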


\begin{lem}\label{lem:sub-weibull}
	If $X \sim \sfsame{subW}(\theta, \sigma)$, then
	\begin{enumerate}[(i)]
        \item $\prob\left( |X| \leq \sigma \log^\theta(2/\delta) \right) \geq 1 - \delta$,

        \item $\E[X^2] \leq 2 \Gamma(2 \theta + 1) \sigma^2$.
	\end{enumerate}
\end{lem}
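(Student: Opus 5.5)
For part (i) we work from the tail characterization in Theorem~\ref{thm:sub-weibull}. Suppose $X \sim \sfsame{subW}(\theta,\sigma)$, so that $\E[\exp((|X|/\sigma)^{1/\theta})] \leq 2$. Apply Markov's inequality to the nonnegative random variable $\exp((|X|/\sigma)^{1/\theta})$. For any $x \geq 0$ this gives
\[
\prob(|X| \geq x) = \prob\left(\exp((|X|/\sigma)^{1/\theta}) \geq \exp((x/\sigma)^{1/\theta})\right) \leq 2\exp(-(x/\sigma)^{1/\theta}).
\]
This is the claim $K_1 = K_4 = \sigma$ of Theorem~\ref{thm:sub-weibull}, proved directly. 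Now set $x = \sigma \log^\theta(2/\delta)$. Then $(x/\sigma)^{1/\theta} = \log(2/\delta)$, so the right-hand side equals $\delta$. Taking complements yields $\prob(|X| \leq \sigma\log^\theta(2/\delta)) \geq 1-\delta$, since $\{|X| < x\} \subseteq \{|X| \leq x\}$. The logarithm is positive because $\delta \in (0,1)$ gives $2/\delta > 2$.

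For part (ii) we use the layer-cake formula $\E[X^2] = \int_0^\infty \prob(X^2 \geq s)\,ds = \int_0^\infty 2x\,\prob(|X| \geq x)\,dx$. Inserting the tail bound from part (i) gives
\[
\E[X^2] \leq \int_0^\infty 4x \exp(-(x/\sigma)^{1/\theta})\,dx .
\]
We evaluate this with the substitution $u = (x/\sigma)^{1/\theta}$, that is, $x = \sigma u^\theta$ and $dx = \sigma\theta u^{\theta-1}\,du$. The integral becomes
\[
4\sigma^2 \theta \int_0^\infty u^{2\theta-1} e^{-u}\,du = 4\sigma^2\theta\,\Gamma(2\theta) = 2\sigma^2\,\Gamma(2\theta+1),
\]
where the last step uses $2\theta\,\Gamma(2\theta) = \Gamma(2\theta+1)$. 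This is exactly the stated bound.

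Both parts are routine. The only step needing care is the Gamma-function bookkeeping in the substitution, so that the constant comes out as $2\Gamma(2\theta+1)$ rather than an off-by-factor variant. Tonelli's theorem justifies the layer-cake identity for nonnegative integrands, so no integrability issues arise.
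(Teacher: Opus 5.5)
Your proof is correct and follows the route the paper implicitly relies on. The paper states this lemma without proof and leans on Theorem~\ref{thm:sub-weibull} with $K_1 = K_4 = \sigma$; your Markov-inequality step is a direct verification of that tail bound $\prob(|X|\geq x)\leq 2\exp(-(x/\sigma)^{1/\theta})$, and integrating it via the layer-cake formula and the substitution $u=(x/\sigma)^{1/\theta}$ gives exactly $4\sigma^2\theta\,\Gamma(2\theta) = 2\Gamma(2\theta+1)\sigma^2$, so both parts come out with the stated constants.
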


\subsection{Inequalities and lemmas}

\begin{lem}[Smoothness inequality]\label{lem:smooth}
	If Assumption~\ref{asmp:smooth} holds, then
	\[
		f(x) \leq f(y) + \braket{x - y, \nabla f(y)} + \frac{L}{2} \|x - y\|^2
	\]
	for all $x,y \in \reals^d$.
\end{lem}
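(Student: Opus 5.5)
The statement is the smoothness inequality (Lemma~\ref{lem:smooth}). I will prove it by integrating the gradient along the segment from $y$ to $x$ and using Assumption~\ref{asmp:smooth} to control how far the gradient drifts from $\nabla f(y)$.

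Fix $x,y\in\reals^d$ and set $\phi(s) = f(y + s(x-y))$ for $s\in[0,1]$. Since $f$ is differentiable with Lipschitz, hence continuous, gradient, $\phi$ is continuously differentiable with $\phi'(s) = \braket{\nabla f(y+s(x-y)), x-y}$. The fundamental theorem of calculus gives
\[
f(x) - f(y) = \int_0^1 \braket{\nabla f(y+s(x-y)), x-y}\, ds .
\]
Subtracting $\braket{x-y,\nabla f(y)} = \int_0^1 \braket{\nabla f(y), x-y}\,ds$ from both sides yields
\[
f(x) - f(y) - \braket{x-y,\nabla f(y)} = \int_0^1 \braket{\nabla f(y+s(x-y)) - \nabla f(y), x-y}\, ds .
\]

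Next I bound the integrand. By Cauchy--Schwarz it is at most $\|\nabla f(y+s(x-y)) - \nabla f(y)\|\,\|x-y\|$. By the $L$-Lipschitz property in Assumption~\ref{asmp:smooth}, this is at most $L s\|x-y\|^2$. Integrating over $s\in[0,1]$ gives $\int_0^1 Ls\,ds\,\|x-y\|^2 = \frac{L}{2}\|x-y\|^2$, which is exactly the claimed inequality.

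No step is a real obstacle. The only point needing care is justifying the integral representation, which requires $\phi$ to be $C^1$. This holds because $\nabla f$ is continuous, being Lipschitz. The lower bound $f^\star$ in the assumption is not used.
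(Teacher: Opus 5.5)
Your proof is correct: it is the standard argument (integrate $\nabla f$ along the segment from $y$ to $x$, then apply Cauchy--Schwarz and the Lipschitz bound), and you correctly note that continuity of $\nabla f$ justifies the fundamental theorem of calculus step. The paper gives no proof of this lemma and simply cites it as a standard preliminary fact, so there is nothing to compare against.
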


\begin{lem}[Young's inequality]
    \label{lem:young}
	For any pair of vectors $x,y \in \reals^d$,
	\[
		\braket{x,y} \leq \frac{1}{2} \|x\|^2 + \frac{1}{2} \|y\|^2.
	\]
\end{lem}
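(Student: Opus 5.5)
The plan is to derive the inequality from the nonnegativity of a squared norm, so that neither Cauchy--Schwarz nor any earlier result of the paper is needed. Fix arbitrary $x,y \in \reals^d$ and consider the vector $x - y$. Since the Euclidean norm is induced by the inner product $\braket{\cdot,\cdot}$, bilinearity and symmetry of the inner product give
\[
	0 \leq \|x - y\|^2 = \braket{x - y, x - y} = \|x\|^2 - 2\braket{x,y} + \|y\|^2 .
\]
Rearranging yields $2\braket{x,y} \leq \|x\|^2 + \|y\|^2$. Dividing by $2$ gives exactly the claimed bound $\braket{x,y} \leq \frac{1}{2}\|x\|^2 + \frac{1}{2}\|y\|^2$.

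There is essentially no obstacle; the only point to state explicitly is the expansion of $\|x-y\|^2$, which uses symmetry of the real inner product to merge the two cross terms. Equality holds precisely when $x = y$. That fact is not needed for the statement, but it confirms that the constant $\frac12$ cannot be improved.

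For later use in the analysis of Algorithm~\ref{algo:asgd_hom} and Algorithm~\ref{algo:asgd_het}, it is convenient to record the scaled variant: for any $\gamma > 0$,
\[
	\braket{x,y} \leq \frac{\gamma}{2}\|x\|^2 + \frac{1}{2\gamma}\|y\|^2 .
\]
It follows by applying the same argument to the pair $\sqrt{\gamma}\,x$ and $y/\sqrt{\gamma}$. The inner product of this pair equals $\braket{x,y}$, and the two squared norms are $\gamma\|x\|^2$ and $\|y\|^2/\gamma$.
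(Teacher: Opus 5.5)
Your proof is correct: expanding $0 \le \|x-y\|^2 = \|x\|^2 - 2\braket{x,y} + \|y\|^2$ and rearranging is the standard derivation, and the paper states this lemma without proof as a standard preliminary, so your route is the canonical one. The scaled variant with $\gamma$ is also correct, though the paper only ever uses the unscaled form, for instance to get $\|a+b\|^2 \le 2\|a\|^2 + 2\|b\|^2$ and in the delay-error term of the descent lemmas.
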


\begin{lem}[Freedman's inequality]\label{lem:freedman-ineq}
    Consider a martingale difference sequence $\{Z_t\}_{t}$ adapted to a
    filtration $\mathcal{G}_t$ and suppose that $|Z_t| \leq \ell$ almost
    surely. Then with probability at least $1 - \delta$ it holds for any
    $\rho \in (0,1)$ that
	\[
        \sum_{t=0}^{T-1} Z_t \leq \frac{\rho}{\ell} \sum_{t=0}^{T-1} \E [Z_t^2 \mid \mathcal{G}_t] + \frac{\ell}{\rho} \log (1/\delta).
	\]
\end{lem}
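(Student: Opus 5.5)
The plan is the standard exponential-supermartingale (Chernoff) argument, carried out for a fixed $\rho \in (0,1)$ chosen before the data are seen. I will read the quantifier ``for any $\rho$'' as ``for each fixed $\rho$'', since simultaneity over all $\rho$ would need an extra union bound or peeling argument. Throughout I use the indexing convention implicit in the statement: $Z_t$ is $\mathcal{G}_{t+1}$-measurable and $\E[Z_t \mid \mathcal{G}_t] = 0$. Write $V_t = \E[Z_t^2 \mid \mathcal{G}_t]$, which is $\mathcal{G}_t$-measurable, and set $\lambda = \rho/\ell$, so that $|\lambda Z_t| \leq \rho < 1$ almost surely.

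\emph{Step 1: one-step moment generating function bound.} I will use the elementary inequality $e^x \leq 1 + x + x^2$ for $x \leq 1$; it holds up to roughly $x \approx 1.79$. Applying it with $x = \lambda Z_t$ and taking conditional expectations, the linear term vanishes because $Z_t$ is a martingale difference. Then $1 + u \leq e^u$ gives
\[
\E[\exp(\lambda Z_t) \mid \mathcal{G}_t] \leq 1 + \lambda^2 V_t \leq \exp(\lambda^2 V_t).
\]

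\emph{Step 2: supermartingale.} Define $M_k = \exp\bigl(\sum_{t=0}^{k-1} (\lambda Z_t - \lambda^2 V_t)\bigr)$ with $M_0 = 1$. Since $V_k$ is $\mathcal{G}_k$-measurable, Step 1 yields $\E[M_{k+1} \mid \mathcal{G}_k] = M_k\, e^{-\lambda^2 V_k}\, \E[e^{\lambda Z_k} \mid \mathcal{G}_k] \leq M_k$. Hence $(M_k)$ is a nonnegative supermartingale and, by the tower property, $\E[M_T] \leq 1$.

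\emph{Step 3: Markov's inequality and rearrangement.} Markov's inequality gives $\prob(M_T \geq 1/\delta) \leq \delta$. So with probability at least $1-\delta$,
\[
\lambda \sum_{t=0}^{T-1} Z_t \leq \lambda^2 \sum_{t=0}^{T-1} V_t + \log(1/\delta).
\]
Dividing by $\lambda = \rho/\ell$ gives exactly the claimed bound, with coefficients $\rho/\ell$ on the conditional variance sum and $\ell/\rho$ on $\log(1/\delta)$.

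\emph{Main obstacle.} The only delicate points are bookkeeping. First, the measurability convention must be made explicit, so that $V_t$ can be pulled out of the conditional expectation in Step 2. Second, the constraint $\rho < 1$ is exactly what makes $|\lambda Z_t| \leq 1$, which the elementary inequality in Step 1 needs; this is why the statement restricts to $\rho \in (0,1)$. Finally, if a version uniform in $\rho$ were wanted, I would union bound over a geometric grid of $\rho$ values, at the cost of a $\log\log$ factor. The applications in the paper use a single predetermined $\rho$, so this is not needed.
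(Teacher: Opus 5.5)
Your proof is correct. It is the standard argument for this form of Freedman's inequality: $e^x \leq 1 + x + x^2$ for $x \leq 1$, the nonnegative supermartingale $M_k$, and then Markov's inequality. It yields exactly the stated constants, and it in fact needs only $Z_t \leq \ell$ and also allows $\rho = 1$. The paper gives no proof of this lemma; it simply invokes it as a known tool.

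Your two bookkeeping choices are also the right ones.
\begin{itemize}
\item \textbf{Indexing.} As written, the statement conditions on the same $\mathcal{G}_t$ to which $Z_t$ is adapted. Your convention ($Z_t$ is $\mathcal{G}_{t+1}$-measurable with $\E[Z_t \mid \mathcal{G}_t] = 0$) is what the paper actually uses when applying the lemma: there $Z_t$ is $\mathcal{F}_{t+1}$-measurable and its conditional second moment is taken with respect to $\mathcal{F}_t$.
\item \textbf{Fixed $\rho$.} The fixed-$\rho$ reading is sufficient, since the paper only uses $\rho = 3/16$. It is also necessary. A version holding simultaneously for all $\rho \in (0,1)$ would give $\sum_t Z_t \leq 2\sqrt{\log(1/\delta)\sum_t \E[Z_t^2 \mid \mathcal{G}_t]}$ with no $\log\log$ loss. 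That bound fails for large $T$: take a Rademacher walk stopped at its first crossing of $2\sqrt{k\log(1/\delta)}$, which occurs almost surely by the law of the iterated logarithm.
\end{itemize}
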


\section{Proof of Theoretical Results}

For notational brevity, we write $g_t^i = g_t^i(x_t)$, and denote the
expectation and probability conditioned upon the $\sigma$-algebra
$\mathcal{F}_t$ generated by $\{g_s\}_{s=0}^{t-1}$ by $\E_t [\cdot]$ and
$\prob_t \{\cdot\}$, respectively. Note that the sequences $\{x_t\}_t$ and
$\{\tilde x_t\}_t$ are adapted to $\{\mathcal{F}_t\}_t$, and that $f$ and
$\nabla f$ are measurable due to everywhere differentiability of $f$, so $\E_t
[f(\tilde x_t)] = f(\tilde x_t)$ and $\E_t [\nabla f(x_t)] = \nabla f(x_t)$
almost surely. Define the virtual sequence $\{\tilde x_t\}_{t=0}^T$ by
\[
    \tilde x_0 = x_0, \quad \tilde x_1 = x_0 - \eta \sum_{i\in\mathcal{C}_0} g_{0}^i, \quad \tilde x_{t+1} = \tilde x_t - \eta g_t^{i_t},
\]
We first present a key inequality for our analysis, which is essentially a
corollary of \citep[Lemma 1,][]{Mishchenko22}.

\begin{lem}[Virtual iterate bound]\label{lem:virtual-ineq}
	The sequences $\{x_t\}_t$ and $\{\tilde x_t\}_t$ satisfy
	\[
		\|\tilde x_t - x_t\| \leq \eta c \tau_C
	\]
	for all $t = 0,\dots,T-1$.
\end{lem}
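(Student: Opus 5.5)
The approach is to write $\tilde x_t - x_t$ explicitly as a sum of clipped gradient steps, one for each worker currently active, and then bound every term by $\eta c$ using $\|\clip_c(\cdot)\|\le c$.

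\emph{Step 1: an invariant.} I will prove by induction on $t\ge 1$ that
\[
\tilde x_t - x_t = -\eta \sum_{j\in\mathcal{C}_t} g_{s_j}^{j},
\]
where $s_j$ denotes the iteration at which active worker $j$ received its current query point $x_{s_j}$. In the heterogeneous variant, $\mathcal{C}_t$ is replaced by the multiset of scheduled but not yet applied jobs; the argument is identical. The case $t=0$ is trivial since $\tilde x_0=x_0$.

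\emph{Base case $t=1$.} We have $\tilde x_1 = x_0-\eta\sum_{i\in\mathcal C_0} g_0^i$ and $x_1 = x_0-\eta g_0^{i_0}$. Hence $\tilde x_1-x_1 = -\eta\sum_{i\in\mathcal C_0\setminus\{i_0\}} g_0^i - \eta g_1^{j_0}+\eta g_1^{j_0}$. The definition of $\tilde x_t$ has to be read consistently here: each gradient enters the virtual sequence at the moment it is \emph{issued}, and enters the real sequence when it is \emph{applied}. So I would check the indexing of \eqref{eq:virtual-seq} carefully, since the recursion $\tilde x_{t+1}=\tilde x_t-\eta g_t^{i_t}$ is meant to add the gradient just issued at $x_t$ (worker $j_{t-1}$'s job, in the paper's loose notation).

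\emph{Inductive step.} The real update subtracts the finished job $g^{i_t}_{t-\tau_t}$, which is one of the summands in the invariant, so that summand is removed. The virtual update subtracts the newly issued job at $x_{t+1}$, which is precisely the job that the newly activated worker $j_t$ contributes to $\mathcal C_{t+1}$. Since $\mathcal C_{t+1}=(\mathcal C_t\setminus\{i_t\})\cup\{j_t\}$, the invariant is preserved.

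\emph{Conclusion.} With the invariant in hand, the triangle inequality and $\|g\|\le c$ give $\|\tilde x_t-x_t\|\le \eta c|\mathcal C_t|=\eta c\tau_C$. The bound is thus independent of the delays $\tau_t$: however stale a job is, it contributes only one term of norm at most $\eta c$.

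The main obstacle is the bookkeeping in the inductive step. I expect to get an off-by-one mismatch between the gradient labelled $g_t^{i_t}$ in the virtual recursion and the job actually issued at $x_t$. I would resolve this by reindexing jobs by issue time. Each $g$ in the virtual sum then corresponds bijectively to an issued job, and the set of issued-but-unapplied jobs at time $t$ has cardinality exactly $\tau_C$, or at most $\tau_C$ counting queued jobs in Algorithm~\ref{algo:asgd_het}. This mirrors \citep[Lemma 1]{Mishchenko22}, with $G$ replaced by $c$.
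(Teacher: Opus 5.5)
Your proposal is correct and is essentially the paper's argument: both identify $\tilde x_t - x_t$ with $-\eta$ times the sum of clipped gradients that have been issued but not yet applied, at most $\tau_C$ of them, each of norm at most $c$. The paper writes $\tilde x_t$ and $x_t$ as explicit sums and compares them directly rather than inducting. On the off-by-one you anticipate: with \eqref{eq:virtual-seq} as written, $\tilde x_t$ contains only the jobs issued at $x_0,\dots,x_{t-1}$, so your invariant should sum over $\mathcal{C}_t$ minus the job just issued at $x_t$ (only $\tau_C-1$ terms), which leaves the conclusion unchanged.
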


\begin{proof}
    % Let $e_t = x_t - \tilde x_t$, and note that
    % \[
    %     e_t = e_{t-1} - \eta g_{t - \tau_t} + \eta g_t = \dots = e_0 - \eta \sum_{s=1}^{t} g_{s - \tau(s)} + \eta \sum_{s=0}^{t-1} g_t
    % \]
    We have that $\tilde x_t = x_0 - \eta \sum_{s=1}^{t-1} g_s - \eta
    \sum_{i=1}^{n} g_0^i$ and $x_t = x_0 - \eta \sum_{s=1}^{t-1} g_{s -
    \tau(s)}^{i(s)}$. But all clipped gradients apart from the $\tau_C$ clipped
    gradients being computed at iteration $t$ are included in the sum
    $\sum_{s=1}^{t-1} g_{s-\tau(s)}^{i(s)}$, hence
	\[
        \|\tilde x_t - x_t\| = \eta \left\| \sum_{s=1}^{t-1} g_s^{i(s)} - \sum_{s=1}^{t-1} g_{s - \tau(s)}^{i(s)} + \sum_{i\in\mathcal{C}_0} g_0^i \right\| \leq \eta c \tau_C,
	\]
	noting that $\|g_s\| \leq c$ for all $s$.
\end{proof}

Central to our analysis is bounding the squared norm error between the clipped
stochastic gradient and the full gradient, when the full gradient is small.

\begin{lem}[Bias and moment bounds]\label{lem:clip-error}
    Suppose Assumptions~\ref{asmp:smooth}, \ref{asmp:sub-weibull} and
    \ref{asmp:het} hold. Then 
    \begin{equation}\label{eq:grad-error}
        \E_t \|\nabla F_{i_t}(x_t, \xi_t^{i_t}) - \nabla f(x_t)\|^2 \leq 4 \sigma^2 \Gamma(2 \theta + 1) + 2 \zeta^2.
    \end{equation}
    Additionally, if $\|\nabla f(x_t)\| < c / 2$ and $c > 2 \zeta$, then
    \begin{equation}\label{eq:clip-error}
        \|\E_t g_t^{i_t} - \clip_c(\nabla f(x_t))\|^2 \leq \left( 8 \sigma^2 \Gamma(2 \theta + 1) + 4 \zeta^2 \right) \exp \left( - \left( \frac{c - 2 \zeta}{4 \sigma} \right)^\frac{1}{\theta} \right),
	\end{equation}
	and
	\begin{equation}\label{eq:squared-clip}
        \E_t \|g_t^{i_t}\|^2 \leq 24 \sigma^2 \Gamma(2 \theta + 1) + 8 \zeta^2 + 2 \|\nabla f(x_t)\|^2.
	\end{equation}
\end{lem}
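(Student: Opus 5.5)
We prove the three bounds of Lemma~\ref{lem:clip-error} in order, working conditionally on $\mathcal{F}_t$, so that $x_t$ is fixed and only $i_t$ (uniform over $[n]$ in the heterogeneous case, or a fixed index in the homogeneous case with $\zeta=0$) and $\xi_t^{i_t}$ are random. Write $\nabla F = \nabla F_{i_t}(x_t,\xi_t^{i_t})$, $X = \|\nabla F - \nabla f_{i_t}(x_t)\|$ and $b = \nabla f_{i_t}(x_t) - \nabla f(x_t)$, so that $\|b\| \le \zeta$ by Assumption~\ref{asmp:het}.

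\textbf{Bound \eqref{eq:grad-error}.} Use $\|u+v\|^2 \le 2\|u\|^2 + 2\|v\|^2$ to get
\[
\|\nabla F - \nabla f(x_t)\|^2 \le 2X^2 + 2\|b\|^2 .
\]
By Assumption~\ref{asmp:sub-weibull}, $X \sim \sfsame{subW}(\theta,\sigma)$ conditionally on $i_t$, so Lemma~\ref{lem:sub-weibull}(ii) gives $\E X^2 \le 2\Gamma(2\theta+1)\sigma^2$. Taking $\E_t$ yields $4\sigma^2\Gamma(2\theta+1) + 2\zeta^2$.

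\textbf{Bound \eqref{eq:clip-error}.} Since $\|\nabla f(x_t)\| < c/2 < c$, we have $\clip_c(\nabla f(x_t)) = \nabla f(x_t)$. Unbiasedness and uniform sampling give $\E_t \nabla F = \nabla f(x_t)$, so
\[
\E_t g_t^{i_t} - \clip_c(\nabla f(x_t)) = \E_t\bigl[\clip_c(\nabla F) - \nabla F\bigr].
\]
This difference vanishes unless $\|\nabla F\| > c$. On that event, $\|\clip_c(\nabla F) - \nabla F\| = \|\nabla F\| - c \le \|\nabla F - \nabla f(x_t)\|$, where the last step uses $\|\nabla f(x_t)\| < c$. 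Jensen and Cauchy--Schwarz then give
\[
\|\E_t[\cdot]\|^2 \le \E_t \|\nabla F - \nabla f(x_t)\|^2 \cdot \prob_t(\|\nabla F\| > c).
\]
The first factor is at most twice the right-hand side of \eqref{eq:grad-error}, which accounts for the constant $8\sigma^2\Gamma(2\theta+1)+4\zeta^2$. For the probability, the triangle inequality shows that $\|\nabla F\| > c$ forces
\[
X > c - \|\nabla f(x_t)\| - \zeta > c/2 - \zeta = \frac{c - 2\zeta}{2},
\]
which is positive because $c > 2\zeta$. The tail bound from Theorem~\ref{thm:sub-weibull}(i) with $K_1 = \sigma$, or directly Markov's inequality on $\exp((X/\sigma)^{1/\theta})$, gives
\[
\prob\Bigl(X > \tfrac{c-2\zeta}{2}\Bigr) \le 2\exp\Bigl(-\bigl(\tfrac{c-2\zeta}{2\sigma}\bigr)^{1/\theta}\Bigr).
\]

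\textbf{Main obstacle.} The difficulty is matching the stated constant: $\exp(-((c-2\zeta)/(4\sigma))^{1/\theta})$ with no prefactor $2$. Applying Cauchy--Schwarz to the tail naively leaves a factor $2$ in front of the exponential. I would first try to absorb it into the slack between $2\sigma$ and $4\sigma$ in the exponent. If that fails for small arguments, note that when the exponent is at most $\log 2$ the claimed bound already dominates the trivial bound $\E_t\|\nabla F - \nabla f(x_t)\|^2$; outside that regime, $2e^{-(2a)^{1/\theta}} \le e^{-a^{1/\theta}}$ with $a = (c-2\zeta)/(4\sigma)$.

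\textbf{Bound \eqref{eq:squared-clip}.} Since clipping is a projection onto a ball containing $0$, $\|g_t^{i_t}\| \le \|\nabla F\|$. Then
\[
\|\nabla F\|^2 \le 2\|\nabla F - \nabla f(x_t)\|^2 + 2\|\nabla f(x_t)\|^2,
\]
and applying \eqref{eq:grad-error} gives the claim with room to spare: the constants $24$ and $8$ are looser than what this argument produces.
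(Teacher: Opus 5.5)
Your proofs of \eqref{eq:grad-error} and \eqref{eq:clip-error} follow the paper's argument step for step: the Young split with Lemma~\ref{lem:sub-weibull}(ii); then Jensen, $\|\nabla F\| - c \le \|\nabla F - \nabla f(x_t)\|$ on the clipping event, Cauchy--Schwarz, and the sub-Weibull tail at level $c/2-\zeta$. However, your ``main obstacle'' is not real. It comes from a bookkeeping slip. The first factor $\E_t\|\nabla F - \nabla f(x_t)\|^2$ is bounded by the right-hand side of \eqref{eq:grad-error} itself, i.e.\ by $4\sigma^2\Gamma(2\theta+1)+2\zeta^2$, not by twice it. The prefactor $2$ from the tail bound is exactly what produces $8\sigma^2\Gamma(2\theta+1)+4\zeta^2$. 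The exponent is then relaxed using $(c-2\zeta)/(2\sigma) \ge (c-2\zeta)/(4\sigma)$. This is precisely how the paper obtains the stated constant, and no case split is needed.

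You should drop the proposed patch in any case, because it is false as stated. The inequality $2e^{-(2a)^{1/\theta}} \le e^{-a^{1/\theta}}$ is equivalent to $a^{1/\theta}(2^{1/\theta}-1) \ge \log 2$. This does not follow from $a^{1/\theta} > \log 2$ when $\theta > 1$: for $\theta = 2$ and $a = 1$ you get $2e^{-\sqrt{2}} \approx 0.49 > e^{-1} \approx 0.37$. Had the obstacle been genuine, your patch would have left an intermediate regime uncovered.

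For \eqref{eq:squared-clip} your route is genuinely different from the paper's, and better. The paper splits on the event $\{\|\nabla F\| \ge c\}$ and bounds $\E_t\|g_t^{i_t}\|^2 \le c^2\,\prob_t\{\|\nabla F\| \ge c\} + \E_t\|\nabla F\|^2$. It then controls the first term by Markov's inequality, which needs $\|\nabla f(x_t)\| < c/2$. This effectively pays for the clipping event twice. You instead use $\|\clip_c(v)\| \le \|v\|$ directly. That gives $8\sigma^2\Gamma(2\theta+1)+4\zeta^2+2\|\nabla f(x_t)\|^2$ without using either hypothesis of the lemma.

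The difference matters. In the paper's Markov step, $4\,(4\sigma^2\Gamma(2\theta+1)+2\zeta^2) = 16\sigma^2\Gamma(2\theta+1)+8\zeta^2$, not $16\sigma^2\Gamma(2\theta+1)+4\zeta^2$ as written. So the paper's own chain actually yields $12\zeta^2$ rather than the stated $8\zeta^2$. Your argument establishes the stated bound cleanly, with room to spare.
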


\begin{proof}
    For \eqref{eq:grad-error}, we first apply Young's inequality 
    \[
        \E_t \|\nabla F_{i_t}(x_t, \xi_t^{i_t}) - \nabla f(x_t)\|^2 \leq 2 \E_t \|\nabla F_{i_t}(x_t, \xi_t^{i_t}) - \nabla f_{i_t}(x_t)\|^2 + 2 \E_t \|\nabla f_{i_t}(x_t) - \nabla f(x_t)\|^2.
    \] 
    Then using (ii) from Lemma~\ref{lem:sub-weibull} on the first term, and
    Assumption~\ref{asmp:het} on the second, we get the stated bound
    \eqref{eq:grad-error}.

    \paragraph{Bounding $\prob_t\{\|\nabla F_{i_t}(x_t, \xi_t^{i_t})\| \geq
    c\}$.} Define the indicator function $\chi_t = \mathbf{1} \{\|\nabla
    F_{i_t}(x_t, \xi_t^{i_t})\| \geq c\}$. We have that
	\[
		\begin{aligned}
            \prob_t \{\chi_t = 1\} & \leq \prob_t \left\{ \|\nabla F_{i_t}(x_t, \xi_t^{i_t}) - \nabla f_{i_t} (x_t)\| + \|\nabla f_{i_t}(x_t) \| > c \right\} \\
                                   & \leq \prob_t \left\{ \|\nabla F_{i_t}(x_t, \xi_t^{i_t}) - \nabla f_{i_t} (x_t)\| + \|\nabla f_{i_t}(x_t) - \nabla f(x_t) \| + \|\nabla f(x_t)\| > c \right\} \\
			                       & \leq \prob_t \left\{ \|\nabla F_{i_t}(x_t, \xi_t^{i_t}) - \nabla f_{i_t} (x_t)\| > \frac{c}{2} - \zeta \right\}.
		\end{aligned}
	\]
    Applying the sub-Weibull concentration inequality (i) in
    Theorem~\ref{thm:sub-weibull}, we get the bound 
    \begin{equation}\label{eq:conc-ineq}
        \prob_t \{\chi_t = 1\} \leq 2 \exp \left( - \left( \frac{c - 2\zeta}{4 \sigma} \right)^\frac{1}{\theta} \right).
    \end{equation}
    This yields the exponential term in \eqref{eq:clip-error}. For the
    polynomial bound, we bound the probability as 
    \[
		\begin{aligned}
            \prob_t \{\chi_t = 1\} & \leq \prob_t \left\{ \|\nabla F_{i_t}(x_t, \xi_t^{i_t}) - \nabla f (x_t)\| + \|\nabla f(x_t) \| > c \right\} \\
                                   & \leq \prob_t \left\{ \|\nabla F_{i_t}(x_t, \xi_t^{i_t}) - \nabla f (x_t)\| > \frac{c}{2} \right\}.
		\end{aligned}
	\]
    and apply Markov's inequality to get
    \begin{equation}\label{eq:markov-conc}
        \prob_t \{\chi_t = 1\} \leq \frac{4 \E_t\|\nabla F_{i_t}(x_t, \xi_t^{i_t}) - \nabla f(x_t)\|^2}{c^2} \leq \frac{16 \Gamma(2 \theta + 1) \sigma^2 + 4 \zeta^2}{c^2}.
    \end{equation}

    \paragraph{Bias bound.} Turning to the quantity $\|\E_t g_t^{i_t} -
    \clip_c(\nabla f(x_t))\|^2$, we first note that $\clip_c(\nabla f(x_t)) =
    \nabla f(x_t) = \E_t [\nabla F_{i_t}(x_t,\xi_t^{i_t})]$ since we have
    assumed $\|\nabla f(x_t)\| < c/2$. Thus, we have that 
	\[
		\begin{aligned}
			\|\E_t g_t^{i_t} - \nabla f(x_t)\|^2 & = \left\| \E_t \left[ \left( 1 - \frac{c}{\|\nabla F_{i_t}(x_t, \xi_t^{i_t})\|} \nabla F_{i_t}(x_t, \xi_t^{i_t}) \right) \chi_t \right] \right\|^2     \\
			                                     & \leq \E_t \left[ \left\| \left( 1 - \frac{c}{\|\nabla F_{i_t}(x_t, \xi_t^{i_t})\|} \right) \nabla F_{i_t}(x_t, \xi_t^{i_t})  \right\| \chi_t \right]^2 \\
                                                 & = \E_t \left[  \left| c - \|\nabla F_{i_t}(x_t, \xi_t^{i_t})\| \right| \chi_t \right]^2 \\
                                                 & \leq \E_t \left[  \left| \|\nabla f(x_t)\| - \|\nabla F_{i_t}(x_t, \xi_t^{i_t})\| \right| \chi_t \right]^2.
		\end{aligned}
	\]
    Now, using the reverse triangle inequality followed by the Cauchy-Schwarz
    inequality,
 	\[
		\begin{aligned}
             \E_t \left[  \left| \|\nabla f(x_t)\| - \|\nabla F_{i_t}(x_t, \xi_t^{i_t})\| \right| \chi_t \right]^2 & \leq \E_t \left[  \|\nabla f(x_t) - \nabla F_{i_t}(x_t, \xi_t^{i_t})\|  \chi_t \right]^2                                                      \\
                                                 & \leq \E_t [\|\nabla f(x_t) - \nabla F_{i_t}(x_t, \xi_t^i)\|^2] \E_t[\chi_t^2]                                                                      \\
			                                     & \leq (4 \sigma^2 \Gamma(2 \theta + 1) + 2 \zeta^2) \E_t[\chi_t^2]                                                                                        \\
			                                     & = (4 \sigma^2 \Gamma(2 \theta + 1) + 2 \zeta^2) \prob_t\{\chi_t = 1\},
		\end{aligned}
	\]
    where we applied \eqref{eq:grad-error} and in the last inequality we used
    (ii) in Lemma~\ref{lem:sub-weibull}. We arrive at the stated result
    \eqref{eq:clip-error} by applying \eqref{eq:conc-ineq}.

    \paragraph{Second moment bound.} Turning to $\E_t \|g_t^{i_t}\|^2$, we have
    that
	\[
		\begin{aligned}
			\E_t \|g_t^{i_t}\|^2 & = \E_t [\chi_t \|g_t^{i_t}\|^2 + (1 - \chi_t) \|g_t^{i_t}\|^2] \\
                           & = \prob_t(\chi_t = 1) \E_t [\|g_t^{i_t}\|^2 \mid \chi_t = 1] + \E_t [(1 - \chi_t) \|g_t^{i_t}\|^2] \\
			               & = \prob_t(\chi_t = 1) c^2 + \E_t \left[ (1 - \chi_t) \|\nabla F_{i_t}(x_t, \xi_t^{i_t})\|^2 \right] \\
			               & \leq \prob_t(\chi_t = 1) c^2 + \E_t \left[ \|\nabla F_{i_t}(x_t, \xi_t^{i_t})\|^2 \right].
		\end{aligned}
	\]
    Subsequently, we bound the term $\E_t \left[ \|\nabla F_{i_t}(x_t, \xi_t^{i_t})\|^2
    \right]$ by applying Young's inequality and (ii) in Lemma~\ref{lem:sub-weibull}:
	\[
		\begin{aligned}
			\E_t \left[ \|\nabla F_{i_t}(x_t, \xi_t^{i_t})\|^2 \right] & = \E_t \left[ \|(\nabla F_{i_t}(x_t, \xi_t^{i_t}) - \nabla f(x_t)) + \nabla f(x_t)\|^2 \right] \\
			                                               & \leq 2\E_t \|\nabla F_{i_t}(x_t, \xi_t^{i_t}) - \nabla f(x_t)\|^2 + 2\E_t\|\nabla f(x_t)\|^2   \\
                                                           & \leq 8\Gamma(2 \theta + 1) \sigma^2 + 4 \zeta^2 + 2\|\nabla f(x_t)\|^2.
		\end{aligned}
	\]
    Thus we arrive at \eqref{eq:squared-clip} by applying \eqref{eq:markov-conc}.
\end{proof}

\subsection{Convergence in expectation}

We begin with the main descent lemma we will use in the analysis of convergence
in expectation.

\begin{lem}
	If $f$ is $L$-smooth and $\alpha_t = \min \left\{ 1, \frac{c}{\|\nabla
			f(x_t)\|} \right\}$, then
	\begin{equation}\label{eq:main-ineq}
        \E_t f(\tilde x_{t+1}) - f(\tilde x_t) \leq - \frac{\eta \alpha_t}{2} \|\nabla f(x_t)\|^2 + \frac{\eta}{2 \alpha_t} \|\E_t g_t^{i_t} - \clip_c(\nabla f(x_t))\|^2 + \frac{\eta^2 L}{2} (\eta c^2 \tau_C^2 L + \E_t \| g_t^{i_t} \|^2).
	\end{equation}
\end{lem}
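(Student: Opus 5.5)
The plan is to run a one-step descent argument on the virtual sequence $\tilde x_t$ of \eqref{eq:virtual-seq}. Gradients are then evaluated at the true iterate $x_t$, and the mismatch is controlled with Lemma~\ref{lem:virtual-ineq}. Since $\tilde x_{t+1} - \tilde x_t = -\eta g_t^{i_t}$, Lemma~\ref{lem:smooth} with $x = \tilde x_{t+1}$ and $y = \tilde x_t$ gives
\[
f(\tilde x_{t+1}) \leq f(\tilde x_t) - \eta \braket{\nabla f(\tilde x_t), g_t^{i_t}} + \frac{\eta^2 L}{2} \|g_t^{i_t}\|^2 .
\]
Taking $\E_t$ and using that $\tilde x_t, x_t$ are $\mathcal{F}_t$-measurable, the last term becomes the $\frac{\eta^2 L}{2}\E_t\|g_t^{i_t}\|^2$ appearing in \eqref{eq:main-ineq}. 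It remains to bound $-\eta\braket{\nabla f(\tilde x_t), \E_t g_t^{i_t}}$.

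\textbf{Step 1 (main term).} Write $\nabla f(\tilde x_t) = \nabla f(x_t) + d_t$ with $d_t = \nabla f(\tilde x_t) - \nabla f(x_t)$. By Assumption~\ref{asmp:smooth} and Lemma~\ref{lem:virtual-ineq}, $\|d_t\| \leq L\eta c \tau_C$. For the $\nabla f(x_t)$ part, write $\E_t g_t^{i_t} = \clip_c(\nabla f(x_t)) + e_t$, where $e_t = \E_t g_t^{i_t} - \clip_c(\nabla f(x_t))$. Because $\clip_c(\nabla f(x_t)) = \alpha_t \nabla f(x_t)$, we get $-\eta\braket{\nabla f(x_t), \clip_c(\nabla f(x_t))} = -\eta\alpha_t\|\nabla f(x_t)\|^2$. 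Young's inequality (Lemma~\ref{lem:young}) applied to $\sqrt{\alpha_t}\nabla f(x_t)$ and $e_t/\sqrt{\alpha_t}$ then gives
\[
-\eta\braket{\nabla f(x_t), e_t} \leq \frac{\eta\alpha_t}{2}\|\nabla f(x_t)\|^2 + \frac{\eta}{2\alpha_t}\|e_t\|^2 .
\]
Together these produce exactly the first two terms of \eqref{eq:main-ineq}.

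\textbf{Step 2 (staleness term).} The remaining term is $-\eta\braket{d_t, g_t^{i_t}}$ (before or after $\E_t$). The target is $\frac{\eta^2 L}{2}\cdot \eta c^2\tau_C^2 L = \frac{\eta^3 L^2 c^2 \tau_C^2}{2}$. This matches $\frac{\eta}{2}\|d_t\|^2 \leq \frac{\eta}{2}(L\eta c\tau_C)^2$, i.e. one half of a Young split of $\eta\braket{d_t,g}$ with unit weight. I would therefore try Young in the form
\[
-\eta\braket{d_t,\E_t g_t^{i_t}} \leq \frac{\eta}{2}\|d_t\|^2 + \frac{\eta}{2}\|\E_t g_t^{i_t}\|^2 .
\]
The leftover $\frac{\eta}{2}\|\E_t g_t^{i_t}\|^2$ must then be absorbed into the negative part. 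To do this I would redo Step 1 without spending all of $\eta\alpha_t\|\nabla f(x_t)\|^2$. Concretely, I would expand $-\eta\braket{\nabla f(x_t) + d_t, \E_t g_t^{i_t}}$ via the polarization identity
\[
-\braket{a,b} = \tfrac12\|a-b\|^2 - \tfrac12\|a\|^2 - \tfrac12\|b\|^2 .
\]
The $-\frac{\eta}{2}\|\E_t g_t^{i_t}\|^2$ it creates would cancel the leftover term from the Young split of $\eta\braket{d_t,\E_t g_t^{i_t}}$. The remaining $\frac{\eta}{2}\|\nabla f(x_t) - \E_t g_t^{i_t}\|^2$ would then be split into the $\alpha_t$-weighted pieces of Step 1.

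\textbf{Main obstacle.} The difficulty is entirely in the constant bookkeeping of Step 2. The naive Young split of the staleness term either doubles the coefficient of $\E_t\|g_t^{i_t}\|^2$ or consumes the $-\frac{\eta\alpha_t}{2}\|\nabla f(x_t)\|^2$ margin. The stated form requires that the $\frac{\eta}{2}\|\E_t g_t^{i_t}\|^2$ produced by Young be cancelled by the $-\frac{\eta}{2}\|\E_t g_t^{i_t}\|^2$ from the polarization identity. It must also keep the $\alpha_t$-scaling intact when $\|\nabla f(x_t)\| > c$, where $\alpha_t < 1$ and $\clip_c(\nabla f(x_t)) \neq \nabla f(x_t)$.

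I would first verify this cancellation in the unclipped case $\alpha_t = 1$. If it goes through, I would then handle $\alpha_t < 1$ using $\|\clip_c(\nabla f(x_t))\| \leq \|\nabla f(x_t)\|$ and $\braket{\nabla f(x_t), \clip_c(\nabla f(x_t))} = \alpha_t\|\nabla f(x_t)\|^2$.
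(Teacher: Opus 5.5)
Your corrected plan (unit-weight Young on the staleness term $-\eta\braket{d_t,\E_t g_t^{i_t}}$ with Lemma~\ref{lem:virtual-ineq}, plus a polarization identity on $-\eta\braket{\nabla f(x_t),\E_t g_t^{i_t}}$ whose $-\|\E_t g_t^{i_t}\|^2$ term absorbs the Young leftover) is exactly the paper's proof; note that your original Step~1 Young split loses precisely $\frac{\eta}{2\alpha_t}\|\E_t g_t^{i_t}\|^2$, which is the term needed to pay for that leftover. The $\alpha_t<1$ case you left open closes in one line, as in the paper: write $-\eta\braket{\nabla f(x_t),\E_t g_t^{i_t}}=-\frac{\eta}{\alpha_t}\braket{\clip_c(\nabla f(x_t)),\E_t g_t^{i_t}}$ and polarize with weight $1/\alpha_t$, which gives the first two terms of \eqref{eq:main-ineq} plus $-\frac{\eta}{2\alpha_t}\|\E_t g_t^{i_t}\|^2$, and this dominates $\frac{\eta}{2}\|\E_t g_t^{i_t}\|^2$ because $\alpha_t\le 1$ (equivalently, your unit-weight remainder $-\frac{\eta}{2}\|\nabla f(x_t)\|^2+\frac{\eta}{2}\|\nabla f(x_t)-\E_t g_t^{i_t}\|^2$ equals those two terms minus $\frac{\eta(1-\alpha_t)}{2\alpha_t}\|\E_t g_t^{i_t}\|^2$).
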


\begin{proof}
	Since $f$ is $L$-smooth we have that
	\begin{equation}\label{eq:smooth}
		\begin{aligned}
			f(\tilde x_{t+1}) & \leq f(\tilde x_t) + \braket{\tilde x_{t+1} - \tilde x_t, \nabla f(\tilde x_t)} + \frac{L}{2} \|\tilde x_{t+1} - \tilde x_t\|^2 \\
			                  & = f(\tilde x_t) - \eta \braket{g_t^{i_t}, \nabla f(\tilde x_t)} + \frac{\eta^2 L}{2} \|g_t^{i_t}\|^2.
		\end{aligned}
	\end{equation}
	due to Lemma~\ref{lem:smooth}.
	Taking the conditional expectation we have that
	\[
		\E_t f(\tilde x_{t+1}) - f(\tilde x_t) \leq - \eta \braket{\E_t g_t^{i_t}, \nabla f(\tilde x_t)} + \frac{\eta^2 L}{2} \E_t \|g_t^{i_t}\|^2.
	\]
	Turning our attention to the inner product in the right-hand side, we apply
	Young's inequality and $L$-smoothness,
	\[
		\begin{aligned}
			-\eta \braket{\nabla f(\tilde x_t), \E_t g_t} & = - \eta \braket{\nabla f(x_t), \E_t g_t^{i_t}} + \eta \braket{\nabla f(x_t) - \nabla f(\tilde x_t), \E_t g_t^{i_t}}                                                                                                                               \\
			                                              & \leq -\eta \braket{\nabla f(x_t), \E_t g_t^{i_t}} + \frac{\eta}{2}\|\nabla f(x_t) - \nabla f(\tilde x_t)\|^2 + \frac{\eta}{2} \|\E_t g_t^{i_t}\|^2                                                                                                 \\
			                                              & \leq -\eta\braket{\nabla f(x_t), \E_t g_t^{i_t}} + \frac{\eta L^2}{2} \|x_t - \tilde x_t\|^2 + \frac{\eta}{2} \|\E_t g_t^{i_t}\|                                                                                                                   \\
			                                              & = -\frac{\eta}{\alpha_t} \braket{\clip_c(\nabla f(x_t)), \E_t g_t^{i_t}} + \frac{\eta L^2}{2} \|x_t - \tilde x_t\|^2 + \frac{\eta}{2} \|\E_t g_t^{i_t}\|                                                                                                \\
			                                              & = - \frac{\eta \alpha_t}{2} \|\nabla f(x_t)\|^2 - \frac{\eta}{2 \alpha_t} \|\E_t g_t^{i_t}\|^2 + \frac{\eta}{2 \alpha_t} \|\E_t g_t^{i_t} - \clip_c(\nabla f(x_t))\|^2 + \frac{\eta L^2}{2} \|x_t - \tilde x_t\|^2 + \frac{\eta}{2} \|\E_t g_t^{i_t}\|^2 \\
			                                              & \leq - \frac{\eta \alpha_t}{2} \|\nabla f(x_t)\|^2 + \frac{\eta}{2 \alpha_t} \|\E_t g_t^{i_t} - \clip_c(\nabla f(x_t))\|^2 + \frac{\eta L^2}{2} \|x_t - \tilde x_t\|^2, \end{aligned}
	\]
    where the last inequality is due to $\alpha_t \leq 1$, so the two terms
    involving $\|\E_t g_t^{i_t}\|^2$ can be dropped. Applying
    Lemma~\ref{lem:virtual-ineq} we obtain the stated inequality.
\end{proof}

\begin{thm}\label{thm:expectation}
	Suppose Assumptions~\ref{asmp:smooth} and \ref{asmp:sub-weibull} hold. Then
	there exists a constant step size $\eta$ and clipping radius $c$ such that
	for $\epsilon \in (0,1)$, we have $\frac{1}{T} \sum_{t=0}^{T-1}
		\E\|\nabla f(x_t)\| \leq \epsilon$ within
	\begin{equation}
        \widetilde{O} \left( \frac{(\sigma^2 + \zeta^2) L \Delta}{\epsilon^4} + \frac{(\sigma + \zeta) \tau_C L \Delta}{\epsilon^{3}} + \frac{\tau_C L \Delta}{\epsilon^2} \right)
	\end{equation}
	iterations of Algorithm~\ref{algo:asgd_hom}.
\end{thm}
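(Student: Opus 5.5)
Telescope the descent inequality \eqref{eq:main-ineq} along the virtual sequence, split iterations by the size of $\|\nabla f(x_t)\|$, and then tune $c$ and $\eta$.

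\textbf{Step 1: telescoping.} Summing \eqref{eq:main-ineq} over $t=0,\dots,T-1$ and taking total expectations gives a bound with left-hand side $f(\tilde x_0)-\E f(\tilde x_T)$. The special first step of the virtual sequence moves $\tilde x_1$ by at most $\eta c\tau_C$ from $x_0$. This contributes an extra $O(\eta c\tau_C\|\nabla f(x_0)\| + L\eta^2c^2\tau_C^2)$, which is lower order after tuning. So I will bound the left-hand side by $\Delta := f(x_0)-f^\star$ plus that term.

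\textbf{Step 2: case split on the gradient.}
\begin{itemize}
\item If $\|\nabla f(x_t)\|<c/2$, then $\alpha_t=1$. The bias term is bounded by \eqref{eq:clip-error}, which is an exponentially small multiple of $\sigma^2\Gamma(2\theta+1)+\zeta^2$.
\item If $\|\nabla f(x_t)\|\ge c/2$, then $\alpha_t\|\nabla f(x_t)\|^2=\min\{\|\nabla f(x_t)\|^2, c\|\nabla f(x_t)\|\}\ge \tfrac{c}{2}\|\nabla f(x_t)\|$. Here I bound the bias crudely: $\|\E_t g_t^{i_t}-\clip_c(\nabla f(x_t))\|\le 2c$. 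The term $\tfrac{\eta}{2\alpha_t}(2c)^2=2\eta c\|\nabla f(x_t)\|$ is too large as stated. So in this regime I will not use \eqref{eq:main-ineq} verbatim but return to its proof. There I lower-bound $\braket{\nabla f(x_t),\E_t g_t^{i_t}}$ directly, using that the clipped stochastic gradient has a positive projection on $\nabla f(x_t)$ once $\|\nabla f(x_t)\|\gtrsim \sigma\log^\theta(\cdot)+\zeta$. A standard clipped-SGD argument (as in \citealt{Koloskova23}, but with the sub-Weibull tail via Theorem~\ref{thm:sub-weibull}(i)) should give $\braket{\nabla f(x_t),\E_t g_t^{i_t}}\ge \tfrac{c}{8}\|\nabla f(x_t)\|$ for $c\ge C_\theta(\sigma+\zeta)\log^\theta(T)$.
\end{itemize}
In both cases the progress term is at least $\eta\min\{\|\nabla f(x_t)\|^2, c\|\nabla f(x_t)\|\}/8$.

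\textbf{Step 3: collecting terms.} The second-moment term is handled by \eqref{eq:squared-clip}. Its $2\|\nabla f(x_t)\|^2$ part is absorbed for $\eta L\lesssim 1$, or bounded by $c^2$ when the gradient is large. The remaining contributions are per-step $\eta^2L(\sigma^2+\zeta^2)$, the delay term $\eta^3L^2c^2\tau_C^2$, and the exponentially small bias. Dividing by $\eta T$ yields
\[
\frac1T\sum_t \E\min\{\|\nabla f(x_t)\|^2, c\|\nabla f(x_t)\|\} \lesssim \frac{\Delta}{\eta T} + \eta L(\sigma^2+\zeta^2) + \eta^2L^2c^2\tau_C^2 + (\sigma^2+\zeta^2)e^{-((c-2\zeta)/4\sigma)^{1/\theta}}.
\]

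\textbf{Step 4: tuning.} Choose $c\asymp \epsilon + (\sigma+\zeta)\log^\theta(\cdot/\epsilon)$, so that the exponential bias is at most $\epsilon^2$ and $c\ge\epsilon$. The map $\min\{u^2,cu\}$ is then convex-monotone enough (Jensen on $u\mapsto\min\{u^2,cu\}$) to convert the bound into $\frac1T\sum\E\|\nabla f(x_t)\|\le\epsilon$. Take
\[
\eta=\min\Big\{\frac1L,\ \frac{\epsilon^2}{L(\sigma^2+\zeta^2)},\ \frac{\epsilon}{Lc\tau_C}\Big\}
\]
and $T\asymp \Delta/(\eta\epsilon^2)$. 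This gives $L\Delta\big[(\sigma^2+\zeta^2)/\epsilon^4 + c\tau_C/\epsilon^3 + 1/\epsilon^2\big]$. Substituting $c$ produces the $(\sigma+\zeta)\tau_C/\epsilon^3$ and $\tau_C/\epsilon^2$ terms up to logarithms.

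\textbf{Main obstacle.} The hard step is the large-gradient regime of Step 2. Inequality \eqref{eq:main-ineq} alone penalizes bias by $1/\alpha_t$, so one must redo the inner-product lower bound carefully, using tail bounds to show that clipping keeps a constant fraction of the gradient direction.
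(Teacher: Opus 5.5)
Your overall architecture is the paper's. You telescope \eqref{eq:main-ineq} along the virtual sequence, split at $\|\nabla f(x_t)\| = c/2$, use \eqref{eq:clip-error} and \eqref{eq:squared-clip} for small gradients, take $c$ of order $(\sigma+\zeta)\log^\theta(\cdot)$, and tune $\eta$. Your separate treatment of the first virtual step, where $\tilde x_1 - \tilde x_0$ is a sum of $\tau_C$ clipped gradients, is in fact more careful than the paper's telescoping.

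\textbf{The gap.} The gap is in the large-gradient regime, which is exactly the step you call the main obstacle. You bound the bias by $2c$, conclude that \eqref{eq:main-ineq} is too weak, and replace it by a ``standard clipped-SGD argument'' that ``should give'' $\braket{\nabla f(x_t), \E_t g_t^{i_t}} \ge \frac{c}{8}\|\nabla f(x_t)\|$. You never carry this out. The missing observation is that the bias is controlled by the variance, not by $c$. Since $\clip_c$ is a projection onto a ball, it is $1$-Lipschitz, so Jensen's inequality and \eqref{eq:grad-error} give
\[
\|\E_t g_t^{i_t} - \clip_c(\nabla f(x_t))\|^2 \le \E_t\|\nabla F_{i_t}(x_t,\xi_t^{i_t}) - \nabla f(x_t)\|^2 \le V := 4\sigma^2\Gamma(2\theta+1) + 2\zeta^2 .
\]
For $\|\nabla f(x_t)\| \ge c/2$ you have $\frac{1}{2\alpha_t} \le \frac{\|\nabla f(x_t)\|}{c}$ and $\frac{\alpha_t}{2}\|\nabla f(x_t)\|^2 \ge \frac{c}{4}\|\nabla f(x_t)\|$. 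So \eqref{eq:main-ineq}, used verbatim, already yields the progress term $-\frac{\eta c}{4}(1 - 4V/c^2)\|\nabla f(x_t)\| \le -\frac{\eta c}{8}\|\nabla f(x_t)\|$ once $c^2 \ge 8V$. This is what the paper does.

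Your deferred inequality is true and follows from the same bound, $\braket{\nabla f(x_t), \E_t g_t^{i_t}} \ge \alpha_t\|\nabla f(x_t)\|^2 - \sqrt{V}\|\nabla f(x_t)\|$, with no tail bounds and no $\log^\theta T$. However, if you use it in place of the polarization identity, you lose the $-\frac{\eta}{2\alpha_t}\|\E_t g_t^{i_t}\|^2$ term. That term is what cancels the $+\frac{\eta}{2}\|\E_t g_t^{i_t}\|^2$ coming from Young's inequality on the delay error. You would then have to bound the delay error by Cauchy--Schwarz, $\eta^2 L c^2\tau_C \le 2\eta^2 L c\tau_C\|\nabla f(x_t)\|$, and absorb it using $\eta \lesssim 1/(L\tau_C)$.

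\textbf{Two smaller repairs.}
\begin{enumerate}
\item The map $u \mapsto \min\{u^2, cu\}$ is not convex: its slope drops from $2c$ to $c$ at $u = c$. So Jensen does not apply to it directly. Use instead the convex increasing minorant $h(u) = u^2/2$ for $u \le c$ and $h(u) = cu - c^2/2$ for $u \ge c$. This gives $\frac{1}{T}\sum_t \E\|\nabla f(x_t)\| \le \epsilon$ whenever your right-hand side is at most $\epsilon^2/2$ and $c \ge \epsilon$. Alternatively, split the average as the paper does: apply Cauchy--Schwarz on the small-gradient part and divide by $c$ on the rest.
\item Bounding $\E_t\|g_t^{i_t}\|^2 \le c^2$ for large gradients produces an $\eta L c^2$ term that is absent from your Step 3. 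It is harmless: absorb it via $c \le 2\|\nabla f(x_t)\|$ and $\eta L \lesssim 1$, or note that it is $\widetilde{O}(\eta L(\sigma^2+\zeta^2) + \eta L\epsilon^2)$. But it must be accounted for.
\end{enumerate}
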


\begin{proof}
	We analyze the convergence in two cases seperately.

    \paragraph{Case I.} We first turn to case when $\|\nabla f(x_t)\| \leq
    c/2$. Applying \eqref{eq:clip-error} and \eqref{eq:squared-clip} from
    Lemma~\ref{lem:clip-error} to our main descent inequality, we have
	\[
		\begin{aligned}
            \E_t f&(\tilde x_{t+1}) - f(\tilde x_t) \\
                  & \leq - \frac{\eta}{2} \|\nabla f(x_t)\|^2 + \eta (4 \sigma^2 \Gamma(2 \theta + 1) + 2 \zeta^2) \exp \left( -\left( \frac{c - 2 \zeta}{4 \sigma} \right)^\frac{1}{\theta} \right) + \frac{\eta^2 L}{2} \left( \eta c^2 \tau_C^2 L + 24 \sigma^2 \Gamma(2 \theta + 1) + 8 \zeta^2 + 2\|\nabla f(x_t)\|^2 \right)                                                                                                                        \\
                  & = - \frac{\eta}{2} \left( 1 - 2 \eta L \right) \|\nabla f(x_t)\|^2 + \eta (4\sigma^2 \Gamma(2 \theta + 1) + 2 \zeta^2) \exp \left( -\left( \frac{c - 2 \zeta}{4 \sigma} \right)^\frac{1}{\theta} \right) + \eta^2 (12 \sigma^2 \Gamma(2 \theta + 1) + 4 \zeta^2) L + \frac{\eta^3 c^2 \tau_C^2 L^2}{2} \\
                  & \leq - \frac{\eta}{4} \|\nabla f(x_t)\|^2 + \eta (4\sigma^2 \Gamma(2 \theta + 1) + 2 \zeta^2) \exp \left( -\left( \frac{c - 2 \zeta}{4 \sigma} \right)^\frac{1}{\theta} \right) + \eta^2 (12 \sigma^2 \Gamma(2 \theta + 1) + 4 \zeta^2) L + \frac{\eta^3 c^2 \tau_C^2 L^2}{2}.
		\end{aligned}
	\]
	where in the last inequality we used $\eta \leq 1 / (4 L)$. Taking the
	unconditional expectation, we have that
	\[
		\frac{1}{4} \E \|\nabla f(x_t)\|^2 \leq \frac{\E [f(\tilde x_t) - f(\tilde x_{t+1})]}{\eta} + (4\sigma^2 \Gamma(2 \theta + 1) + 2 \zeta^2) \exp \left( -\left( \frac{c - 2 \zeta}{4 \sigma} \right)^\frac{1}{\theta} \right) + \eta (12 \sigma^2 \Gamma(2 \theta + 1) + 4 \zeta^2) L + \frac{\eta^2 c^2 \tau_C^2 L^2}{2}.
	\]
    Summing over $t \in \mathcal{T} = \{t\colon \|\nabla f(x_t)\| \leq c/2\}$,
    we obtain
	\[
        \begin{aligned}
            \frac{1}{4 T} \sum_{t \in\mathcal{T}} \E &\|\nabla f(x_t)\|^2 \leq \\
                                                       &\frac{1}{T} \sum_{t \in\mathcal{T}} \left( \frac{\E[f(\tilde x_t) - f(\tilde x_{t+1})]}{\eta}+ (4\sigma^2 \Gamma(2 \theta + 1) + 2 \zeta^2) \exp \left( -\left( \frac{c - 2 \zeta}{4 \sigma} \right)^\frac{1}{\theta} \right) + \eta (12 \sigma^2 \Gamma(2 \theta + 1) + 4 \zeta^2) L + \frac{\eta^2 c^2 \tau_C^2 L^2}{2}\right) .
        \end{aligned}
	\]
	% This in turn implies
	% \[
	% 	\frac{1}{T} \sum_{t \colon \|\nabla f(x_t)\| < \frac{c}{2}} \E\|\nabla f(x_t)\| = O \left( \sqrt{\frac{\Delta}{\eta T}} + \sqrt{\sigma^2 + \zeta^2} \exp \left( -\frac{1}{2}\left( \frac{c - 2 \zeta}{4 \sigma} \right)^\frac{1}{\theta} \right) + \sqrt{\eta (\sigma^2 + \zeta^2) L} + \eta c \tau_C L \right)
	% \]
    \paragraph{Case II.} We now turn to the case when $\|\nabla f(x_t)\| >
    c/2$. Using Jensen's inequality and nonexpansiveness of the clipping
    operator, we have
	\[
            \|\E_t g_t^{i_t} - \clip_c(\nabla f(x_t))\|^2 \leq \E_t \| \nabla F_{i_t}(x_t, \xi_t^{i_t}) - \nabla f(x_t)\|^2 \leq 4 \sigma^2 \Gamma(2 \theta + 1) + 2 \zeta^2.
	\]
	Thus, our main inequality \eqref{eq:main-ineq} in this case implies that
	\[
		\begin{aligned}
			\E [f(\tilde x_{t+1}) - f(\tilde x_t)] & \leq - \frac{\eta \alpha_t}{2} \E \|\nabla f(x_t)\|^2 + \frac{\eta}{2\alpha_t} (4 \sigma^2 \Gamma(2 \theta + 1) + 2\zeta^2)  + \frac{\eta^2 L}{2} (\eta c^2 \tau_C^2 + \E \| g_t \|^2) \\
			                                       & \leq - \frac{\eta \alpha_t}{2} \E \|\nabla f(x_t)\|^2 + \frac{\eta}{c} (4\sigma^2 \Gamma(2 \theta + 1) + 2\zeta^2) \|\nabla f(x_t)\|  + \frac{\eta^2 c^2 L}{2} (1 + \eta \tau_C^2 L)   \\
		\end{aligned}
	\]
    where we used that $\|g_t\| \leq c$ always. Whenever $\|\nabla f(x_t)\| >
    c$, we have $\alpha_t = c / \|\nabla f(x_t)\|$, so if $c \geq 2
    \sqrt{8\Gamma(2 \theta + 1) \sigma^2 + 4 \zeta^2}$ the inequality takes the form
	\[
		\begin{aligned}
			\E [f(\tilde x_{t+1}) - f(\tilde x_t)] & \leq - \frac{\eta c}{2} \left( 1 - \frac{8\sigma^2 \Gamma(2 \theta + 1) + 4 \zeta^2}{c^2}\right) \E \|\nabla f(x_t)\| + \frac{\eta^2 c^2 L}{2} (1 + \eta \tau_C^2 L) \\
			                                       & \leq - \frac{3\eta c}{8} \E \|\nabla f(x_t)\| + \frac{\eta^2 c^2 L}{2} (1 + \eta \tau_C^2 L)                                                             \\
		\end{aligned}
	\]
	Using $\eta \leq 1 / (4\tau_C L)$, we arrive at
	\[
		\frac{3c}{8} \E \|\nabla f(x_t)\| \leq \frac{\E [f(\tilde x_{t+1}) - f(\tilde x_t)]}{\eta} + \frac{\eta c^2 L}{2} ( 1 + \eta \tau_C^2 L).
	\]
	Now consider the case when $c/2 < \|\nabla f(x_t)\| < c$, so $\alpha_t = 1$. In
    this case, we have that $-\|\nabla f(x_t)\| \leq -c/2$ and $1 \leq \|\nabla
    f(x_t)\| / c$. Thus, similarly as in the previous case, our main inequality
    \eqref{eq:main-ineq} implies that
	\[
		\begin{aligned}
			\E_t f(\tilde x_{t+1}) - f(\tilde x_t) & \leq - \frac{\eta c}{4} \|\nabla f(x_t)\| + \frac{\eta}{c} 2 \Gamma(2 \theta + 1) \sigma^2 \|\nabla f(x_t)\| + \frac{\eta^2 c^2 L}{2} (1 + \eta \tau_C^2 L) \\
			                                       & \leq -\frac{\eta c}{4} \left( 1 - \frac{8\sigma^2 \Gamma(2 \theta + 1)}{c^2} \right) \|\nabla f(x_t)\| + \frac{\eta^2 c^2 L}{2} (1 + \eta \tau_C^2 L)       \\
			                                       & \leq -\frac{\eta c}{8} \|\nabla f(x_t)\| + \frac{\eta^2 c^2 L}{2} (1 + \eta \tau_C^2 L).
		\end{aligned}
	\]
	Taking the unconditional expectation, we arrive at the inequality
	\[
        \frac{c}{8} \E \|\nabla f(x_t)\| \leq \frac{\E [f(\tilde x_{t}) - f(\tilde x_{t+1})]}{\eta} + \frac{\eta c^2 L}{2} (1 + \eta \tau_C^2 L).
	\]
	Thus we have that
	\[
        \frac{1}{8T}\sum_{t \in \mathcal{T}^c} c\E \|\nabla f(x_t)\| = \frac{1}{T}\sum_{t \in \mathcal{T}^c} \left( \frac{\E [f(\tilde x_{t}) - f(\tilde x_{t+1})]}{\eta} + \frac{\eta c^2 L}{2} (1 + \eta \tau_C^2 L)\right)  
	\]

    \paragraph{Choosing the parameters.} Putting together the two cases, we
    have that
	\[
        \begin{aligned}
            \frac{1}{8T} \left( \sum_{t\in\mathcal{T}} \E\|\nabla f(x_t)\|^2 + \sum_{t\in\mathcal{T}^c} c \E\|\nabla f(x_t)\| \right) &\leq \frac{1}{T}\sum_{t=0}^{T-1} \frac{\E[f(\tilde x_t) - f(\tilde x_{t+1})]}{\eta}+ (4\sigma^2 \Gamma(2 \theta + 1) + 2 \zeta^2) \exp \left( -\left( \frac{c - 2 \zeta}{4 \sigma} \right)^\frac{1}{\theta} \right) \\
                                                                                                                                                  &\quad + \eta (12 \sigma^2 \Gamma(2 \theta + 1) + 4 \zeta^2) L + \frac{\eta^2 c^2 \tau_C^2 L^2}{2} + \frac{\eta c^2 L}{2} \\
                                &= O \left( \frac{\Delta}{\eta T} + (\sigma^2 + \zeta^2) \left( \eta L + \exp \left( -\left( \frac{c - 2 \zeta}{4 \sigma} \right)^\frac{1}{\theta} \right) \right) + \eta^2 c^2 \tau_C^2 L^2 + \eta c^2 L \right).
        \end{aligned}
	\]
    This means that both 
    \[
        \frac{1}{T} \sum_{t\in\mathcal{T}} \E\|\nabla f(x_t)\| = O \left( \sqrt{\frac{\Delta}{\eta T}} + \sqrt{\eta (\sigma^2 + \zeta^2) L} + \sqrt{\sigma^2 + \zeta^2} \exp \left( -\left( \frac{c - 2 \zeta}{4 \sigma} \right)^\frac{1}{\theta} \right) + \eta c \tau_C L + \sqrt{\eta c^2 L} \right)
    \]
    and 
    \[
        \frac{1}{T} \sum_{t\in\mathcal{T}^c} \E\|\nabla f(x_t)\| = O \left( \frac{\Delta}{\eta c T} + \frac{\sigma^2 + \zeta^2}{c} \left( \eta L + \exp \left( -\left( \frac{c - 2 \zeta}{4 \sigma} \right)^\frac{1}{\theta} \right) \right) + \eta^2 c \tau_C^2 L^2 + \eta c L \right).
    \]

    We choose the clipping radius to be $c = 2 \sqrt{8\Gamma(2 \theta + 1)
    \sigma^2} \log^\theta T + 4 \zeta$. Assuming $T$ is large enough so that $c
    \geq 1$, we have that
	\[
        \frac{1}{T} \sum_{t=0}^{T-1} \E\|\nabla f(x_t)\| = \widetilde{O} \left( \sqrt{\frac{\Delta}{\eta T}} + \sqrt{\eta (\sigma^2 + \zeta^2) L} + \eta (\sigma + \zeta) \tau_C L \log^\theta T + \sqrt{\frac{\sigma^2 + \zeta^2}{T}} \right).
	\]
	Then choosing the step size to be
	\[
        \eta = \min \left\{ \frac{1}{4 \tau_C L}, \left( \frac{\Delta}{(\sigma^2 + \zeta^2) LT} \right)^{1/2}, \left( \frac{\Delta}{(\sigma^2 + \zeta^2) \tau_C^2 L^2 T} \right)^{1/3} \right\},
	\]
	we obtain
	\[
        \frac{1}{T} \sum_{t=0}^{T-1} \E\|\nabla f(x_t)\| = \widetilde{O} \left( \sqrt{\frac{\tau_C L \Delta}{T}} + \left( \frac{(\sigma^2 + \zeta^2) L \Delta}{T} \right)^{1/4} + \left( \frac{(\sigma + \zeta) \tau_C L \Delta}{T} \right)^{1/3} + \sqrt{\frac{\sigma^2 + \zeta^2}{T}} \right).
	\]
	Hence, we require
	\[
        \widetilde{O} \left( \frac{(\sigma^2 + \zeta^2) L \Delta}{\epsilon^4} + \frac{(\sigma + \zeta) \tau_C L \Delta}{\epsilon^3} + \frac{\tau_C L \Delta}{\epsilon^2}\right).
	\]
    iteration to reach $\epsilon$-stationarity.
	% Then, noting that in $\tau_n$ seconds, worker $i$ can compute $\lfloor
	% 	\tau_n/\tau_i \rfloor$ gradients, so in the same time
	% Algorithm~\ref{algo:asgd_hom} computes $\sum_{i=1}^{n} \lfloor
	% 	\tau_i/\tau_n \rfloor$ gradients. Thus Algorithm~\ref{algo:asgd_hom}
	% computes a gradient at most every $(1/\tau_n)\sum_{i=1}^{n} \lfloor
	% 	\tau_i/\tau_n \rfloor \leq \sum_{i=1}^{n} (1/\tau_i)$ seconds on
\end{proof}

\subsection{Convergence with high probability}

\begin{lem}\label{lem:descent}
	If $f$ is $L$-smooth and $\alpha_t = \min \{1, c / \|\nabla f(x_t)\|\}$, then
    \begin{equation}\label{eq:high-prob-descent}
		\begin{aligned}
            f(\tilde x_{t+1}) - f(\tilde x_t) & \leq -\frac{\eta}{2} \|\nabla f(x_t)\|^2 + \frac{\eta}{2} \|\E_t g_t^{i_t} - \nabla f(x_t)\|^2 + \frac{\eta^3 c^2 \tau_C^2 L^2}{2} + \frac{\eta^2 L}{2} \|g_t^{i_t}\|^2 \\
			                                  & \quad - \eta \braket{\nabla f(x_t), g_t^{i_t} - \E_t g_t^{i_t}} - \eta \braket{\nabla f(\tilde x_t) - \nabla f(x_t), g_t^{i_t} - \E_t g_t^{i_t}}.
		\end{aligned}
    \end{equation}
\end{lem}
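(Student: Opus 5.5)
Starting from the smoothness inequality (Lemma~\ref{lem:smooth}) applied to the virtual sequence, with $\tilde x_{t+1}-\tilde x_t=-\eta g_t^{i_t}$, we have
\[
f(\tilde x_{t+1}) - f(\tilde x_t) \leq -\eta\braket{\nabla f(\tilde x_t), g_t^{i_t}} + \frac{\eta^2 L}{2}\|g_t^{i_t}\|^2 .
\]
The last term already appears in \eqref{eq:high-prob-descent}, so only the inner product needs work. I will decompose it in two ways: first $\nabla f(\tilde x_t)=\nabla f(x_t)+(\nabla f(\tilde x_t)-\nabla f(x_t))$, and then $g_t^{i_t}=\E_t g_t^{i_t}+(g_t^{i_t}-\E_t g_t^{i_t})$. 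This produces four terms:
\[
-\eta\braket{\nabla f(x_t),\E_t g_t^{i_t}} - \eta\braket{\nabla f(\tilde x_t)-\nabla f(x_t),\E_t g_t^{i_t}} - \eta\braket{\nabla f(x_t), g_t^{i_t}-\E_t g_t^{i_t}} - \eta\braket{\nabla f(\tilde x_t)-\nabla f(x_t), g_t^{i_t}-\E_t g_t^{i_t}}.
\]
The last two are exactly the martingale-type terms in the statement and are kept unchanged.

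For the first term, I use the polarization identity $-\braket{a,b}=-\tfrac12\|a\|^2-\tfrac12\|b\|^2+\tfrac12\|a-b\|^2$ with $a=\nabla f(x_t)$ and $b=\E_t g_t^{i_t}$. This gives
\[
-\frac{\eta}{2}\|\nabla f(x_t)\|^2-\frac{\eta}{2}\|\E_t g_t^{i_t}\|^2+\frac{\eta}{2}\|\E_t g_t^{i_t}-\nabla f(x_t)\|^2 .
\]

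For the second term, I apply Young's inequality (Lemma~\ref{lem:young}), which bounds it by
\[
\frac{\eta}{2}\|\nabla f(\tilde x_t)-\nabla f(x_t)\|^2+\frac{\eta}{2}\|\E_t g_t^{i_t}\|^2 .
\]
The $\tfrac{\eta}{2}\|\E_t g_t^{i_t}\|^2$ here cancels exactly against the negative one from the polarization step. I then use $L$-smoothness and Lemma~\ref{lem:virtual-ineq} to get
\[
\|\nabla f(\tilde x_t)-\nabla f(x_t)\|^2\le L^2\eta^2c^2\tau_C^2,
\]
which produces the $\tfrac{\eta^3c^2\tau_C^2L^2}{2}$ term. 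Collecting everything yields \eqref{eq:high-prob-descent}.

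Unlike \eqref{eq:main-ineq}, this version uses the unclipped comparison $\nabla f(x_t)$ rather than $\clip_c(\nabla f(x_t))$, so $\alpha_t$ plays no role and the identity is used directly.

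There is no real obstacle here. The only point needing care is that the two $\|\E_t g_t^{i_t}\|^2$ terms cancel exactly, which they do because both carry the same coefficient $\tfrac{\eta}{2}$. Since no expectation is taken, the bound holds pathwise. $\E_t g_t^{i_t}$ is well defined because $g_t^{i_t}$ is bounded by $c$.
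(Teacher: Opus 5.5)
Your proof is correct and follows the paper's argument essentially line for line: the smoothness step on the virtual sequence, the split of $-\eta\braket{\nabla f(\tilde x_t), g_t^{i_t}}$ into deterministic descent, delay error and the two martingale terms, polarization on the first term, Young's inequality plus Lemma~\ref{lem:virtual-ineq} on the second, and cancellation of the $\tfrac{\eta}{2}\|\E_t g_t^{i_t}\|^2$ terms. You also correctly keep the factor $\tau_C^2$ in $\tfrac{\eta}{2}\|\nabla f(\tilde x_t)-\nabla f(x_t)\|^2 \le \tfrac{\eta^3 c^2 \tau_C^2 L^2}{2}$, which the paper's intermediate display drops by a typo.
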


\begin{proof}
	Due to $L$-smoothness we have that
    \[
		\begin{aligned}
			f(\tilde x_{t+1}) & \leq f(\tilde x_t) + \braket{\tilde x_{t+1} - \tilde x_t, \nabla f(\tilde x_t)} + \frac{L}{2} \|\tilde x_{t+1} - \tilde x_t\|^2 \\
			                  & = f(\tilde x_t) - \eta \braket{g_t^{i_t}, \nabla f(\tilde x_t)} + \frac{\eta^2 L}{2} \|g_t^{i_t}\|^2.
		\end{aligned}
    \]
	due to Lemma~\ref{lem:smooth}. We first expand the inner product according to
    \begin{align}
        - \eta \braket{\nabla f(\tilde x_t), g_t^{i_t}}  = &-\eta \braket{\nabla f(x_t), \E_t g_t^{i_t}} \tag*{\text{(deterministic descent)}} \\
                                                           & - \eta \braket{\nabla f(\tilde x_t) - \nabla f(x_t), \E_t g_t^{i_t}}  \tag*{(delay error)} \\
                                                           & \underbrace{- \eta \braket{\nabla f(x_t), g_t^{i_t} - \E_t g_t^{i_t}} - \eta \braket{\nabla f(\tilde x_t) - \nabla f(x_t), g_t^{i_t} - \E_t g_t^{i_t}}}_{Z_t} \tag*{(martingale difference)},
    \end{align}
    where we note that the last two terms form a martingale difference sequence
    $Z_t$ adapted to $\mathcal{G}_t = \mathcal{F}_{t+1}$. The first inner
    product in the right-hand side can be written as 
    \begin{equation}\label{eq:1st-inner-prod}
        -\eta \braket{\nabla f(x_t), \E_t g_t^{i_t}} = -\frac{\eta}{2} \|\nabla f(x_t)\|^2 - \frac{\eta}{2} \|\E_t g_t^{i_t}\|^2 + \frac{\eta}{2} \|\E_t g_t^{i_t} - \nabla f(x_t)\|^2.
    \end{equation}
    Using Young's inequality (Lemma~\ref{lem:young}) on the second inner
    product gives the bound
    \begin{equation}\label{eq:2nd-inner-prod}
		- \eta \braket{\nabla f(\tilde x_t) - \nabla f(x_t), \E_t g_t^{i_t}} \leq \frac{\eta}{2}\|\nabla f(\tilde x_t) - \nabla f(x_t)\|^2 + \frac{\eta}{2} \|\E_t g_t^{i_t}\|^2.
    \end{equation}
    Moreover, due to $L$-smoothness of $f$ (Assumption~\ref{asmp:smooth}) and
    Lemma~\ref{lem:virtual-ineq}, we have
	\[
		\frac{\eta}{2}\|\nabla f(\tilde x_t) - \nabla f(x_t)\|^2 \leq \frac{\eta L^2}{2} \|\tilde x_t - x_t\|^2 \leq \frac{\eta^3 c^2 L^2}{2}
	\]
    By subtracting $f(\tilde x_t)$ from both sides of \eqref{eq:smooth}, we
    arrive at the stated inequality by noting that the
    $\frac{\eta}{2}\E_t\|g_t\|^2$ terms in \eqref{eq:1st-inner-prod} and
    \eqref{eq:2nd-inner-prod} cancel.
\end{proof}

\begin{thm}\label{thm:sub-weibull-hom-high-prob}
	Suppose Assumptions~\ref{asmp:smooth} and \ref{asmp:sub-weibull} hold. Then
	there exists a constant step size $\eta$ and clipping radius $c$ such that
	for $\epsilon \in (0,1)$ and failure probability $\delta \in
		(0,1)$, we have $\prob \left( \frac{1}{T} \sum_{t=0}^{T-1} \|\nabla
		f(x_t)\| \leq \epsilon \right) \geq 1 - \delta$ within
	\begin{equation}
        \widetilde{O} \left( \frac{\sigma^2 \log^{2 \theta}(1/\delta)}{\epsilon^4} + \frac{\sigma \tau_C \log^{\theta}(1/\delta)}{\epsilon^{3}} + \frac{\tau_C}{\epsilon^2} \right)
	\end{equation}
	iterations of Algorithm~\ref{algo:asgd_hom}.
\end{thm}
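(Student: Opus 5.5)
We will sum the descent inequality of Lemma~\ref{lem:descent} over $t=0,\dots,T-1$, telescope $f(\tilde x_t)$ using $f(\tilde x_0)-f(\tilde x_T)\le \Delta$ (with $\Delta = f(x_0)-f^\star$ up to the harmless contribution of the initial extra $\tau_C$ steps in $\tilde x_1$), and control the martingale term $\sum_t Z_t$ with Freedman's inequality (Lemma~\ref{lem:freedman-ineq}). To use the $\|\nabla f(x_t)\|^2$ form of the descent lemma, we first work on a high-probability event on which clipping is inactive for most purposes. Concretely, we pick $c = 2\sigma\log^\theta(4T/\delta) + 2\zeta + 2\max_t\|\nabla f(x_t)\|$-type radius; to avoid random radius, we instead use an induction/stopping-time argument: on the event $E_t=\{\|\nabla f(x_s)\|\le c/2 \ \forall s\le t\}$ the descent lemma with bias bound \eqref{eq:clip-error} applies. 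By Lemma~\ref{lem:sub-weibull}(i) and a union bound over $t$, with probability $1-\delta/2$ every noise norm satisfies $\|\nabla F(x_t,\xi_t)-\nabla f(x_t)\|\le \sigma\log^\theta(4T/\delta)$.

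Key steps in order. (1) Bias and second moment: by \eqref{eq:clip-error} with $c\gtrsim \sigma\log^\theta(T/\delta)$, the term $\frac{\eta}{2}\|\E_t g_t-\nabla f(x_t)\|^2$ is $O(\eta\sigma^2/T)$, summing to $O(\eta\sigma^2)$; and $\frac{\eta^2L}{2}\|g_t\|^2 \le \eta^2 L(\sigma^2\log^{2\theta}(T/\delta)+\|\nabla f(x_t)\|^2)$ on the good event, so with $\eta\le 1/(4L)$ the gradient part is absorbed into $-\frac{\eta}{4}\|\nabla f(x_t)\|^2$. The delay term contributes $T\eta^3c^2\tau_C^2L^2/2$ via Lemma~\ref{lem:virtual-ineq}. (2) Martingale: $|Z_t|\le \eta(\|\nabla f(\tilde x_t)\|)\cdot 2c \le 2\eta c(c/2+\eta c\tau_C L)\lesssim \eta c^2 =: \ell$ using Lemma~\ref{lem:virtual-ineq} and $\eta\le 1/(4\tau_C L)$; the conditional variance is $\E_t Z_t^2\le 2\eta^2(\|\nabla f(x_t)\|^2+\eta^2c^2\tau_C^2L^2)\E_t\|g_t-\E_tg_t\|^2$ with $\E_t\|g_t-\E_t g_t\|^2\lesssim \sigma^2$. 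Freedman with $\rho$ chosen so $\rho\sigma^2/c^2\sim 1/8$ (possible since $c\gtrsim\sigma$) gives $\sum Z_t\le \frac{\eta}{8}\sum\|\nabla f(x_t)\|^2 + T\eta^3c^2\tau_C^2L^2 + \eta c^2\log(2/\delta)$, matching the display in the main text. (3) Combine: with probability $1-\delta$,
\[
\frac{1}{T}\sum_{t}\|\nabla f(x_t)\|^2 \lesssim \frac{\Delta+\eta c^2\log(1/\delta)}{\eta T}+\eta L\sigma^2\log^{2\theta}(T/\delta)+\eta^2c^2\tau_C^2L^2,
\]
then Cauchy--Schwarz to pass to the average norm, and plug in $c\asymp\sigma\log^\theta(T/\delta)+\zeta+\sqrt{\cdots}$ and $\eta=\min\{1/(4\tau_C L),(\Delta/(\sigma^2\log^{2\theta}LT))^{1/2},(\Delta/(c^2\tau_C^2L^2T))^{1/3}\}$ as in Theorem~\ref{thm:expectation}, yielding the three terms with $\log^{2\theta}(1/\delta)$, $\log^\theta(1/\delta)$, and none.

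The main obstacle is guaranteeing $\|\nabla f(x_t)\|\le c/2$ throughout, needed both for the bias bound and the bound $\ell$. I would handle it by induction on $t$: on the good event up to time $t$, the partial-sum bound gives $f(\tilde x_{t})-f^\star\le 2\Delta$ (for $T$ large and $\eta$ small), hence $\|\nabla f(x_t)\|\le \sqrt{4L\Delta}+L\eta c\tau_C$, and choosing $c\ge 4\sqrt{L\Delta}$ as an additional constant term in the radius (absorbed into $\widetilde O$, contributing only to the $\tau_C/\epsilon^2$ and lower-order terms) closes the induction; Freedman must then be applied to the stopped martingale $Z_t\mathbf 1_{E_t}$, which remains a bounded martingale difference sequence. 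If this bookkeeping fails, the fallback is the two-case split of Theorem~\ref{thm:expectation}, handling $\|\nabla f(x_t)\|>c/2$ steps via the linear descent $-\eta c\|\nabla f(x_t)\|/8$.
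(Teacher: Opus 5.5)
Your main route does not deliver the stated complexity. The induction needs $\|\nabla f(x_t)\|\le c/2$ for every $t$, starting at $t=0$. The radius must therefore carry a floor $c\gtrsim\sqrt{L\Delta}$ that has nothing to do with $\sigma$. But $c$ enters the delay error through Lemma~\ref{lem:virtual-ineq}. The term $\eta^2c^2\tau_C^2L^2$ in your averaged bound forces $\eta\lesssim\epsilon/(c\tau_C L)$, and then $\Delta/(\eta T)\le\epsilon^2$ requires $T\gtrsim c\,\tau_C L\Delta/\epsilon^3\gtrsim\tau_C(L\Delta)^{3/2}/\epsilon^3$. This is an $\epsilon^{-3}$ term that does not scale with $\sigma$; it is not a contribution to the $\tau_C/\epsilon^2$ term, as you claim. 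Whenever $\sigma\log^\theta(T/\delta)\ll\sqrt{L\Delta}$, your bound is strictly worse than the stated one, whose $\epsilon^{-3}$ term vanishes as $\sigma\to0$. In effect you recover the bounded-gradient Vanilla ASGD rate $\tau_C G/\epsilon^3$ with $G\sim\sqrt{L\Delta}$, which is exactly what the theorem is meant to improve on. Bounding $\|\tilde x_t-x_t\|$ by the norms of the in-flight gradients instead of by $c$ does not rescue this: each stale gradient then contributes over its whole delay, and $\tau_{\max}$ reappears.

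The missing idea is to let clipping be active on large-gradient steps and to handle those steps pathwise. Then a radius $c\asymp\sigma\log^\theta(T/\delta)$, with no dependence on $\|\nabla f(x_0)\|$, suffices. Write $\nabla f(x_t)=\clip_c(\nabla f(x_t))/\alpha_t$ and use $\alpha_t\le1$. Smoothness and Lemma~\ref{lem:virtual-ineq} then give, without any conditional expectation,
\[
f(\tilde x_{t+1})-f(\tilde x_t)\le-\frac{\eta\alpha_t}{2}\|\nabla f(x_t)\|^2+\frac{\eta}{2\alpha_t}\|g_t^{i_t}-\clip_c(\nabla f(x_t))\|^2+\frac{\eta^3c^2\tau_C^2L^2}{2}+\frac{\eta^2c^2L}{2}.
\]
By nonexpansiveness of clipping plus a union bound over $t$, the middle term is at most $\frac{\eta}{2\alpha_t}\sigma^2\log^{2\theta}(4T/\delta)$. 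With $c\ge4\sigma\log^\theta(4T/\delta)$, this gives linear descent $-\frac{\eta c}{8}\|\nabla f(x_t)\|$ (plus the same two error terms) whenever $\|\nabla f(x_t)\|>c/2$, with no martingale term at all. Freedman's inequality is then applied only to $\psi_tZ_t$, with the predictable indicator $\psi_t=\mathbf{1}\{\|\nabla f(x_t)\|\le c/2\}$; there your range and variance estimates are valid. Your fallback names this two-case split, but importing the large-gradient step from the proof of Theorem~\ref{thm:expectation} does not work as stated. That argument takes conditional expectations. In the high-probability setting it would leave the terms $Z_t$ on those steps, whose range $2\eta c\|\nabla f(\tilde x_t)\|$ is not almost surely bounded, so Freedman's inequality cannot control them.
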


\begin{proof}
	Similarly as before, we analyze the convergence in two different cases seperately.

    \paragraph{Case I.} We begin by treating the case when $\|\nabla f(x_t)\|
    \leq c/2$, in which we will show that the martingale difference sequence
    concentrates. First, we denote the martingale difference term 
    \[
        Z_t = - \eta \braket{\nabla f(x_t), g_t^{i_t} - \E_t g_t^{i_t}}  - \eta \braket{\nabla f(\tilde x_t) - \nabla f(x_t), g_t^{i_t} - \E_t g_t^{i_t}}
    \]
    and the indicator function $\psi_t = \{\|\nabla f(x_t)\| \leq c / 2\}$ %set $\mathcal{T} = \{t \colon \|\nabla f(x_t)\| \leq c / 2\}$.
    Since clipping ensures $\|g_t^{i_t}\| \leq c$ almost surely, we can bound
    $|Z_t|$ uniformly for $t \in \mathcal{T}$:
	\[
        \begin{aligned}
            |\psi_t Z_t| &\leq \eta (\|\nabla f(x_t)\| + \|\nabla f(\tilde x_t) - \nabla f(x_t)\|) \left( \|g_t^{i_t}\| + \|\E_t g_t^{i_t}\|\right)  \\
                  &\leq 2\eta \left( \frac{c}{2} + \eta c \tau_C L \right) c \\
                  &\leq \frac{3 \eta c^2}{4}
        \end{aligned}
	\]
    where in the second inequality we use that $\eta \leq 1/(4\tau_C L)$. By
    applying the Cauchy-Schwarz inequality  and Young's inequality we can also
    bound the conditional second moment of $Z_t$:
	\[
		\begin{aligned}
            \E [Z_t^2 \mid \mathcal{F}_t] & \leq \eta^2 \|\nabla f(x_t) + \nabla f(\tilde x_t) - \nabla f(x_t)\|^2 \|g_t^{i_t} - \E_t g_t^{i_t}\|^2                                               \\
			                          & \leq \eta^2 \left( 2\|\nabla f(x_t)\|^2 + 2\|\nabla f(\tilde x_t) - \nabla f(x_t)\|^2 \right) \left( 2\|g_t^{i_t}\|^2 + 2\|\E_t g_t^{i_t}\|^2 \right) \\
			                          & \leq 4 \eta^2 c^2 \left( \|\nabla f(x_t)\|^2 + \eta^2 c^2 \tau_C^2 L^2 \right).
		\end{aligned}
	\]
    So applying Freedman's inequality (Lemma~\ref{lem:freedman-ineq}) with
    $\rho = 3/16$, we have with probability at least $1 - \delta / 2$ that
	\[
		\begin{aligned}
			\sum_{t \in \mathcal{T}} Z_t & \leq \frac{4\rho \eta^2 c^2}{3 \eta c^2} \sum_{t \in \mathcal{T}} \left( \|\nabla f(x_t)\|^2 + \eta^2 c^2 \tau_C^2 L^2 \right) + \frac{3 \eta c^2 }{4\rho} \log \frac{2}{\delta} \\
			                             & \leq \frac{1}{4} \sum_{t \in \mathcal{T}} \left( \eta \|\nabla f(x_t)\|^2 + \eta^3 c^2 \tau_C^2 L^2 \right) + 4 \eta c^2 \log \frac{2}{\delta}.
		\end{aligned}
	\]
    Thus, summing \eqref{eq:high-prob-descent} over $t \in \mathcal{T}$, we
    have
    \[
        \begin{aligned}
            \sum_{t\in\mathcal{T}} (f(\tilde x_{t+1}) - f(\tilde x_t)) &\leq \sum_{t\in\mathcal{T}} \left( -\frac{\eta}{2} \|\nabla f(x_t)\|^2 + \frac{\eta}{2} \|\E_t g_t^{i_t} - \nabla f(x_t)\|^2 + \frac{\eta^3 c^2 \tau_C^2 L^2}{2} + \frac{\eta^2 L}{2} \|g_t^{i_t}\|^2 + Z_t \right) \\
                                                                       &\leq \sum_{t\in\mathcal{T}} \left( -\frac{\eta}{4} \|\nabla f(x_t)\|^2 + \frac{\eta}{2} \|\E_t g_t^{i_t} - \nabla f(x_t)\|^2 + \frac{3\eta^3 c^2 \tau_C^2 L^2}{4} + \frac{\eta^2 c^2 L}{2} \right) + 4 \eta c^2 \log \frac{2}{\delta}
        \end{aligned}
    \]
    under the same probability. Applying \eqref{eq:clip-error} from
    Lemma~\eqref{lem:clip-error} to the clipping error term $\|\E_t g_t^{i_t} -
    \nabla f(x_t)\|^2$ and dividing by $\eta$, we have
	\[
		\begin{aligned}
            \frac{1}{4}\sum_{t\in\mathcal{T}} & \|\nabla f(x_t)\|^2 \\
                                   & \leq \sum_{t \in \mathcal{T}} \left( \frac{f(\tilde x_{t}) - f(\tilde x_{t+1})}{\eta} + (4 \sigma^2 \Gamma(2 \theta + 1) + 2 \zeta^2) \exp \left( - \left( \frac{c - 2 \zeta}{4 \sigma} \right)^\frac{1}{\theta} \right) + \frac{3 \eta^2 c^2 \tau_C^2 L^2}{4} + \frac{\eta c^2 L}{2} \right) + 4 c^2 \log \frac{2}{\delta}
			            % & \leq O \left( \frac{\Delta}{\eta T} + (8 \sigma^2 \Gamma(2 \theta + 1) + 4 \zeta^2) \exp \left( - \left( \frac{c - 2 \zeta}{4 \sigma} \right)^\frac{1}{\theta} \right) +  \eta^2 c^2 \tau_C^2 L^2 + \frac{\eta c^2 L}{2} + \frac{c^2 \log (1/\delta)}{T} \right)
		\end{aligned}
	\]
	with probability at least $1 - \delta / 2$.

    \paragraph{Case II.} We consider the case $\|\nabla f(x_t)\| > c / 2$. By
    smoothness, we have
	\[
		\begin{aligned}
			f(\tilde x_{t+1}) - f(\tilde x_t) & \leq - \frac{\eta \alpha_t}{2} \|\nabla f(x_t)\|^2 + \frac{\eta}{2 \alpha_t} \|g_t^{i_t} - \clip_c(\nabla f(x_t))\|^2 + \frac{\eta^3 c^2 \tau_C^2 L^2}{2} + \frac{\eta^2 c^2 L}{2}          \\
                                              & \leq - \frac{\eta \alpha_t}{2} \|\nabla f(x_t)\|^2 + \frac{\eta}{2 \alpha_t} \|\nabla F_{i_t}(x_t, \xi_t^{i_t}) - \nabla f(x_t)\|^2 + \frac{\eta^3 c^2 \tau_C^2 L^2}{2} + \frac{\eta^2 c^2 L}{2}.
		\end{aligned}
	\]
	where the second inequality holds due to nonexpansiveness of the clipping
	operator. Then, under sub-Weibull noise (Assumption~\ref{asmp:sub-weibull}),
	\[
        \begin{aligned}
            \sum_{t \in \mathcal{T}^c} \|\nabla F_{i_t}(x_t, \xi_t^{i_t}) - \nabla f(x_t)\|^2 & \leq \sum_{t \in \mathcal{T}^c} (2\|\nabla F_{i_t}(x_t, \xi_t^{i_t}) - \nabla f_{i_t}(x_t)\|^2 + 2\|\nabla f_{i_t}(x_t) - \nabla f(x_t)\|^2) \\
                                                                   & \leq 2\sigma^2 \log^{2 \theta} (2 T / \delta) + 2 \zeta^2
        \end{aligned}
	\]
    with probability at least $1 - \delta / 2$. Thus for all $t \in
    \mathcal{T}^c$, we have under the same probability that if $\|\nabla
    f(x_t)\| \leq c$
	\[
		\begin{aligned}
            f(\tilde x_{t+1}) - f(\tilde x_t) & \leq - \frac{\eta \alpha_t}{2} \|\nabla f(x_t)\|^2 + \frac{\eta}{\alpha_t} (\sigma^2 \log^{2 \theta} (4 T / \delta) + \zeta^2) + \frac{\eta^3 c^2 \tau_C^2 L^2}{2} + \frac{\eta^2 c^2 L}{2} \\
			                                  & \leq - \frac{\eta c}{4} \|\nabla f(x_t)\| + 2 \frac{\eta}{c} (\sigma^2 \log^{2 \theta} (4 T / \delta) + \zeta^2) \|\nabla f(x_t)\| + \frac{\eta^3 c^2 \tau_C^2 L^2}{2} + \frac{\eta^2 c^2 L}{2}                   \\
                                              & \leq - \frac{\eta c}{4} \left( 1 - \frac{8(\sigma^2 \log^{2 \theta} (4 T / \delta) + \zeta^2)}{c^2} \right) \|\nabla f(x_t)\| + \frac{\eta^3 c^2 \tau_C^2 L^2}{2} + \frac{\eta^2 c^2 L}{2},
		\end{aligned}
	\]
    whereas if $\|\nabla f(x_t)\| \geq c$,
	\[
		\begin{aligned}
			f(\tilde x_{t+1}) - f(\tilde x_t) & \leq - \frac{\eta \alpha_t}{2} \|\nabla f(x_t)\|^2 + \frac{\eta}{\alpha_t} (\sigma^2 \log^{2 \theta} (2 / \delta) + \zeta^2) + \frac{\eta^3 c^2 \tau_C^2 L^2}{2} + \frac{\eta^2 c^2 L}{2} \\
			                                  & \leq - \frac{\eta c}{2} \|\nabla f(x_t)\| + \frac{\eta}{c} (\sigma^2 \log^{2 \theta} (2 / \delta) + \zeta^2) \|\nabla f(x_t)\| + \frac{\eta^3 c^2 \tau_C^2 L^2}{2} + \frac{\eta^2 c^2 L}{2}                   \\
			                                  & \leq - \frac{\eta c}{2} \left( 1 - \frac{2(\sigma^2 \log^{2 \theta} (2 / \delta) + \zeta^2)}{c^2} \right)  \|\nabla f(x_t)\| + \frac{\eta^3 c^2 \tau_C^2 L^2}{2} + \frac{\eta^2 c^2 L}{2}.
		\end{aligned}
	\]
    Hence, choosing $c \geq 4 \sqrt{\sigma^2 \log^{2\theta} (2/\delta) +
    \zeta^2}$, we have
	\[
        \frac{1}{8T} \sum_{t \in \mathcal{T}^c} c \|\nabla f(x_t)\| \leq \sum_{t \in \mathcal{T}^c} \left( \frac{f(\tilde x_{t+1}) - f(\tilde x_{t})}{\eta} + \frac{\eta^2 c^2 \tau_C^2 L^2}{2} + \frac{\eta c^2 L}{2} \right) 
	\]
	with probability at least $1 - \delta / 2$. 

    \paragraph{Choosing the parameters.} Putting the two cases together, we
    have that
    \[
        \begin{aligned}
            \frac{1}{8T} \left( \sum_{t\in\mathcal{T}} \|\nabla f(x_t)\|^2 + \sum_{t\in\mathcal{T}^c} c\|\nabla f(x_t)\|\right) &\leq \frac{1}{T} \sum_{t=0}^{T-1} \frac{f(\tilde x_{t}) - f(\tilde x_{t+1})}{\eta} + \frac{3\eta^2 c^2 \tau_C^2 L^2}{4} + \frac{\eta c^2 L}{2} \\
                                   &\quad + (4 \sigma^2 \Gamma(2 \theta + 1) + 2 \zeta^2) \exp \left( - \left( \frac{c - 2 \zeta}{4 \sigma} \right)^\frac{1}{\theta} \right) + 4 c^2 \log \frac{2}{\delta} 
        \end{aligned}
    \]
    with probability  atleast $1 - \delta$. Using the same technique as in the
    proof of Theorem~\ref{thm:expectation}, we have that 
    \[
        \frac{1}{T} \sum_{t=0}^{T-1} \|\nabla f(x_t)\| = O \left( \frac{\Delta}{\eta c T} + \sqrt{\frac{\Delta}{\eta T}} + \sqrt{\eta c^2 L} + \eta c \tau_C L + \sqrt{\sigma^2 + \zeta^2} \exp \left( -\frac{1}{2} \left( \frac{c - 2 \zeta}{4 \sigma} \right)^\frac{1}{\theta} \right) + \sqrt{\frac{c^2 \log(1 / \delta)}{T}} \right)
	\]
	with probability at least $1 - \delta$. Then if
	\[
		c = \max \left\{ 4 \sqrt{\sigma^2 \log^{2\theta} (2/\delta) + \zeta^2}, 4 \sigma \log^\theta T + 2 \zeta \right\}
	\]
	and assuming $T$ is large enough that $c \geq 1$, we have
	\[
		\frac{1}{T} \sum_{t=0}^{T-1} \|\nabla f(x_t)\| = \widetilde{O} \left( \sqrt{\frac{\Delta}{\eta T}} + \left( \sqrt{\eta (\sigma^2 + \zeta^2) L} + \eta (\sigma + \zeta) \tau_C L \right) \log^{\theta}(T/\delta) + \sqrt{\frac{\sigma^2 \log^{2 \theta + 1}(1 / \delta) + \zeta^2}{T}} \right)
	\]
	with probability at least $1 - \delta$. Then choosing
	\[
		\eta = \min \left\{ \frac{1}{4 \tau_C L}, \left( \frac{\Delta}{(\sigma^2 + \zeta^2) \log^{2 \theta}(T / \delta) L T} \right)^{1/2}, \left( \frac{\Delta}{(\sigma^2 + \zeta^2) \tau_C^2 L^2 T} \right)^{1/3} \right\},
	\]
	we have
	\[
        \begin{aligned}
            \frac{1}{T} &\sum_{t=0}^{T-1} \|\nabla f(x_t)\| \\
                        &= \widetilde{O} \left( \left( \frac{(\sigma^2 + \zeta^2) \log^{2 \theta}(T/\delta) L\Delta}{T} \right)^{1/4} + \left( \frac{(\sigma + \zeta) \tau_C \log^\theta(T/\delta) L \Delta}{T} \right)^{1/3} + \left( \frac{\tau_C L \Delta}{T} \right)^{1/2} + \left(\frac{\sigma^2 \log^{2 \theta + 1}(1 / \delta) + \zeta^2}{T} \right)^{1/2} \right)
        \end{aligned}
	\]
    with the same probability. Then, for $\epsilon \in (0,1)$, it holds that
    $\prob (\frac{1}{T} \sum_{t=0}^{T-1} \|\nabla f(x_t)\| \leq \epsilon) \geq
    1 - \delta$ after
	\[
        \widetilde{O} \left( \frac{(\sigma^2 + \zeta^2) \log^{2 \theta}(1/\delta) L \Delta}{\epsilon^4} + \frac{(\sigma + \zeta) \tau_C \log^\theta (1/\delta) L \Delta}{\epsilon^{3}} + \frac{\tau_C L \Delta}{\epsilon^2} + \frac{\sigma^2\log^{2 \theta + 1} (1/\delta) + \zeta^2}{\epsilon^2} \right)
	\]
	iterations.

\end{proof}
\end{document}